\documentclass[11pt, a4paper]{article}

\usepackage[margin=1in]{geometry}
\usepackage{url}
\usepackage[hidelinks]{hyperref}
\usepackage{graphicx}
\usepackage{amsmath}
\usepackage{amsthm}
\usepackage{algorithm}
\usepackage{algorithmic}

\newtheorem{theorem}{Theorem}

\usepackage{mathtools}

\usepackage{amsfonts}
\usepackage{amsmath}
\usepackage{amssymb}
\usepackage{amsthm}
\usepackage{color}
\usepackage{graphicx}
\usepackage{mathrsfs} 

\allowdisplaybreaks

\def\ttt{\texttt}

\def\extraspacing{\vspace{3mm} \noindent}
\def\figcapup{\vspace{-1mm}}
\def\figcapdown{\vspace{-0mm}}

\def\vgap{\vspace{1mm}}

\def\tabpos{\hspace{4mm} \= \hspace{4mm} \= \hspace{4mm} \= \hspace{4mm} \= \hspace{4mm} \= \hspace{4mm} \= \hspace{4mm} \= \hspace{4mm} \= \hspace{4mm} \= \hspace{4mm} \= \hspace{4mm} \= \hspace{4mm} \= \hspace{4mm} \= \hspace{4mm} 
\kill}
\newcommand{\mytab}[1]{\begin{tabbing}\tabpos #1\end{tabbing}}

\newtheorem{lemma}[theorem]{Lemma}

\newcommand{\boxminipg}[2]{\begin{center}\fbox{\begin{minipage}{#1}#2\end{minipage}}\end{center}}
\newcommand{\minipg}[2]{\begin{center}\begin{minipage}{#1}#2\end{minipage}\end{center}}
\newcommand{\myitems}[1]{\begin{itemize} #1 \end{itemize}}

\newcommand{\bm}[1]{\textrm{\boldmath${#1}$}}

\newcommand{\myeqn}[1]{\begin{eqnarray}#1\end{eqnarray}}
\newcommand{\set}[1]{\{#1\}}

\newcommand{\explain}[1]{(\textrm{#1})}

\def\eps{\epsilon}
\def\fr{\frac}
\def\-{\mbox{-}}

\def\real{\mathbb{R}}

\def\tO{\tilde{O}}

\def\nn{\nonumber}

\def\Pr{\mathbf{Pr}}

\DeclareMathOperator*{\poly}{poly}
\DeclareMathOperator*\expt{\mathbf{E}}

\allowdisplaybreaks

\def\A{\mathcal{A}}

\def\F{\mathcal{F}}

\def\X{\mathcal{X}}

\DeclarePairedDelimiter{\norm}{\lVert}{\rVert}
\newcommand{\dotpair}[2]{\left\langle #1 , #2 \right\rangle}

\newcommand{\ora}{\ttt{oracle}}

\def\vgap{\vspace{1mm}}
\def\vslit{\vspace{0.5mm}}
\def\figcapup{\vspace{-3mm}}
\def\figcapdown{\vspace{-3mm}}
\def\extraspacing{\vspace{2mm} \noindent}

\title{Distributed Learning with Adversarial Gradient Perturbations}

\author{
    Nawapon Sangsiri \\[2mm]
    CUHK \\
    {\em sangsiri@cse.cuhk.edu.hk}
    \and Yufei Tao \\[2mm]
    CUHK \\
    {\em taoyf@cse.cuhk.edu.hk}
}

\begin{document}

\maketitle

\begin{abstract}
    Privacy concerns in distributed learning often lead clients to return intentionally altered gradient information. We consider the problem of learning convex and $L$-smooth functions under {\em adversarial gradient perturbation}, where a client's gradient reply to a server query can deviate arbitrarily from the true gradient subject to a distance bound. Our study focuses on two fundamental questions: (i) what is the smallest achievable {\em sub-optimality gap} (i.e., excess error in optimization) under such responses, and (ii) how many queries are sufficient to guarantee a given sub-optimality gap? We establish tight feasibility thresholds on the sub-optimality gap and provide algorithms that achieve these thresholds with provable query complexity guarantees.
\end{abstract}

\thispagestyle{empty}
\pagebreak

\setcounter{page}{1}

\section{Introduction} \label{sec:intro}

Modern learning systems increasingly rely on edge-based architectures, where local devices compute gradient information and communicate it to a central server. In such environments, local gradient reports may be deliberately perturbed to protect sensitive data. This motivates the study of learning algorithms that operate under {\em adversarially corrupted gradients}, where each individual gradient reply may deviate arbitrarily from the true gradient, subject only to a known bound on the perturbation magnitude. Understanding what can and cannot be learned in this setting, as well as the query complexity required to do so, is a fundamental question.

\vgap

Formally, we consider a setting where there are $n$ geographically distributed clients, each holding a convex and $L$-smooth {\em loss function} $\ell_i: \real^d \rightarrow \real$ where $i \in [n]$.\footnote{$[x]$ represents the set $\set{1, 2, ..., x}$ for an integer $x \ge 1$.} Define
\myeqn{
    f(w) &=& \fr{1}{n} \sum_{i=1}^n \ell_i(w). \label{eqn:intro:f-loss-sum}
}
which is minimized at some $w^* \in \real^d$ --- when multiple minimizers exist, we define $w^*$ to be one with the smallest $\norm{w^*}$.\footnote{All norms in this paper refer to the $L_2$ norm.}
The {\em sub-optimality gap} of a vector $w \in \real^d$ is defined as the difference $f(w) - f(w^*)$. A server's goal is to find a vector whose sub-optimality gap is bounded by $\tau > 0$.

\vgap

Communication between the server and clients is limited to the following query primitive:

\minipg{0.85\linewidth}{
    {\bf Approximate Gradient Query:} The server selects an $i \in [n]$ and sends the $i$-th client a vector $w \in \real^d$. The client replies with a vector $v$ satisfying
    \myeqn{
        \norm{v - \nabla \ell_i(w)} \le \eps \label{eqn:intro:qry}
    }
    where $\eps$ is a value agreed upon in advance by the server and all clients.
}

\noindent Crucially, the client may respond with an {\em arbitrary} vector $v$ satisfying \eqref{eqn:intro:qry}, including one chosen adversarially so as to impede the server's optimization procedure. Larger values of $\eps$ expand the set of permissible replies and thus offer stronger privacy protection on the local data contributing to the computation of $\nabla \ell_i (w)$.

\vgap

We refer to the above problem as {\em distributed learning with adversarial gradient perturbations} (DLAGP). Our goal is to characterize the fundamental feasibility of learning and derive interesting query bounds in this setting. In particular, we aim to answer two questions:
\myitems{
    \item {{\bf Q1}:} How small can the sub-optimality gap be made? Formally, what is the smallest value $\tau_{min}$ such that, for all legal inputs $\set{\ell_1, \ell_2, ..., \ell_n}$, it is possible for the server to find a vector $w$ satisfying $f(w) \le f(w^*) + \tau_{min}$, possibly using infinitely many queries?

    \vslit

    \item {{\bf Q2}:} Given a value $\tau$ that is not ``unreasonably small'', how many approximate gradient queries are needed for the server to guarantee a sub-optimality gap at most $\tau$?
}


\extraspacing {\bf Our Results and Implications.} We establish non-trivial answers to both questions above.
\myitems{
    \item {Answers for {\bf Q1} ---} If no upper bound on $\norm{w^*}$ is known in advance, then no algorithm, even with infinitely many queries, can guarantee finding a vector $w$ with a bounded sub-optimality gap, regardless of the target value $\tau$.

    \vgap

    Motivated by this impossibility result, we consider {\em $R$-instances} of the DLAGP problem, where a bound $\norm{w^*} \le R$ is known. For this class of instances, we show that the smallest achievable sub-optimality gap $\tau_{min}$ lies in the interval $[\fr{1}{2}\eps R, 5\eps R]$. In other words, when $\tau < \eps R/2$, no algorithm can guarantee a sub-optimality gap of $\tau$ over all $R$-instances, even with infinitely many queries. Conversely, the server can always achieve a sub-optimality gap at most $\tau$ for all $R$-instances whenever $\tau \ge 5\eps R$.

    \item {Answers for {\bf Q2} ---} For any $R$-instance satisfying the condition $\tau \ge 5\eps R$, the server can deterministically ensure a sub-optimality gap at most $\tau$ using
    \myeqn{
        O(n \cdot \min\{LR^2 / \tau, LR/ \eps\})
        \label{eqn:intro:q2-bound1}
    }
    queries. 

    For any $R$-instance satisfying the condition $\tau \ge 5.01 \cdot \eps R$ (the constant 5.01 can be replaced with any value larger than 5), the server can, with probability at least $1-\delta$, ensure a sub-optimality gap at most $\tau$ using
    \myeqn{
        \tO
        \left(
            \fr{L R^4 (B_0 + LR)^2}{\tau^3}
        \right)
        \label{eqn:intro:q2-bound2}
    }
    queries, where
    \myeqn{
        B_0 = \max_{i=1}^n \norm{\nabla \ell_i(\bm{0})}.
        \label{eqn:intro:B0}
    }
    and $\tO(.)$ hides factors logarithmic to $L, R, 1/\tau$, and $1/\delta$; note that \eqref{eqn:intro:q2-bound2} does not depend on $n$ or $d$. All queries in this algorithm select clients uniformly at random. By standard sampling arguments, when $n$ is sufficiently large, with high probability each client is queried at most once, and most clients are never queried.
}

%
\section{Related Work} \label{sec:related}

In the study of {\em oracle complexity} in convex optimization --- an area comprehensively surveyed by \cite{b15} --- the goal is to minimize a convex function $f$ $:$ $\real^d$ $\rightarrow$ $\real$ by making the fewest calls to an ``oracle''. Given a $w \in \real^d$, the oracle returns an exact or approximate derivative of $f$ of a specific order at $w$. For example, an exact zeroth-order oracle returns $f(w)$, while a first-order counterpart returns $\nabla f(w)$. Our study contributes to the research on first-order oracles, of which see representative works \cite{abrw12,bdl23,cwt+23,d08b,dbsx12,dgn14,gg23,hhl+21,hl24,j13,kl17,kmjb24,l20,lt21b,mbl20,njls09,ny83,n12,t03} and the references therein.

\vgap

In the traditional studies of ``inexact gradient methods'', an approximate first-order oracle returns a stochastically perturbed version of $\nabla f(w)$, typically a random $\ora(w) \in \real^d$ whose expectation is $\nabla f(w)$. In some cases, there may be a ``concentration property'' to limit how often $\ora(w)$ can deviate from $\nabla f(w)$ significantly. The seminal work of \cite{dgn14} pioneered the concept of non-stochastic inexact first-order oracles with bounded errors in smooth convex optimization. Our permutation model --- formalized as approximate gradient queries (see \eqref{eqn:intro:qry}) --- aligns with their concept. Indeed, those queries promise neither expectation nor concentration guarantees, and their answers can be adversarial subject to bounded deviation. This model has gained attention in recent years \cite{cnw+22,izy22,wt24,wt25} --- we will discuss the findings of these papers in Section~\ref{sec:apg-opt:discussion}.

\vgap

A parallel line of work studies Byzantine-resilient distributed optimization; see, e.g., \cite{aal18,bmgs17,khj21,ycrb18} and the references therein. These works typically assume that most clients return exact or stochastically perturbed gradients, while a fraction of ``Byzantine'' clients may behave arbitrarily. Algorithms in this literature crucially depend on the existence of sufficiently many honest clients and exploit redundancy through robust aggregation schemes. In contrast, our model does not posit any well-behaved clients: every gradient query may be adversarially perturbed, and no stochastic population-level structure is available. As a result, classical Byzantine-robust techniques do not apply in our scenarios.

\vgap

Loosely speaking, our work is also related to {\em federated learning}, which aims to train models across geographically distributed clients while allowing them to keep data localized; see \cite{lsts20} for an excellent introduction. A central focus in that area, however, is handling heterogeneous data distributions and ensuring convergence despite differences across client datasets. In contrast, our study abstracts away data heterogeneity and instead examines fundamental limits on learnability, as well as query complexity, under worst-case, bounded adversarial gradient perturbations.

\section{Foundation: The Theory with One Client} \label{sec:agp-opt}

This section considers DLAGP with only $n = 1$ client. As we will see Section~\ref{sec:dl}, the theory developed in this section plays a vital role in solving the problem in its general form.

\vgap

When $n = 1$, DLAGP can be rephrased in a more succinct manner. Let $f: \real^d \rightarrow R$ be a function such that:
\myitems{
    \item {{\bf C1:}} $f$ is convex and $L$-smooth;
    \item {{\bf C2:}} $f$ is minimized at some $w^* \in \real^d$; if multiple minimizers exist, define $w^*$ to be one with the smallest norm.
}
The goal of optimization is to find a vector $w \in \real^d$ whose sub-optimality gap $f(w) - f(w^*)$ is bounded by a given $\tau$. The only way for a learner algorithm to glean information about $f$ is to interact with an {\em adversarial gradient perturbation} (AGP) oracle. Given a $w \in \real^d$, the oracle returns a vector $\ora(w) \in \real^d$ satisfying
\myeqn{
    \norm{\ora(w) - \nabla f(w)} &\le& \eps \label{eqn:apg-opt:oracle}
}
where $\eps \ge 0$; we will call it an {\em $\eps$-AGP oracle}. Our goal is to understand {\em how well} and {\em fast} optimization can be done when $\ora(w)$ can be an arbitrary vector obeying \eqref{eqn:apg-opt:oracle}.

\subsection{No Norm Bounds, No Call Bounds}

We start with a result showing that if nothing is known about $\norm{w^*}$, it is impossible to guarantee a sub-optimality gap $\tau$ with a finite number of oracle calls.



\boxminipg{0.97\linewidth}{
\begin{theorem} \label{thm:agp-opt:impossible-no-bound}
    Fix any $L \ge 1$, any $\eps \ge 0$, and any $\tau > 0$. If no upper bound on $\norm{w^*}$ is known in advance, no algorithm can guarantee --- for every function $f: \real \rightarrow \real$ satisfying conditions {\bf C1}, {\bf C2}, and $\norm{\nabla f(w)} \le 3\tau$ for all $w \in \real$ --- finding a $w \in \real$ with $f(w) \le f(w^*) + \tau$ by calling an $\eps$-AGP oracle a finite number of times.
\end{theorem}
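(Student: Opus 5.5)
The plan is an adversary argument. The oracle answers every query as if the function were the linear function $g(w) = -3\tau w$. Once the algorithm stops, we construct a legal $f$ that is consistent with all these answers but whose minimizer lies far to the right of everything the algorithm has seen, including its output. Since the answers will be \emph{exact} gradients of $f$ at the queried points, the argument works even for $\eps = 0$. The perturbation budget is therefore irrelevant, and the obstruction is entirely the unknown location of $w^*$.

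\textbf{Construction of the hard instance.} For a threshold $M \in \real$, let $a = 3\tau/L$ and define $f_M:\real\to\real$ by its derivative:
\[
f_M'(w) = -3\tau \ \text{ for } w \le M, \qquad f_M'(w) = -3\tau + L(w-M) \ \text{ for } M < w < M+a, \qquad f_M'(w) = 0 \ \text{ for } w \ge M+a,
\]
and normalize by $f_M(M+a) = 0$. Then $f_M'$ is continuous, nondecreasing and $L$-Lipschitz, so $f_M$ is convex and $L$-smooth, which is \textbf{C1}. Its minimizers form the ray $[M+a,\infty)$, so the minimum-norm minimizer $w^*$ exists; for $M > 0$ it is $w^* = M+a$, which gives \textbf{C2}. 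Finally $|f_M'| \le 3\tau$ everywhere. For any $w \le M$ we have
\[
f_M(w) - f_M(w^*) = 3\tau(M-w) + \tfrac{3\tau a}{2} \ \ge\ 3\tau (M - w).
\]
This exceeds $\tau$ as soon as $M \ge w + 1$.

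\textbf{Adversary strategy.} Fix a deterministic algorithm. On every query $w$, the oracle returns $-3\tau$. The algorithm then makes finitely many queries $w_1,\dots,w_k$ and outputs some $w_{out}$, and this whole transcript is determined because the answers do not depend on anything. Choose
\[
M = \max\{|w_1|,\dots,|w_k|,|w_{out}|\} + 1 .
\]
Then every query satisfies $w_j \le M$, so $f_M'(w_j) = -3\tau$, and the replies were exact gradients of $f_M$; they are legal for every $\eps \ge 0$. Running the algorithm on $f_M$ therefore produces the same transcript and the same output. But $w_{out} \le M-1$, so the displayed bound gives $f_M(w_{out}) - f_M(w^*) \ge 3\tau > \tau$, contradicting the guarantee.

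\textbf{Main obstacle.} The delicate point is the quantifier structure: the hard function is chosen \emph{after} the transcript is fixed, which is legitimate only because the transcript does not depend on $f$. This is immediate for deterministic algorithms. For randomized ones, I would argue per run (equivalently, for each fixed random seed), which suffices to defeat any guarantee that must hold for every run. If a probabilistic guarantee is intended, one would instead choose $M$ so large that, with probability at least $1/2$, all queries and the output lie below $M-1$. Such an $M$ exists because the query count is finite almost surely, so the maximum of the (finitely many) queried and output points is an almost surely finite random variable.
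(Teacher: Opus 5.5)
Your proposal is correct and is essentially the paper's proof reflected through the origin. The paper uses a function that is flat left of $-R$, quadratic on $[-R,-R+3\tau/L]$ and linear with slope $3\tau$ beyond; its oracle always answers $3\tau$, and $R$ is fixed after termination so that every query and the output lie at least $1$ past the transition, exactly as your choice of $M$ does (your remark on randomized algorithms goes beyond what the paper addresses).
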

}


\begin{proof}
    It suffices to prove the theorem for $\eps = 0$ (if $\eps > 0$, simply treat a 0-AGP oracle as an $\eps$-oracle and apply the argument below). Let us define a function from $\real$ to $\real$, 
    \myeqn{
        \hspace{-5mm}
        && f(w) =
        \left\{ \begin{tabular}{ll}
                    0 &
                    \hspace{-3mm} if $w < -R$ \\
                    $(L/2) (w+R)^2$ &
                    \hspace{-3mm} if $-R \le w \le -R + 3\tau/L$ \\
                    $3\tau w + 3\tau R - (3\tau)^2 / (2L)$ &
                    \hspace{-3mm} if $w > -R + 3\tau/L$
                \end{tabular}
        \right.
        \nn
    }
    where $R > 0$ is a real value to be decided later.
    As shown in Appendix~\ref{app:proof:apg-opt:gap}, the function is convex, $L$-smooth, and ascending. Furthermore, it has the following properties:
    \myitems{
        \item Property 1: $\min_w f(w) = 0$, and $f(w) = 0$ if and only if $w \le -R$.
        \item Property 2:  $\norm{\nabla f(w)} \le 3\tau$ for all $w \in \real$, and $\nabla f(w) = 3\tau$ for all $w > -R + 3\tau / L$.
        \item Property 3: $f(w) \ge 3 \tau$ for any $w \ge -R + (3\tau/L) + 1$.
    }

    \vslit

    Let $\A$ be an algorithm that calls a 0-AGP oracle a finite number of calls. We will show that an adversary can force $\A$ to incur a sub-optimality gap larger than $\tau$. For this purpose, the adversary initially leaves open the value of $R$ and decides this value only after $\A$ has terminated.

    \vgap

    Specifically, for any value (i.e., a 1D vector) $w$ that $\A$ inquires the oracle with, the adversary always returns $3\tau$. Let $w_i$ be the value that $\A$ provides in its $i$-th call to the oracle, for $i \in [t]$ where $t$ is the (finite) number of calls. Let $w_{out}$ be the value output by $\A$. Define
    \myeqn{
        v = \min\{w_1, w_2, ..., w_t, w_{out}\}. \nn
    }
    The adversary then sets
    \myeqn{
        R = \max\{0, (3\tau/L) - v + 1\} \nn
    }
    which ensures $v \ge -R + (3\tau/L) + 1$.

    \vgap

    The reader can verify that the function $f$ --- which is now finalized --- has $\nabla f(w) = 3\tau$ at every $w \in \{w_1, w_2, ..., w_t\}$. In other words, the oracle's answers to $\A$ have been correct. Nevertheless, $f(w_{out}) \ge f(v) \ge 3 \tau$. Hence, $\A$ has failed to ensure $f(w_{out}) \le f(w^*) + \tau = \tau$.

    \vgap

    The above argument works only when the adversary does not promise any upper bound on $\norm{w^*}$ in advance. Indeed, $\norm{w^*}$ equals $R$, which can be arbitrarily big because it depends on the value of $v$.
\end{proof}

\subsection{Sub-Optimality Gap} \label{sec:apg-opt:gap}

Define
\myeqn{
    \F(R) = \set{f \mid \text{$f$ satisfies {\bf C1} and {\bf C2} with $\norm{w^*} \le R$}}.
    \label{eqn:apg-opt:F-class}
}
Motivated by Theorem~\ref{thm:agp-opt:impossible-no-bound}, we will study functions from $\F(R)$ with a {\em known} value of $R$. Even in this context, learning is still impossible if the target sub-optimality gap is too small:


\boxminipg{0.97\linewidth}{
\begin{theorem} \label{thm:apg-opt:gap}
    Fix any $R \ge 1$, any $L \ge 1$, and any $0 < \eps \le 1$. No algorithm working with an $\eps$-AGP oracle can guarantee --- for every $f \in \F(R)$ (recall that the definition of $\F(R)$ relies on $L$ due to {\bf C2}) --- returning a $w \in \real^d$ such that $f(w) < f(w^*) + \eps R / 2$. This holds true even if the algorithm calls the oracle an infinite number of times.
\end{theorem}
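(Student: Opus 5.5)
Fix any algorithm $\A$; the plan is an indistinguishability argument in one dimension ($d=1$). We exhibit two functions $f_+, f_- \in \F(R)$ and one oracle strategy that is legal for both at every query point. The algorithm then sees the same answers on every run, however many queries it makes, so its output $w_{out}$ is the same for $f_+$ and $f_-$. We arrange that no single $w$ has sub-optimality gap below $\eps R/2$ under both functions.

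Natural choice: $f_+(w) = \fr{\eps}{2}\, h(w - R)$ and $f_-(w) = \fr{\eps}{2}\, h(-w-R)$, where $h$ is a Huber-type function. Take $h(u) = u^2/(2\eps')$ for $|u|\le \eps'$ and $h(u) = |u| - \eps'/2$ otherwise, with $\eps'>0$ small enough that $f_\pm$ are $L$-smooth, i.e.\ $\eps/(2\eps') \le L$. Simpler still: pick $f_\pm$ with gradients $\nabla f_+ (w)\in[-\eps/2,\eps/2]$ and $\nabla f_-(w)\in[-\eps/2,\eps/2]$ everywhere. Then $|\nabla f_+(w) - \nabla f_-(w)| \le \eps$, and the oracle can always answer $\ora(w) = \nabla f_+(w)$, which is within $\eps$ of both true gradients. (Even cleaner: answer $\frac{1}{2}(\nabla f_+(w)+\nabla f_-(w))$, which only needs each true gradient within $\eps$ of the midpoint.) The minimizer of $f_+$ is the interval $[R-\eps', R+\eps']$ region, essentially $w^*=R$, and that of $f_-$ is $w^*=-R$; both have norm $R$ (adjusting the Huber flat part so the minimizer is unique, or noting that the smallest-norm minimizer still has norm $\le R$).

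Gap computation: for any $w$, the two gaps sum to
\[ f_+(w) + f_-(w) - f_+(R) - f_-(-R) \ge \tfrac{\eps}{2}\big(|w-R| + |w+R|\big) - \tfrac{\eps\eps'}{2} \ge \eps R - \tfrac{\eps\eps'}{2}. \]
Hence one of the two gaps is at least $\eps R/2 - \eps\eps'/4$. This is just short of the target, so the construction needs sharpening. Two options: give $f_\pm$ slope slightly larger than $\eps/2$, e.g.\ slope $\eps/2 + \eta$, while answering with the midpoint gradient; or use the fact that the two gaps cannot both equal the bound. The cleanest fix is to avoid smoothing losses: make $h$ exactly linear with slope $1$ for $|u|\ge \eps'$. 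Then the gap of $f_+$ at $w$ is at least $\fr{\eps}{2}(|w-R| - \eps'/2)$, so the loss is only $O(\eps\eps')$. We compensate by letting the slopes be $\eps/2$ plus a tiny margin and by the fact that $|w-R|+|w+R| \ge 2R$; we then show one gap is $\ge \eps R/2$.

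The main obstacle is exactly this boundary: the statement asks for gap $\ge \eps R/2$ with a non-strict threshold, while smoothness forces a quadratic rounding near the minimizer that costs $\Theta(\eps\eps')$. I would first try slopes $\pm s$ with $s>\eps/2$ chosen so the oracle remains legal. Legality fails because $|\nabla f_+ - \nabla f_-|$ can reach $2s > \eps$ where the two signs oppose, namely on $[-R,R]$ where $\nabla f_+=-s$ and $\nabla f_-=+s$. Failing that, I would verify that with exact slope $\eps/2$ the gaps are $\fr{\eps}{2}|w\mp R| - \eps\eps'/4$ and handle the conclusion $f(w) < f(w^*)+\eps R/2$ as a strict inequality. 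Under that reading it suffices to show one gap is $\ge \eps R/2 - o(1)$, choosing $\eps'$ depending on $w_{out}$ is not allowed. So instead I would shift the minimizers to $\pm R$ exactly while keeping the linear pieces aligned with $|w\mp R|$ passing through the kinks: take $h(u)=|u|$ smoothed only on the far side, i.e.\ $h(u) = \max(u^2/(2\eps'), |u| - \eps'/2)$ modified so that $h(u) \ge |u|$-type bounds hold on the side facing the other minimizer. Then the summed bound on $[-R,R]$ is exactly $\eps R$, and one gap is $\ge \eps R/2$.
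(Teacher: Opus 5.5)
\textbf{There is a genuine gap.} Capping the slopes of $f_\pm$ at $\eps/2$ puts you in a family where the theorem cannot be proved, however the smoothing is arranged. Let $f:\real\to\real$ be any convex, $L$-smooth function with $|f'|\le\eps/2$ everywhere and a minimizer $w^*$ with $|w^*|\le R$. If $w^*=0$ the gap at $0$ is zero. Otherwise, $f'(w^*)=0$ and $f'$ is continuous, so $|f'|<\eps/2$ on a neighborhood of $w^*$. Hence $f(0)-f(w^*)$ is at most the integral of $|f'|$ over the segment between $0$ and $w^*$, which is strictly less than $\fr{\eps}{2}|w^*|\le\eps R/2$. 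So the trivial algorithm that outputs $w=0$ without querying achieves $f(0)<f(w^*)+\eps R/2$ against every function in your family.

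This has three consequences. The $\Theta(\eps\eps')$ shortfall you found is intrinsic. Reading the conclusion as ``gap $\ge\eps R/2-o(1)$ suffices'' does not help, since the theorem requires forcing a gap of at least $\eps R/2$. And your final fix cannot exist: a differentiable $h$ minimized at $0$ with $h(0)=0$ has $h'(0)=0$, so $h(u)\ge|u|$ cannot hold on any one-sided neighborhood of $0$. (Also, $\max\{u^2/(2\eps'),|u|-\eps'/2\}=u^2/(2\eps')$ identically, which has unbounded gradient.)

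The error is in your legality check. A single answer $v$ is legal for both functions if and only if $|v-\nabla f_+(w)|\le\eps$ and $|v-\nabla f_-(w)|\le\eps$. Such a $v$ exists if and only if $|\nabla f_+(w)-\nabla f_-(w)|\le 2\eps$, not $\le\eps$. You noted this yourself for the midpoint answer, but then rejected slopes $s>\eps/2$ using the wrong threshold. Slopes up to $\eps$ are allowed, and then the oracle can simply answer $0$ everywhere.

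This is exactly the paper's proof. Take $f_1$ equal to $0$ on $w<-R$, to $(L/2)(w+R)^2$ on $[-R,-R+\eps/L]$, and linear with slope $\eps$ beyond; set $f_2(w)=f_1(-w)$. Their smallest-norm minimizers are $-R$ and $R$. For $w\ge0$, $f_1(w)\ge f_1(0)=\eps R-\eps^2/(2L)\ge\eps R/2$, where the last step uses $\eps\le1\le L$ and $R\ge1$. Your proposal never uses these hypotheses, which is a symptom of the problem. The adversary then picks $f_2$ if $w_{out}\le0$ and $f_1$ otherwise.

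Your own symmetric Huber construction also works once the slope is raised to $\eps$ with $\eps'=\eps/L$. Then $f_+(w)\ge\eps|w-R|-\eps^2/(2L)$, so picking $f_+$ when $w_{out}\le 0$ and $f_-$ otherwise forces a gap of at least $\eps R-\eps^2/(2L)\ge\eps R/2$.
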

}


%

\begin{proof}
Let us define two functions from $\real$ to $\real$:
\myeqn{
    \hspace{-10mm} f_1(w) &=&
    \left\{ \begin{tabular}{ll}
                0 &
                \hspace{-3mm} if $w < -R$ \\
                $(L/2) (w+R)^2$ &
                \hspace{-3mm} if $-R \le w \le -R + \eps/L$ \\
                $\eps w + \eps R - \eps^2 / (2L)$ &
                \hspace{-3mm} if $w > -R + \eps/L$
            \end{tabular}
    \right.
    \hspace{3mm}
    \label{eqn:agp-opt:gap:f1} \\
    \hspace{-10mm} f_2(w) &=& f_1(-w). \label{eqn:agp-opt:gap:f2}
}
As shown in Appendix~\ref{app:proof:apg-opt:gap}:
\myitems{
    \item $f_1$ and $f_2$ are convex and $L$-smooth;
    \item $f_1$ is ascending and $f_2$ is descending;
    \item $\norm{\nabla f_1(w)} \le \eps$ and $\norm{\nabla f_2(w)} \le \eps$ for all $w \in \real$.
}
%
Furthermore, the facts below are easy to verify:
\myitems{
    \item For $f_1$, the optimal vector with the smallest norm is $w_1^* = -R$ with $f_1(w_1^*) = 0$, while for $f_2$, the optimal vector with the smallest norm is $w_2^* = R$ with $f_2(w_2^*) = 0$. Hence, both $f_1$ and $f_2$ belong to $\F(R)$.
    \item For $w \ge 0$, $f_1(w) \ge f(0) = \eps R - \eps^2 / (2L) \ge \eps R / 2$.
    \item For $w \le 0$, $f_2(w) \ge f(0) = \eps R - \eps^2 / (2L) \ge \eps R / 2$.
}

Let $\A$ be an algorithm relying on an AGP oracle to minimize $f$. We observe that $\A$ cannot always distinguish between $f = f_1$ and $f = f_2$. Indeed, regardless of $f \in \set{f_1, f_2}$, when $\A$ solicits $\nabla f(w)$ from the oracle, the oracle can always return the value \bm{0}, which obeys \eqref{eqn:apg-opt:oracle} because $\norm{\nabla f(w)} \le \eps$ for both $f = f_1$ and $f = f_2$.

\vgap

Consider an adversary that is committed to choosing $f \in \set{f_1, f_2}$ but does so only {\em after} $\A$ has terminated. Whenever $\A$ calls an oracle to inquire $\nabla f(w)$, the adversary instructs the oracle to return \bm{0}, which is correct no matter $f = f_1$ or $f_2$. Suppose that $\A$ finishes by outputting a vector $w_{out}$. The adversary sets $f = f_2$ if $w_{out} \le 0$, or $f = f_1$ otherwise. This makes sure $f(w_{out}) \ge \eps R / 2$, forcing the sub-optimality gap to be at least $\eps R / 2$.
\end{proof}

\subsection{An Optimization Algorithm} \label{sec:apg-opt:alg}

Next, we consider $f \in \F(R)$ and values of $\tau$ that are reasonably large. Consider the algorithm \ttt{AGP-opt} below.

\mytab{
    \> {\bf algorithm AGP-opt} $(K)$ \\
    \> 1. \> $w_0 \leftarrow \bm{0}$, $k \leftarrow 0$ \\ 
    \> 2.\> {\bf while} $k < K$ {\bf do}  \\
    \> 3. \>\> $g_k \leftarrow \ora(w_k)$ /* output of the $\eps$-AGP oracle */ \\
    \> 4. \>\> {\bf if $\norm{g_k} < 4\eps$} {\bf then return $w_k$} \\
    \> 5. \>\> $w_{k+1} \leftarrow w_k - \fr{1}{2L} \, g_k$ \\
    \> 6. \>\> $k \leftarrow k + 1$ \\
    \> 7. \> {\bf return $w_k$}
}


\boxminipg{0.97\linewidth}{
\begin{theorem} \label{thm:apg-opt:agpopt}
	Let $f$ be a function from $\F(R)$. For any real value $\tau \ge 5 \eps \norm{w^*}$, algorithm \ttt{\em AGP-opt} finds a $w \in \real^d$ satisfying $f(w) \le f(w^*) + \tau$ with parameter
	\myeqn{
        K = \min\{5LR^2/(4\tau), LR/(4\eps)\}.
        \label{eqn:agp-opt:K}
    }
    Furthermore, every $w_k$ in Line 3 of \ttt{\em AGP-opt} has norm at most $(17/8) R$.
\end{theorem}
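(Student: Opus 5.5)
We track the distance $D_k=\norm{w_k-w^*}$ and show that it decreases at every non-terminating step. Write $g_k=\nabla f(w_k)+e_k$ with $\norm{e_k}\le\eps$, and let $\eta=1/(2L)$. Expanding gives
\[
D_{k+1}^2=D_k^2-2\eta\dotpair{g_k}{w_k-w^*}+\eta^2\norm{g_k}^2 .
\]
We then split the inner product as $\dotpair{\nabla f(w_k)}{w_k-w^*}+\dotpair{e_k}{w_k-w^*}$. For the first term, convexity and $L$-smoothness give co-coercivity: $\dotpair{\nabla f(w_k)}{w_k-w^*}\ge \max\{f(w_k)-f(w^*),\ \norm{\nabla f(w_k)}^2/L\}$. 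We plan to use a convex combination of the two lower bounds. The error term is at most $\eps D_k$.

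If the loop continues past Line 4, then $\norm{g_k}\ge 4\eps$. Hence $\norm{\nabla f(w_k)}\ge\norm{g_k}-\eps\ge \tfrac34\norm{g_k}$, and $\norm{g_k}^2\le (16/9)\norm{\nabla f(w_k)}^2$. This makes the $\eta^2\norm{g_k}^2$ term dominated by a fraction of the co-coercivity term, which leaves
\[
D_{k+1}^2\le D_k^2-c_1\eta\,(f(w_k)-f(w^*))-c_2\eta\norm{\nabla f(w_k)}^2/L+2\eta\eps D_k
\]
for suitable constants $c_1,c_2$. The term $2\eta\eps D_k$ must be absorbed using $\norm{\nabla f(w_k)}\ge 3\eps$ together with the gap term. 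For the norm claim, we want $D_{k+1}\le D_k$ whenever $D_k$ is large. Since $f(w_k)-f(w^*)\ge 0$, it suffices to have $\eps D_k\lesssim \norm{\nabla f}^2/L$. This holds only if $D_k$ is not too large relative to $\norm{\nabla f}$, so the argument must be an induction on $D_k\le D_0+\text{small}$.

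A cleaner route is to note that a single step changes $D$ by at most $\eta\norm{g_k}$. Using $\dotpair{\nabla f}{w-w^*}\ge\norm{\nabla f}^2/L$ and $\dotpair{\nabla f}{w-w^*}\ge$ (something times) $\norm{\nabla f}\cdot$, we then conclude that $D_k\le \norm{w^*}+$ (bounded overshoot) along the trajectory. The result is $\norm{w_k}\le\norm{w^*}+D_k\le 2R+R/8$, where the extra $R/8$ comes from the overshoot $\eta\cdot O(\eps)\cdot K\le (1/2L)\cdot O(\eps)\cdot LR/(4\eps)$. This is exactly where the $LR/(4\eps)$ cap on $K$ enters, so the $17/8$ constant should fall out of $K\le LR/(4\eps)$.

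The termination cases are handled as follows.

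\emph{Case (a): return at Line 4.} Here $\norm{\nabla f(w_k)}<5\eps$. Convexity gives $f(w_k)-f(w^*)\le\norm{\nabla f(w_k)}\,D_k<5\eps D_k$. This needs $D_k\le\norm{w^*}$ to obtain $\tau\ge5\eps\norm{w^*}$, so the potential argument must actually show $D_k\le D_0=\norm{w^*}$ along every non-terminating prefix. We therefore aim to prove that $D_k$ is monotone non-increasing while $\norm{g_k}\ge4\eps$. We try to verify this from the expansion, using $\norm{\nabla f}^2/L$ against $\eps D_k$ via the gap bound $f(w_k)-f(w^*)\ge$ \dots; this is the main obstacle.

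\emph{Case (b): $K$ iterations completed.} Summing the descent inequality gives $\sum_k (f(w_k)-f(w^*))\lesssim L D_0^2$, and with $K\ge 5LR^2/(4\tau)$ the average gap is at most $\tau$. A separate descent argument (smoothness with step $1/(2L)$ and gradient error $\le\norm{\nabla f}/4$ give $f(w_{k+1})\le f(w_k)$) shows the last iterate is no worse than the average. When $K=LR/(4\eps)$ is the binding minimum, we argue instead that after that many steps with $\norm{\nabla f}\ge3\eps$ the function decrease $\sum \eta\norm{\nabla f}^2/2\ge K\cdot 9\eps^2/(4L)$ would exceed $f(w_0)-f(w^*)\le\ldots$; this forces an earlier return and reduces to Case (a).

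The hardest step is establishing $D_{k+1}\le D_k$ despite the adversarial term. We plan to first try the combination $\dotpair{\nabla f}{w-w^*}\ge\tfrac12(f-f^*)+\tfrac1{2L}\norm{\nabla f}^2$ together with $f(w_k)-f(w^*)\ge$ (we would need a lower bound in terms of $\eps D_k$, possibly using $\norm{\nabla f}\ge3\eps$ and convexity along the segment). If that fails, we fall back on the overshoot bound above and weaken Case (a) accordingly.
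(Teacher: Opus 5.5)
There is a genuine gap, and it sits where you flagged the ``main obstacle''. The claim that $D_k=\norm{w_k-w^*}$ is non-increasing whenever $\norm{g_k}\ge 4\eps$ is false. Case (a) depends on it, and so does the summation in Case (b), which needs $D_k\le\norm{w^*}$ to turn $\eps D_k$ into $\eps\norm{w^*}$.

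Here is a counterexample. Take $f(x,y)=\frac{L}{2}(x-a)^2+\frac{\mu}{2}(y-b)^2$ with $a=5\eps/L$, $b=100\eps/L$ and $\mu>0$ tiny. At $w_0=0$ let the oracle return $\nabla f(w_0)+(0,\eps)$. Then $\norm{g_0}\ge 5\eps>4\eps$, yet $D_1^2-D_0^2\approx b\eps/L-18.5\,\eps^2/L^2>0$. At the same time $f(w_1)-f(w^*)\approx 25\eps^2/(8L)$, far below $\eps\norm{w^*}$.

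That observation is the missing idea: split cases on the function gap, not on $\norm{g_k}$. The paper proves the one-step inequality $L(D_{k+1}^2-D_k^2)\le f(w^*)-f(w_{k+1})+\eps D_k$. It comes from convexity at $w_k$ plus the smoothness upper bound between $w_k$ and $w_{k+1}$, with $\norm{g_k}\ge4\eps$ killing the quadratic terms. By induction, $D_k\le\norm{w^*}$ then holds only up to the last index whose gap is still at least $\eps\norm{w^*}$. The argument then splits as follows:
\begin{itemize}
\item If some iterate drops below that threshold, the output is fine, because $f(w_{k+1})<f(w_k)$ whenever $\norm{g_k}\ge4\eps$.
\item Otherwise $D_k\le\norm{w^*}$ throughout. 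Case (a) then gives $5\eps\norm{w^*}$.
\item In Case (b), the potential $E_k=LD_k^2+k\,(f(w_k)-f(w^*)-\eps\norm{w^*})$ is non-increasing, which yields $f(w_K)-f(w^*)\le L\norm{w^*}^2/K+\eps\norm{w^*}$.
\end{itemize}
Your fallback of weakening Case (a) to $5\eps(\norm{w^*}+R/8)$ does not prove the statement.

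Case (b) has two further problems.

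\emph{Constants.} The stated $K$ leaves no slack. The paper needs $K\ge L\norm{w^*}^2/(\tau-\eps\norm{w^*})$, which is at most $5L\norm{w^*}^2/(4\tau)$ only because $\tau\ge5\eps\norm{w^*}$. Your convex combination $\frac12(f-f^*)+\frac1{2L}\norm{\nabla f}^2$ gives only $LD_{k+1}^2\le LD_k^2-\frac12(f(w_k)-f(w^*))+\eps D_k$. That forces $K\ge 2L\norm{w^*}^2/(\tau-2\eps\norm{w^*})$, which can reach $\frac{10}{3}L\norm{w^*}^2/\tau$ and exceeds the stated $K$. Instead use $\dotpair{\nabla f(w)}{w-w^*}\ge f(w)-f(w^*)+\frac{1}{2L}\norm{\nabla f(w)}^2$, the sum rather than a convex combination. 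Together with $\norm{g_k}^2\le\frac{16}{9}\norm{\nabla f(w_k)}^2$ it gives $LD_{k+1}^2\le LD_k^2-(f(w_k)-f(w^*))+\eps D_k$, matching the paper.

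\emph{The ``forced early return'' is false.} Take $f(w)=5\eps|w-R|-\frac{25\eps^2}{2L}$ for $|w-R|\ge5\eps/L$ and $\frac L2(w-R)^2$ otherwise. Use an exact oracle, $\tau=5\eps R$ and $\eps\le LR/20$. All $K=LR/(4\eps)$ iterations then run with $\norm{g_k}=5\eps$, ending at $w_K=5R/8$. The total decrease $25\eps R/8$ never exceeds $f(0)-f(w^*)\approx5\eps R$, so no contradiction arises. This case needs no separate argument: $\tau\ge5\eps\norm{w^*}$ gives $5L\norm{w^*}^2/(4\tau)\le L\norm{w^*}/(4\eps)\le LR/(4\eps)$, so the same potential bound applies.

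The norm claim, by contrast, is fine. Your displayed inequality with $c_1,c_2\ge0$ already gives $D_{k+1}^2\le D_k^2+2\eta\eps D_k\le(D_k+\eta\eps)^2$. The paper gets the same $D_{k+1}\le D_k+\eps/(2L)$ from non-expansiveness of the exact step plus the triangle inequality. With $k\le K\le LR/(4\eps)$ this yields $\norm{w_k}\le 2\norm{w^*}+R/8\le 17R/8$.
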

}

Several remarks are in order before delving into the proof:
\myitems{
    \item Gradient descent (GD) can be regarded as a special instance of \ttt{AGP-opt} with $\eps = 0$. The early termination condition at Line 4 is needed to guarantee a sub-optimality gap sensitive to $\norm{w^*}$. The sub-optimality gap would become $O(\eps R)$ without early termination as analyzed in Appendix~\ref{app:simple-alg}.

    \item Theorem~\ref{thm:apg-opt:agpopt} extends the convergence bound of GD in a seamless manner: the first term of \eqref{eqn:agp-opt:K} captures the influence of optimization accuracy, while the second reflects the noise imposed by the $\eps$-AGP oracle. For $\eps = 0$, the second term disappears, leaving $K = 5LR^2/(4\tau)$, which asymptotically matches the convergence bound of GD \cite[Corollary 3.5]{gg23}.

    \item Setting $w_0 = \bm{0}$ at Line 1 is needed for the claim (in Theorem~\ref{thm:apg-opt:agpopt}) that $\norm{w_k} \le (17/8)R$ for all $k$. The claim will be useful in Section~\ref{sec:dl:q2}. Our analysis can be easily adapted to other choices of $w_0$ by shifting the origin of the coordinate system to $w_0$.
}

\extraspacing {\bf Proof of Theorem~\ref{thm:apg-opt:agpopt}.} Define
\myeqn{
    K' = \text{the value of $k$ when \ttt{AGP-opt} terminates.}
    \label{eqn:agp-opt:K'}
}
Note that $K'$ may be less than $K$ --- this happens when the termination is at Line 4. The following technical lemma is proved in Appendix~\ref{app:lmm:apg-opt:alg}.

\boxminipg{0.97\linewidth}{
\begin{lemma} \label{lmm:apg-opt:alg}
    Let $f$ be a function satisfying {\bf C1} and {\bf C2}. Algorithm \ttt{\em AGP-opt} outputs a vector $w \in \real^d$ such that $f(w) - f(w^*)$ is bounded by
    \myeqn{
        \eps \norm{w^*} +
        \max\Big\{4 \eps \norm{w^*}, \fr{L \cdot \norm{w^*}^2}{K}\Big\}.
        \label{eqn:agp-opt:alg:gap}
    }
    Furthermore, For any $k \in [0, K'-1]$, it holds that
    \myeqn{
        \norm{w_{k+1} - w^*}
        \le
        \norm{w_k - w^*} + \eps / (2L). \label{eqn:agp-opt:alg:gap2}
    }
\end{lemma}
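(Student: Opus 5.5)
The plan is to run the standard one-step analysis of gradient descent with step size $1/(2L)$, but to write the oracle output as $g_k = \nabla f(w_k) + e_k$ with $\norm{e_k}\le\eps$. Throughout, write $r_k = \norm{w_k - w^*}$ and $\Delta_k = f(w_k) - f(w^*)$. The key observation concerns iterations that do \emph{not} terminate at Line 4: for these, $\norm{g_k}\ge 4\eps$, hence $\norm{\nabla f(w_k)} \ge 3\eps$ and $\norm{e_k} \le \norm{\nabla f(w_k)}/3$. The noise is therefore a small relative perturbation of the true gradient, and the usual descent arguments survive with worse constants.

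\textbf{Step 1: distance recursion.} Expand
\[
r_{k+1}^2 = r_k^2 - \tfrac{1}{L}\dotpair{g_k}{w_k - w^*} + \tfrac{1}{4L^2}\norm{g_k}^2 .
\]
For the true-gradient part, use co-coercivity of convex $L$-smooth functions,
\[
\dotpair{\nabla f(w_k)}{w_k-w^*} \ge \Delta_k + \tfrac{1}{2L}\norm{\nabla f(w_k)}^2 .
\]
Bound the noise part by $\eps r_k$ via Cauchy--Schwarz, and bound $\norm{g_k}^2 \le (4/3)^2\norm{\nabla f(w_k)}^2$, which holds since $\norm{e_k} \le \norm{\nabla f(w_k)}/3$. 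The gradient-squared terms should then cancel with room to spare, giving
\[
r_{k+1}^2 \le r_k^2 - \tfrac{1}{L}\Delta_k + \tfrac{\eps}{L} r_k .
\]
For \eqref{eqn:agp-opt:alg:gap2} I expect a cruder route to suffice: the exact GD step $w_k - \nabla f(w_k)/(2L)$ is nonexpansive toward $w^*$, since the step size is at most $2/L$. The actual step differs from it by $e_k/(2L)$, so the triangle inequality gives $r_{k+1} \le r_k + \eps/(2L)$ directly.

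\textbf{Step 2: the gap bound, by cases on the stopping rule.} \emph{Case (a): termination at Line 4.} Here $\norm{\nabla f(w_k)} < 5\eps$, and convexity gives $\Delta_k \le \norm{\nabla f(w_k)}\, r_k < 5\eps r_k$. This needs $r_k \le \norm{w^*}$, or something close to it. So I also want the sharper statement that $r_k$ does not increase on non-terminating steps. From the recursion, $r_{k+1}\le r_k$ whenever $\Delta_k \ge \eps r_k$. If instead $\Delta_k < \eps r_k$ at some step, I would want to argue the output is already good. This interplay is the \emph{main obstacle}. My first attempt is to use monotonicity of $f$ along the iterates, which should follow from the descent lemma: $f(w_{k+1}) \le f(w_k) - \tfrac{1}{2L}\dotpair{\nabla f}{g_k} + \tfrac{1}{8L}\norm{g_k}^2$, which is negative under the relative-noise condition. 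Then once some $\Delta_j\le \eps r_j$, every later iterate has a gap at most that value, and $r_j \le r_0 = \norm{w^*}$ holds by induction up to step $j$.

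\emph{Case (b): all $K$ steps run.} Before any small-gap step, all steps have $\Delta_k\ge\eps r_k$, so $r_k \le \norm{w^*}$. Summing the recursion gives
\[
\sum_k (\Delta_k - \eps r_k) \le L\norm{w^*}^2 .
\]
Combining this with monotonicity, $\Delta_{K} \le \eps\norm{w^*} + L\norm{w^*}^2/K$. Together, the two cases yield \eqref{eqn:agp-opt:alg:gap}, with the constant $4$ absorbing the $5\eps r_k$ of case (a) as $\eps\norm{w^*}+4\eps\norm{w^*}$. I expect the bookkeeping of where $r_k\le\norm{w^*}$ holds, and the monotonicity check, to be the delicate part.
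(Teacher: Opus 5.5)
Your proposal is correct and follows the paper's architecture: monotonicity of $f$ along the iterates via the descent lemma, a one-step distance inequality valid on non-terminating steps, and non-increase of $\norm{w_k - w^*}$ while the gap is large. It also uses the same three-way split (small gap already reached, stop at Line 4, stop at Line 7), and for \eqref{eqn:agp-opt:alg:gap2} the same nonexpansiveness of the exact step $w \mapsto w - \nabla f(w)/(2L)$ plus the triangle inequality. The differences are only in details, and both variants work: you obtain the one-step bound from the co-coercivity inequality and $\norm{g_k} \le \frac{4}{3}\norm{\nabla f(w_k)}$, with $f(w_k) - f(w^*)$ on the right-hand side, whereas the paper combines plain convexity with the descent lemma and absorbs an $\eps\norm{g_k}/(2L)$ term using $\norm{g_k} \ge 4\eps$, with $f(w_{k+1}) - f(w^*)$ instead; and you use the relative threshold $\Delta_k \ge \eps r_k$ with a telescoping sum where the paper uses the absolute threshold $\eps\norm{w^*}$ (its index $K''$) and the potential $E_k$.
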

}

As $\tau \ge 5 \eps \norm{w^*}$ (a condition in Theorem~\ref{thm:apg-opt:agpopt}), the value of \eqref{eqn:agp-opt:alg:gap} is at most $\tau$ when
\myeqn{
    K \ge \fr{L \norm{w^*}^2}{\tau - \eps \norm{w^*}}. \label{eqn:agp-opt:K:h1}
}
The inequality $\tau \ge 5 \eps \norm{w^*}$ implies $\tau - \eps \norm{w^*} \ge (4/5) \tau$. Hence,
\myeqn{
        \fr{L \norm{w^*}^2}{\tau - \eps \norm{w^*}}
        \le
        \fr{5 L \norm{w^*}^2}{4 \tau}. \label{eqn:agp-opt:K:h2}
    }
We can bound from above the right hand side of \eqref{eqn:agp-opt:K:h2} in two different ways:
\myitems{
    \item As $\tau \ge 5 \eps \norm{w^*}$ gives $\norm{w^*} / \tau \le 1 / (5\eps)$, we have
    \myeqn{
        \eqref{eqn:agp-opt:K:h2}
        \le
        \fr{L \norm{w^*}}{4 \eps}
        \le
        \fr{L R}{4 \eps} \nn
    }
    where the second inequality used $\norm{w^*} \le R$ (because $f \in \F(R)$)

    \item On the other hand, from $\norm{w^*} \le R$, we directly have
    \myeqn{
        \eqref{eqn:agp-opt:K:h2} \le
        \fr{5L R^2}{4 \tau}. \nn
    }
}
It now follows from \eqref{eqn:agp-opt:K:h1} and Lemma~\ref{lmm:apg-opt:alg} that the choice of $K$ in \eqref{eqn:agp-opt:K} allows \ttt{AGP-opt} to output a $w \in \real^d$ whose sub-optimalty gap is at most $\tau$.

\vgap

For each $k \in [0, K']$, Lemma~\ref{lmm:apg-opt:alg} tells us
\myeqn{
    \norm{w_k - w^*}
    &\le&
    \norm{w_0 - w^*} + k \eps / (2L) \nn \\
    &=&
    \norm{w^*} + k \eps / (2L). \label{eqn:agp-opt:K:h3}
}
For the value of $K$ in \eqref{eqn:agp-opt:K}, it holds that $k \le K' \le K \le LR/(4\eps)$. Thus, $k \eps / (2L) \le R/8$, from which \eqref{eqn:agp-opt:K:h3} yields $\norm{w_k - w^*} \le \norm{w^*} + R/8$, leading to
\myeqn{
    \norm{w_k} \le 2\norm{w^*} + R/8 \le (17/8)R. \nn
}
This completes the proof of Theorem~\ref{thm:apg-opt:agpopt}.

\subsection{Discussion} \label{sec:apg-opt:discussion}

Optimization under an $\eps$-AGP oracle has been studied in
\cite{cnw+22,izy22,wt24,wt25}. We will focus on \cite{izy22,wt24} because the other two papers assume function classes different from ours.

\vgap

As detailed in Appendix~\ref{app:izy22}, the results of \cite{izy22} could be used to show that a slightly modified version of GD finds a vector whose sub-optimality gap is
\myeqn{
    O\Big(
    \norm{w^*} \cdot \sqrt{(LU/K) + B\eps}
    \Big)
    \label{eqn:agp-opt:izy22}
}
where $U$ is an upper bound on $f(w)$ for every vector $w$ produced at any iteration of GD, and $B$ is an upper bound on $\norm{\nabla f(w)}$ over those vectors (see Appendix~\ref{app:izy22} for the precise definitions of $U$ and $B$). On the other hand, by Lemma~\ref{lmm:apg-opt:alg}, we can find a vector whose sub-optimality gap is
\myeqn{
    O\Big(
    \norm{w^*} \cdot \max\{\eps, L\norm{w^*}/K\}
    \Big).
    \label{eqn:agp-opt:ours-comp}
}
Our bound \eqref{eqn:agp-opt:ours-comp} converges faster than \eqref{eqn:agp-opt:izy22} (specifically: $1/K$ vs.\ $1/\sqrt{K}$). As $K \rightarrow \infty$, our bound converges to $O(\norm{w^*} \cdot \eps)$, while \eqref{eqn:agp-opt:izy22} to $O(\norm{w^*} \cdot \sqrt{B \eps})$. Crucially, in the context of \cite{izy22}, a reasonable $\eps$ {\em must} be less than $B$ --- otherwise, the fact $\norm{\nabla f(w)} \le B$ allows an $\eps$-AGP oracle to {\em always} return vector $\bm{0}$, thereby giving no information to the algorithm at all. In fact, effective optimization demands $\eps \ll B$, in which case our sub-optimality gap $O(\norm{w^*} \cdot \eps)$ is considerably smaller than that of \cite{izy22}.

\vgap

\cite{wt24} presented a mirror-descent algorithm that works when $w$ comes from a domain with a finite radius $R$. In that scenario, their algorithm ensures a sub-optimality gap of
\myeqn{
    O\left(\fr{\poly(R, L)}{\sqrt{K}} + R \eps \right). \label{eqn:wt24}
}
The bound converges slower than \eqref{eqn:agp-opt:ours-comp} ($K$ vs.\ $1/\sqrt{K}$). As $K \rightarrow \infty$, \eqref{eqn:wt24} converges to $O(R\eps)$ --- note that the bound remains so even if $\norm{w^*} \ll R$; in that scenario our bound \eqref{eqn:agp-opt:ours-comp}, which converges to $O(\norm{w^*} \eps)$, is significantly smaller.

\section{DLAGP} \label{sec:dl}

We will deploy our theory in Section~\ref{sec:agp-opt} to tackle the DLAGP problem in its general form (with an arbitrary number $n$ of clients). Every mention of ``query'' henceforth refers to an approximate gradient query.

\subsection{Answers for Q1} \label{sec:dl:q1}

Recall that $w^*$ is an optimal vector of $f$ (i.e., $w^*$ minimizes the function $f$ defined in \eqref{eqn:intro:f-loss-sum}) with the smallest norm. If no upper bound is known about $\norm{w^*}$, no DLAGP algorithm can guarantee a sub-optimality gap at most $\tau$ using a finite number of queries. Indeed, any such algorithm must work in the special case of $n = 1$, which, however, gives a contradiction against Theorem~\ref{thm:agp-opt:impossible-no-bound}. This is true even if the values $L, \eps$, and $\tau$ are all known to the algorithm.

\vgap

Next, we consider $\norm{w^*} \le R$.
Recall (from our statement of {\bf Q1} in Section~\ref{sec:intro}) that $\tau_{min}$ is the minimum sub-optimality gap that we must tolerate. Theorem~\ref{thm:apg-opt:gap} implies
\myeqn{
    \tau_{min} \ge \eps R / 2.
    \label{eqn:agp-opt:tau-min-lb}
}
To see why, assume the existence of an DLAGP algorithm that can always ensure a sub-optimality gap at most $\tau$ for some $\tau < \eps R / 2$. Such an algorithm must work in the special case of $n = 1$, which, however, gives a contradiction against Theorem~\ref{thm:apg-opt:gap}. This is true even if the values $\eps, \tau$, $R$, and $L$ are all known to the algorithm.

\vgap

Next, we argue that $\tau_{min} \le 5\eps R$, i.e.,
DLAGP is solvable on any $R$-instance as long as $
\tau \ge 5 \eps R$. As $\ell_i$ is $L$-smooth for each $i \in [n]$, so is the function $f$ in \eqref{eqn:intro:f-loss-sum}.
The server only needs to execute the algorithm \ttt{AGP-opt} in Section~\ref{sec:apg-opt:alg}. To execute Line 3 of \ttt{AGP-opt}, which demands $\ora(w_k)$, the server issues a query to every client $i \in [n]$, solicits the client's reply $v_i$, and then defines
\myeqn{
    \ora(w_k) = \fr{1}{n} \sum_{i=1}^n v_i. \nn
}
Accordingly,
\myeqn{
    \norm{\ora(w_k)-\nabla f(w_k)} &=&
    \fr{1}{n} \Big\lVert\sum_{i=1}^n v_i - \sum_{i=1}^n \nabla \ell_i(w_k) \Big\rVert \nn \\
    &\le&
    \fr{1}{n} \sum_{i=1}^n \norm{v_i - \nabla \ell_i(w_k)} \nn
}
which is at most $\eps$ because $\norm{v_i - \nabla \ell_i(w_k)} \le \eps$ for each $i \in [n]$; see \eqref{eqn:intro:qry}. Theorem~\ref{thm:apg-opt:agpopt} thus permits the server to ensure a sub-optimality gap at most $\tau$ by issuing $K \cdot n$ queries, where $K$ is given in \eqref{eqn:agp-opt:K}. This yields the bound claimed in \eqref{eqn:intro:q2-bound1}.

\subsection{Answer for Q2} \label{sec:dl:q2}

Next, we improve the query bound from the previous subsection when a small failure probability $\delta$ is permitted.

\extraspacing {\bf Center Estimation.} We first take de-tour to discuss a relevant problem. Denote by $P = \set{p_1, p_2, ..., p_n}$ a set of $n$ vectors in $\real^d$ such that $\norm{p_i} \le B$ for every $i \in [n]$. Let $(s_1, s_2, ..., s_m)$ be a sequence uniformly sampled from $[n]^m$. Equivalently, each $s_j$ ($j \in [m]$) is independently drawn from $[n]$ uniformly at random. Variations of the following result have appeared before (e.g., Lemma 18 of \cite{kl17}); a proof of our version can be found in Appendix~\ref{app:sec:dl:q2}.

\boxminipg{0.97\linewidth}{
\begin{lemma} \label{lmm:dl:center-est}
    For any $t \ge 0$, by setting $m=O((B/t))^2 \log (1/\delta))$, we can guarantee
    \myeqn{
        \Pr\Big[
        \Big\lVert
        \fr{1}{n} \sum_{i=1}^n p_i -
        \fr{1}{m} \sum_{j=1}^m p_{s_j}
        \Big\rVert
        >
        t
        \Big]
        \le
        \delta. \nn
    }
\end{lemma}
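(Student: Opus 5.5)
Our plan is to view the sample mean as an average of i.i.d.\ bounded random vectors and apply a dimension-free concentration inequality for vector-valued sums. Let $\mu = \fr{1}{n}\sum_{i=1}^n p_i$, and for $j \in [m]$ define $X_j = p_{s_j} - \mu$. Because each $s_j$ is uniform on $[n]$ and independent of the others, the $X_j$ are i.i.d.\ with $\expt[X_j] = \bm{0}$. They are also bounded: $\norm{X_j} \le \norm{p_{s_j}} + \norm{\mu} \le 2B$. The quantity we must control is $\norm{\fr{1}{m}\sum_j X_j}$.

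The argument has two steps, expectation and then deviation.

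\emph{Expectation.} Independence and zero mean kill the cross terms, so
\[
\expt\Big\lVert \sum_{j=1}^m X_j \Big\rVert^2 = \sum_{j=1}^m \expt\norm{X_j}^2 \le m B^2 .
\]
The bound $B^2$ holds because $\expt\norm{p_{s_j}-\mu}^2 \le \expt\norm{p_{s_j}}^2 \le B^2$. Jensen's inequality then gives $\expt\norm{\fr{1}{m}\sum_j X_j} \le B/\sqrt{m}$.

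\emph{Deviation.} Consider $F(s_1,\dots,s_m) = \norm{\fr{1}{m}\sum_j X_j}$ as a function of the independent indices. Changing a single $s_j$ changes $\fr{1}{m}\sum_j X_j$ by at most $\norm{p_a - p_b}/m \le 2B/m$. By the triangle inequality, $F$ therefore has bounded differences $c_j = 2B/m$. McDiarmid's inequality yields
\[
\Pr\big[F > \expt F + u\big] \le \exp\!\Big(-\fr{2u^2}{\sum_j c_j^2}\Big) = \exp\!\big(-m u^2/(2B^2)\big).
\]
Taking $u = B\sqrt{2\log(1/\delta)/m}$ shows that, with probability at least $1-\delta$,
\[
F \le \fr{B}{\sqrt m}\big(1+\sqrt{2\log(1/\delta)}\big).
\]

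Finally, we choose $m = \lceil c\,(B/t)^2 \log(1/\delta)\rceil$ for a suitable constant $c$, which makes the right-hand side at most $t$. This choice needs $\log(1/\delta)$ bounded away from zero, say $\delta \le 1/e$. For larger $\delta$ we can replace $\log(1/\delta)$ by $\max\{1,\log(1/\delta)\}$, which is absorbed into the $O(\cdot)$; equivalently, we assume $\delta<1/2$. The case $t=0$ is degenerate: the bound reads $m=\infty$, so we assume $t>0$, and the case $B=0$ is trivial.

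The only step requiring care is the concentration step. Applying a coordinate-wise Hoeffding bound with a union bound would introduce a $\log d$ factor, which the lemma's form excludes. McDiarmid's inequality applied to the norm, combined with the second-moment bound on the expectation, avoids any dependence on $d$; this is the key choice. A vector Bernstein or Pinelis inequality for Hilbert-space martingales would serve equally well as a fallback.
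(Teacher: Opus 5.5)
Your proposal is correct and follows essentially the same route as the paper: McDiarmid's inequality applied to the norm of the centered sample mean, combined with a Jensen/second-moment bound on its expectation in which the cross terms vanish by independence and zero mean. The differences are only in constants. You skip the paper's coordinate shift and use $\expt\norm{p_{s_j}-\mu}^2 \le \expt\norm{p_{s_j}}^2 \le B^2$, which gives bounded differences $2B/m$ and $\expt F \le B/\sqrt{m}$ where the paper has $4B/m$ and $2B/\sqrt{m}$; your explicit treatment of $t=0$ and of $\delta$ near $1$ is also slightly more careful than the paper's.
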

}

\extraspacing {\bf A Randomized Algorithm for DLAGP.} Let us return to the DLAGP problem, assuming
\myeqn{
    \tau \ge 5.01 \cdot \eps R.
    \label{eqn:dl:tau-cond}
}
Suppose that Line 3 of \ttt{AGP-opt} asks for $\ora(w_k)$. Instead of querying all clients (as done in Section~\ref{sec:dl:q1}), we draw $m$ numbers $s_1, s_2, ..., s_m$ independently from $[n]$ uniformly at random --- effectively sampling $m$ clients with replacement. For each $j \in [m]$, the server issues a query to solicit a vector $v_{s_j}$ from client $s_j$, which comes with the guarantee $\norm{v_{s_j} - \nabla \ell_{s_j}(w_k)} \le \eps$. Then, it computes
\myeqn{
    \ora(w_k) = \fr{1}{m} \sum_{j=1}^m v_{s_j}. \label{eqn:dl:prob2-ora}
}

To decide $m$, we need:

\boxminipg{0.97\linewidth}{
\begin{lemma} \label{lmm:dl:ell_i-norm}
    Suppose that we run \ttt{\em AGP-opt} by setting $K$ as in \eqref{eqn:agp-opt:K}. For any $i \in [n]$ and $k \in [0, K]$, it holds that $\norm{\nabla \ell_i(w_k)} \le B_0 + (17/8) LR$ (see \eqref{eqn:intro:B0} for $B_0$).
\end{lemma}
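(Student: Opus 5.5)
The plan is to combine $L$-smoothness of each $\ell_i$ with the norm bound on the iterates from Theorem~\ref{thm:apg-opt:agpopt}. Fix $i \in [n]$ and an iterate $w_k$ produced by \ttt{AGP-opt}. Because $\ell_i$ is $L$-smooth, $\nabla \ell_i$ is $L$-Lipschitz. Applying the triangle inequality around the origin gives
\[
\norm{\nabla \ell_i(w_k)} \le \norm{\nabla \ell_i(\bm{0})} + \norm{\nabla \ell_i(w_k) - \nabla \ell_i(\bm{0})} \le B_0 + L\norm{w_k},
\]
where the second step uses the definition \eqref{eqn:intro:B0} of $B_0$. It therefore suffices to show that $\norm{w_k} \le (17/8)R$ for every iterate the algorithm actually produces.

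For that bound I would use the last claim of Theorem~\ref{thm:apg-opt:agpopt}, or rather its proof. First apply \eqref{eqn:agp-opt:alg:gap2} of Lemma~\ref{lmm:apg-opt:alg} inductively from $w_0 = \bm{0}$, which yields $\norm{w_k - w^*} \le \norm{w^*} + k\eps/(2L)$ for all $k \le K'$. Next, the choice $K \le LR/(4\eps)$ in \eqref{eqn:agp-opt:K} caps the drift term at $R/8$. Finally, the triangle inequality together with $\norm{w^*} \le R$ gives $\norm{w_k} \le 2R + R/8$. Indices $k \in [0,K]$ beyond the termination index $K'$ do not correspond to iterates, so there is nothing to check for them; the range $[0,K]$ in the statement is covered by $k \le K' \le K$.

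The main obstacle is a hypothesis mismatch. Lemma~\ref{lmm:apg-opt:alg} and Theorem~\ref{thm:apg-opt:agpopt} assume that every oracle answer is within $\eps$ of $\nabla f(w_k)$, where $f$ is the average of the $\ell_i$ (which is itself convex and $L$-smooth, as noted in Section~\ref{sec:dl:q1}). In the randomized scheme \eqref{eqn:dl:prob2-ora}, each answer is only within $\eps + t$ of the true gradient, and only with high probability, by Lemma~\ref{lmm:dl:center-est}. I would handle this by stating the lemma as conditional: it holds whenever the oracle used in the run is a valid AGP oracle with the tolerance for which $K$ was chosen. In the randomized scheme that tolerance is $\eps' = \eps + t$, and the bound is then applied on the good event where every estimate meets that tolerance.

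If the constant $17/8$ must hold with the effective tolerance $\eps'$, I would redo the drift estimate with $\eps'$ in place of $\eps$, setting $K$ from \eqref{eqn:agp-opt:K} with $\eps'$. Then $k\eps'/(2L) \le R/8$ still holds, and the argument above goes through verbatim. The margin in \eqref{eqn:dl:tau-cond}, $5.01$ rather than $5$, leaves room to choose $t$ so that the condition $\tau \ge 5\eps' R$ required by Theorem~\ref{thm:apg-opt:agpopt} is preserved.
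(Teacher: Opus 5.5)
Your proposal is correct and follows the paper's proof. It bounds $\norm{\nabla \ell_i(w_k)} \le \norm{\nabla \ell_i(\bm{0})} + L\norm{w_k} \le B_0 + L\norm{w_k}$ and then invokes the iterate bound $\norm{w_k} \le (17/8)R$ from Theorem~\ref{thm:apg-opt:agpopt}. Your remark about the randomized oracle, which the paper leaves implicit, is also sound: with $\eps' = t+\eps = \tau/(5R)$, the paper's choice $K = 5LR^2/(4\tau)$ is exactly \eqref{eqn:agp-opt:K} evaluated at $\eps'$, so the $17/8$ bound holds on the event that all earlier estimates are within tolerance $\eps'$.
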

}

The proof can be found in Appendix~\ref{app:sec:dl:q2}.
From now, we fix $B = B_0 + (17/8) LR$. By Lemma~\ref{lmm:dl:center-est}, for $m = O((\fr{B}{t})^2 \log \fr{1}{\delta'})$ --- where $t$ and $\delta'$ will be decided later --- it holds that
\myeqn{
        \Pr\Big[
        \Big\lVert
        \fr{1}{n} \sum_{i=1}^n \nabla \ell_i(w_k) -
        \fr{1}{m} \sum_{j=1}^m \nabla \ell_{s_j}(w_k)
        \Big\rVert
        >
        t
        \Big]
        \le
        \delta'.
        \label{eqn:dl:prob2:h1}
    }
Note that $\fr{1}{n} \sum_{i=1}^n \nabla \ell_i(w_k)$ is exactly $\nabla f(w_k)$. On the other hand, $\norm{\ora(w_k)-  \fr{1}{m} \sum_{j=1}^m \nabla \ell_{s_j}(w_k)}$, which by \eqref{eqn:dl:prob2-ora} is $\fr{1}{m} \norm{\sum_{j=1}^m v_{s_j} - \sum_{j=1}^m \nabla \ell_{s_j}(w_k)}$, is no more than $\eps$ because $\norm{v_{s_j} - \nabla \ell_{s_j}(w_k)} \le \eps$ for all $j \in [m]$. Plugging these into \eqref{eqn:dl:prob2:h1} gives:
\myeqn{
        \Pr\Big[
        \Big\lVert
        \nabla f(w_k) -
        \ora(w_k)
        \Big\rVert
        >
        t + \eps
        \Big]
        \le
        \delta'.
        \label{eqn:dl:prob2:h2}
}
In other words, \eqref{eqn:dl:prob2-ora} serves the purpose of a $(t+\eps)$-AGP. To apply Theorem~\ref{thm:apg-opt:agpopt} and Lemma~\ref{lmm:dl:ell_i-norm}, we require
\myeqn{
    \tau \ge 5 (t + \eps) R \nn
}
as $\norm{w^*} \le R$. We therefore set
\myeqn{
    t = \tau/(5R) - \eps = \Omega(\tau/R)
    \label{eqn:dl:prob2:t}
}
where the last step used \eqref{eqn:dl:tau-cond}.

\vgap

It remains to fix the value of $\delta'$. Henceforth, set $K = 5LR^2 / (4 \tau)$; by Theorem~\ref{thm:apg-opt:agpopt}, \ttt{AGP-opt} performs at most $K$ iterations. By setting $\delta'$ to $\delta/K$, we make sure that the inequality $\norm{\nabla f(w_k) - \ora(w_k)} \le t+\eps$ holds for all relevant $k$ with probability at least $1-\delta$. We thus determine the value of $m$ to be:
\myeqn{
    m = O\left(
        \left(\fr{B_0 + LR}{\tau/R}\right)^2 \log \fr{K}{\delta}
    \right)
    =\tO
    \left(
        \fr{(B_0 + LR)^2 R^2}{\tau^2}
    \right)
    \label{eqn:dl:m}
}
where $\tO(.)$ hides factors logarithmic to $L, R, 1/\tau$, and $1/\delta$.

\vgap

We can now conclude that in total $m \cdot K$
queries suffice for the server to guarantee a sub-optimality gap at most $\tau$ with probability at least $1-\delta$. Plugging in the values of $K$ and $m$ yields the bound claimed in \eqref{eqn:intro:q2-bound2}.

\section{Experiments} \label{sec:exp}
\begin{figure*}
    \centering
    \resizebox{\textwidth}{!}{%
    \begin{tabular}{cccccc}
        \includegraphics[height=30mm]{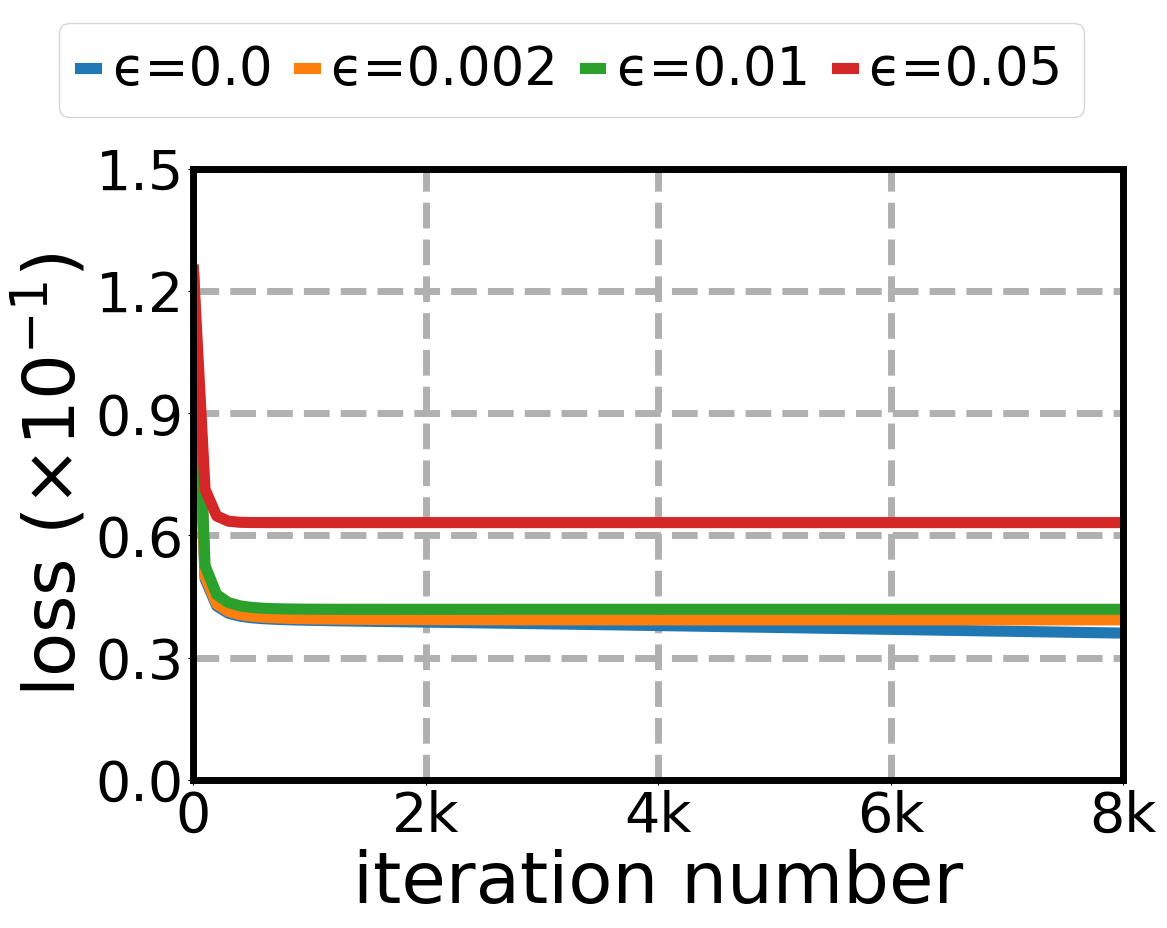} &
        \includegraphics[height=30mm]{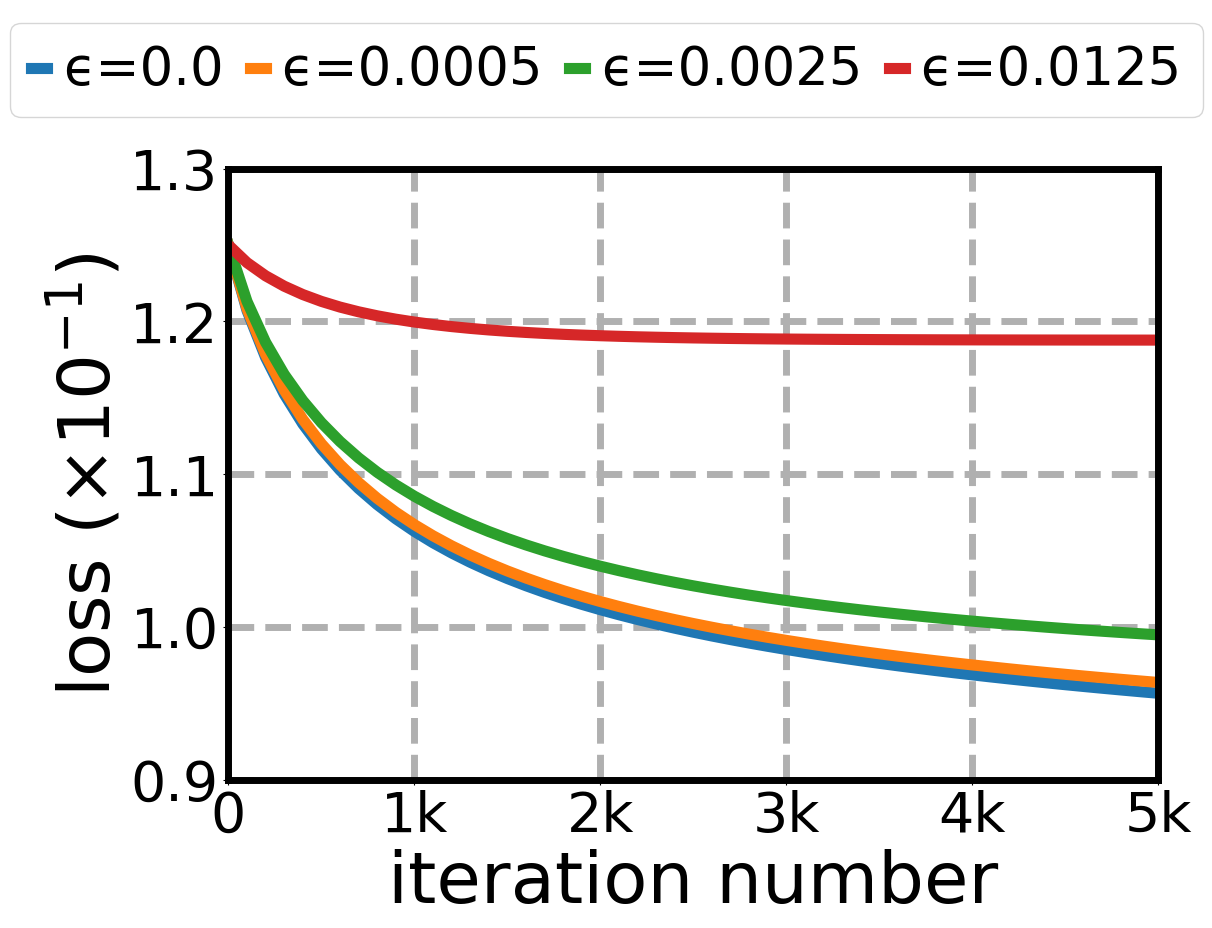} &
        \includegraphics[height=30mm]{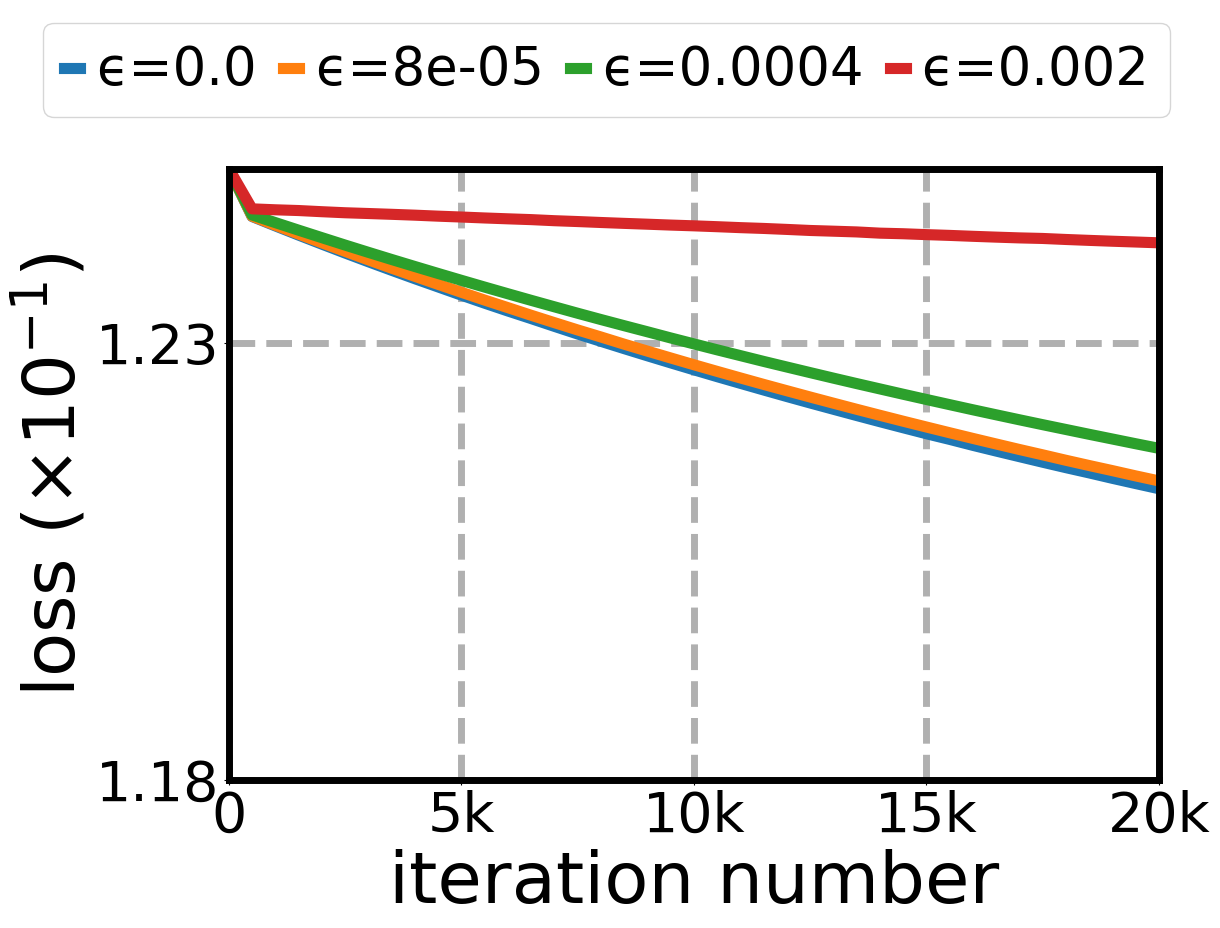} &
        \includegraphics[height=30mm]{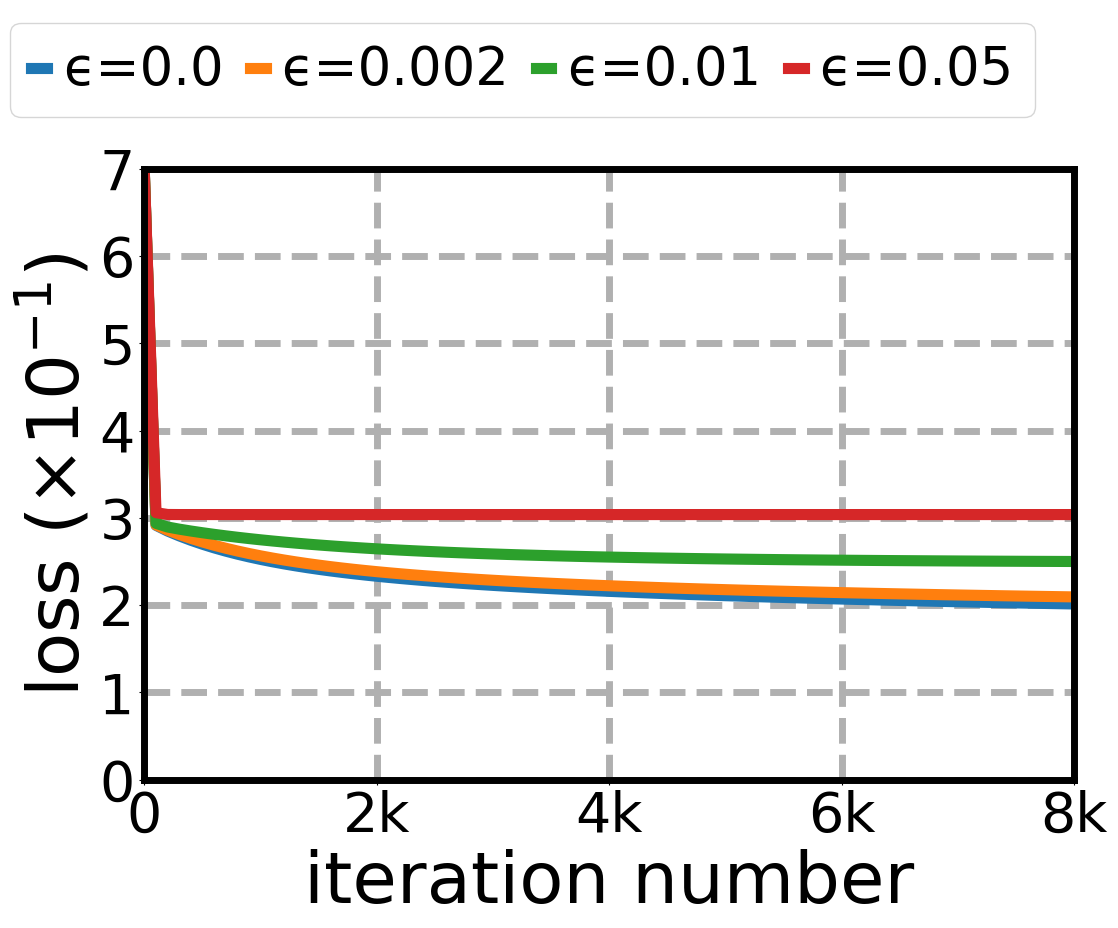} &
        \includegraphics[height=30mm]{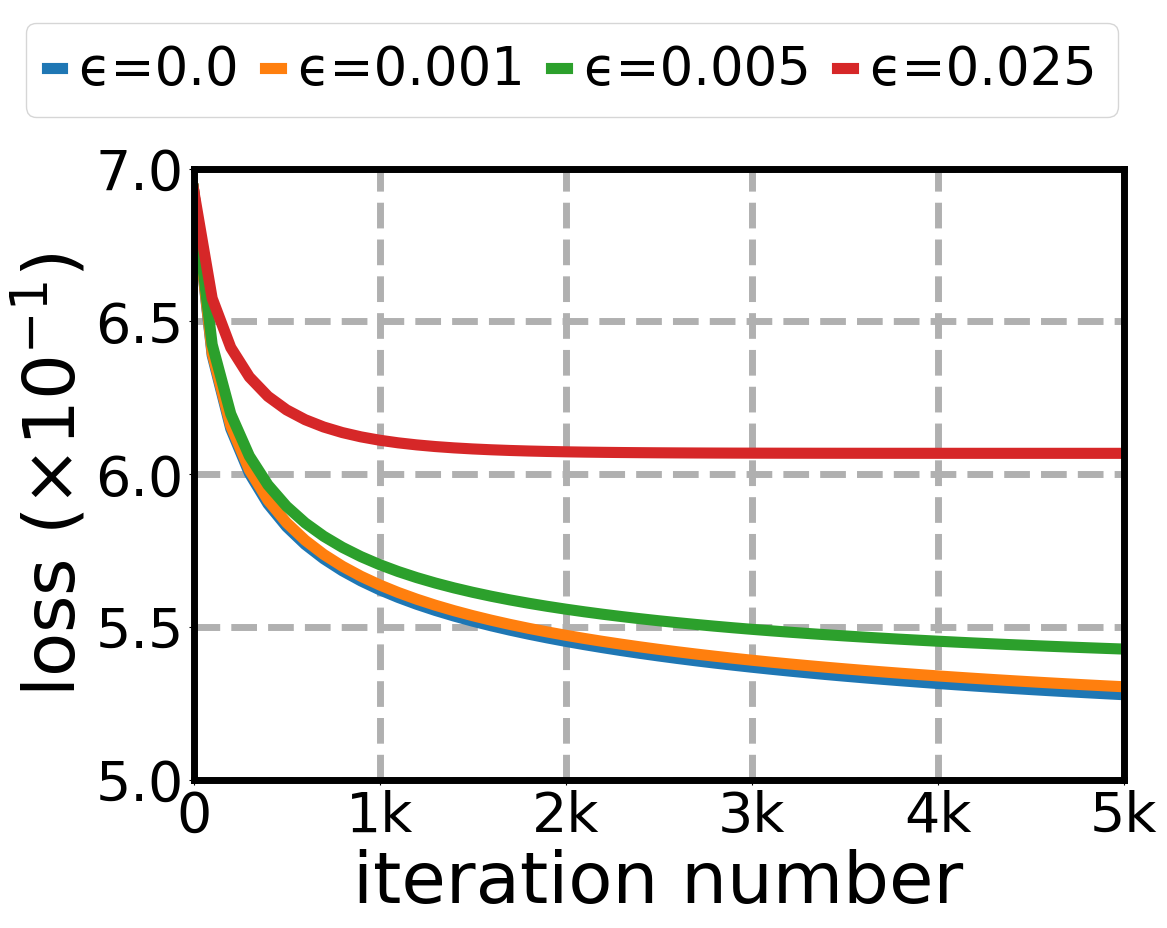} &
        \includegraphics[height=30mm]{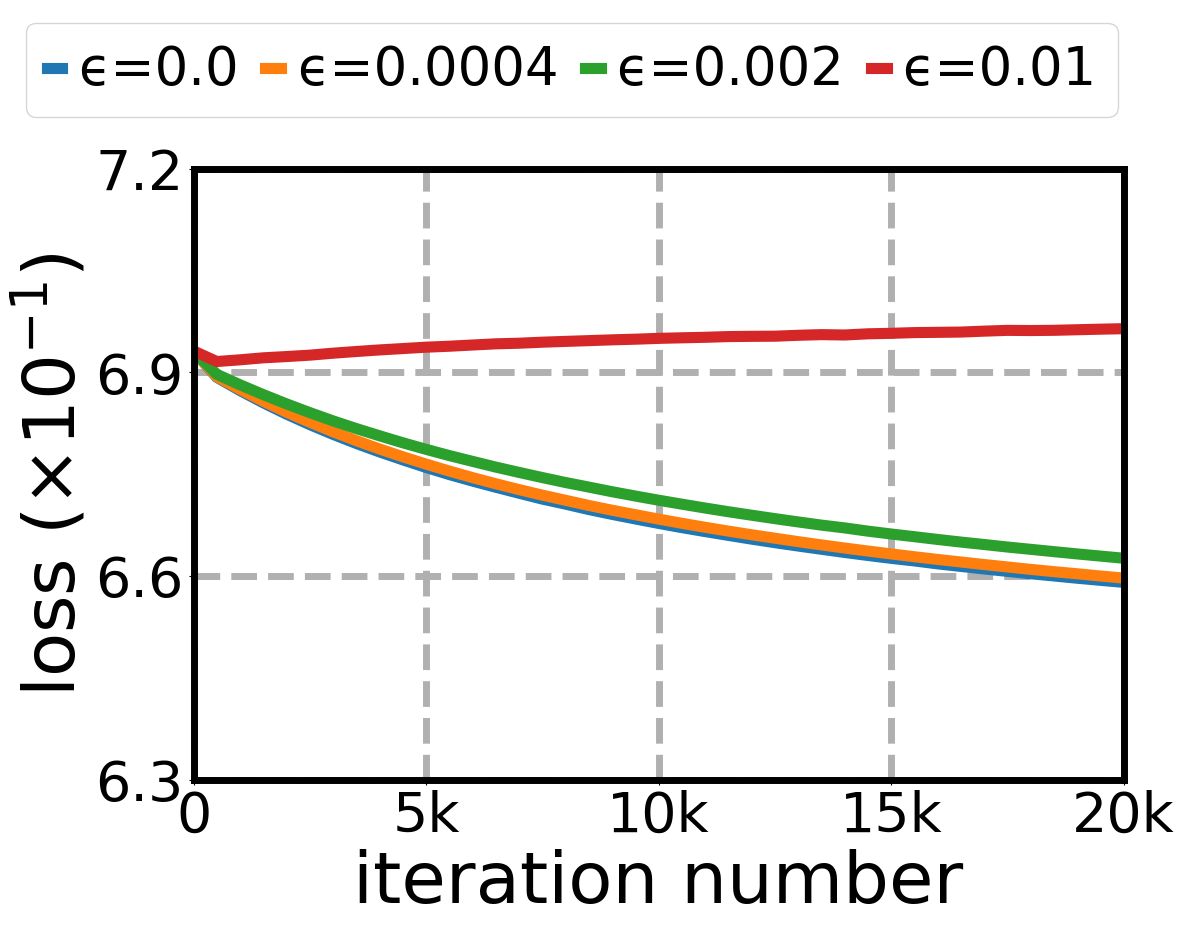}
        \\
        (a) opp-ijcnn1-rr &
        (b) opp-covtype-rr &
        (c) opp-HIGGS-rr &
        (d) opp-ijcnn1-bce &
        (e) opp-covtype-bce &
        (f) opp-HIGGS-bce \\
        \includegraphics[height=30mm]{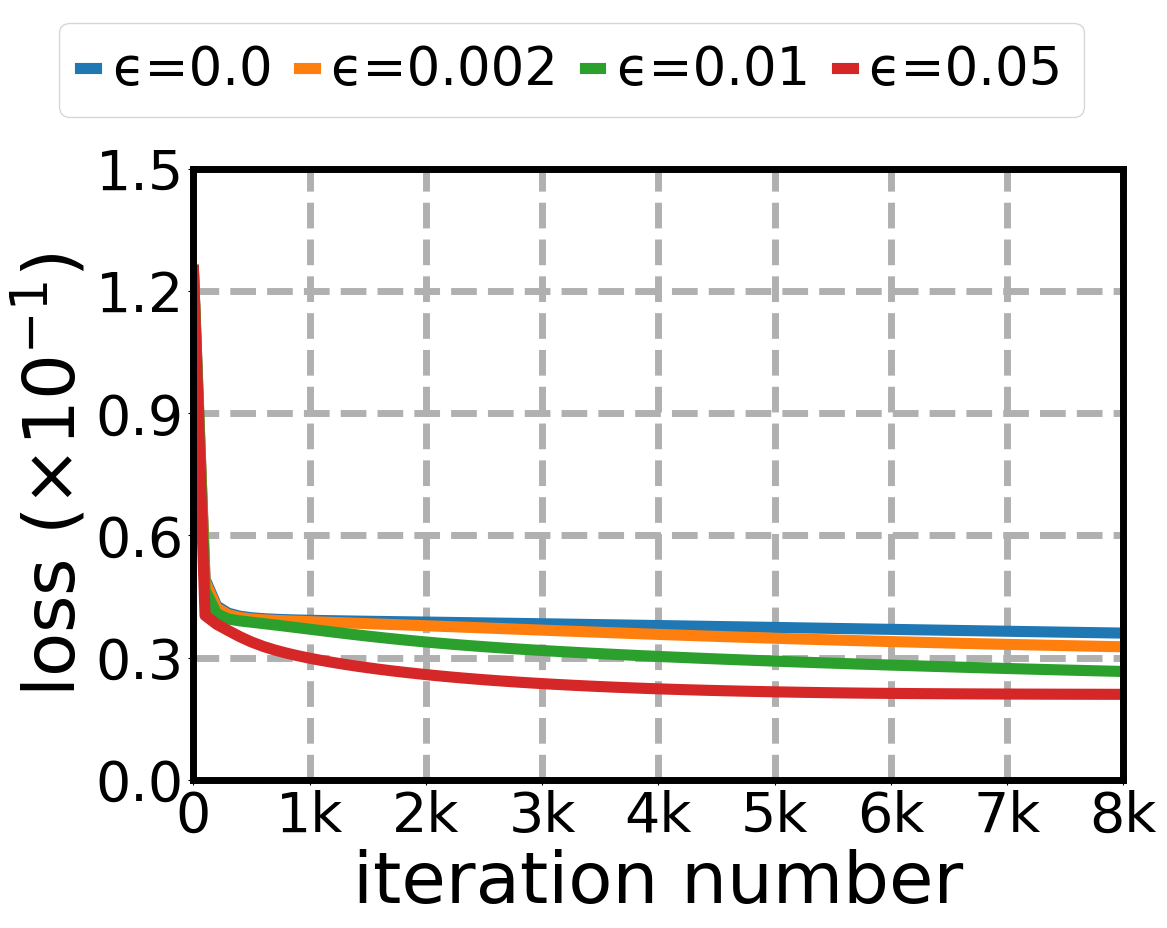} &
        \includegraphics[height=30mm]{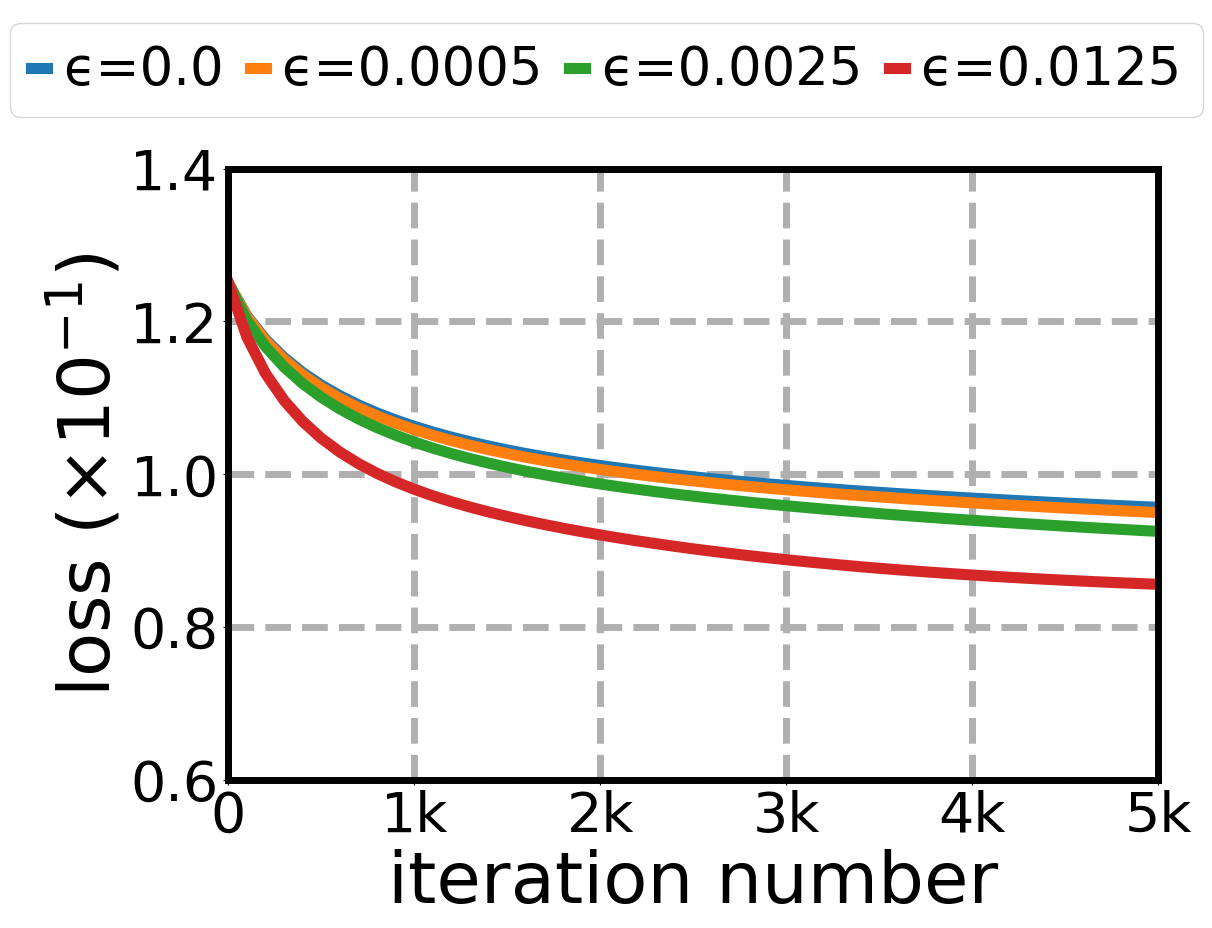} &
        \includegraphics[height=30mm]{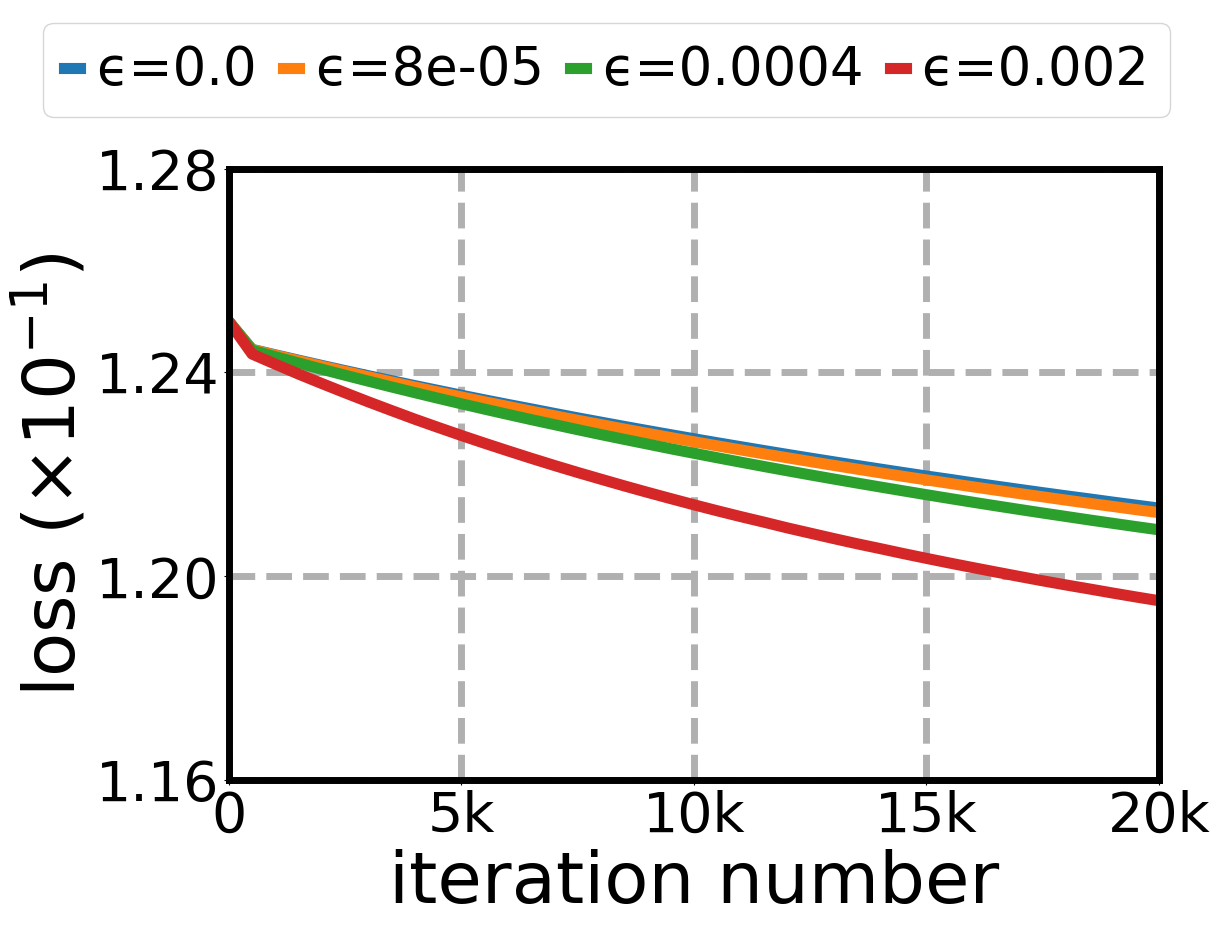} &
        \includegraphics[height=30mm]{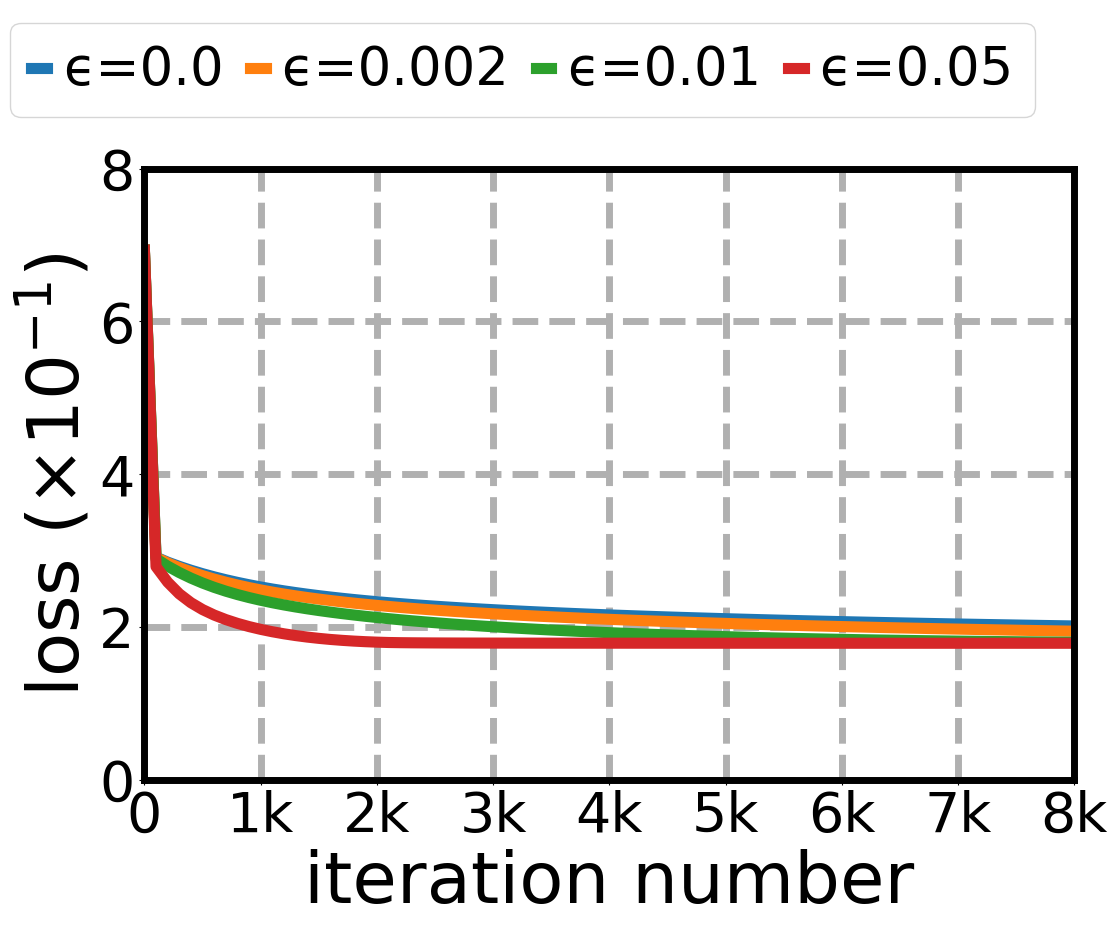} &
        \includegraphics[height=30mm]{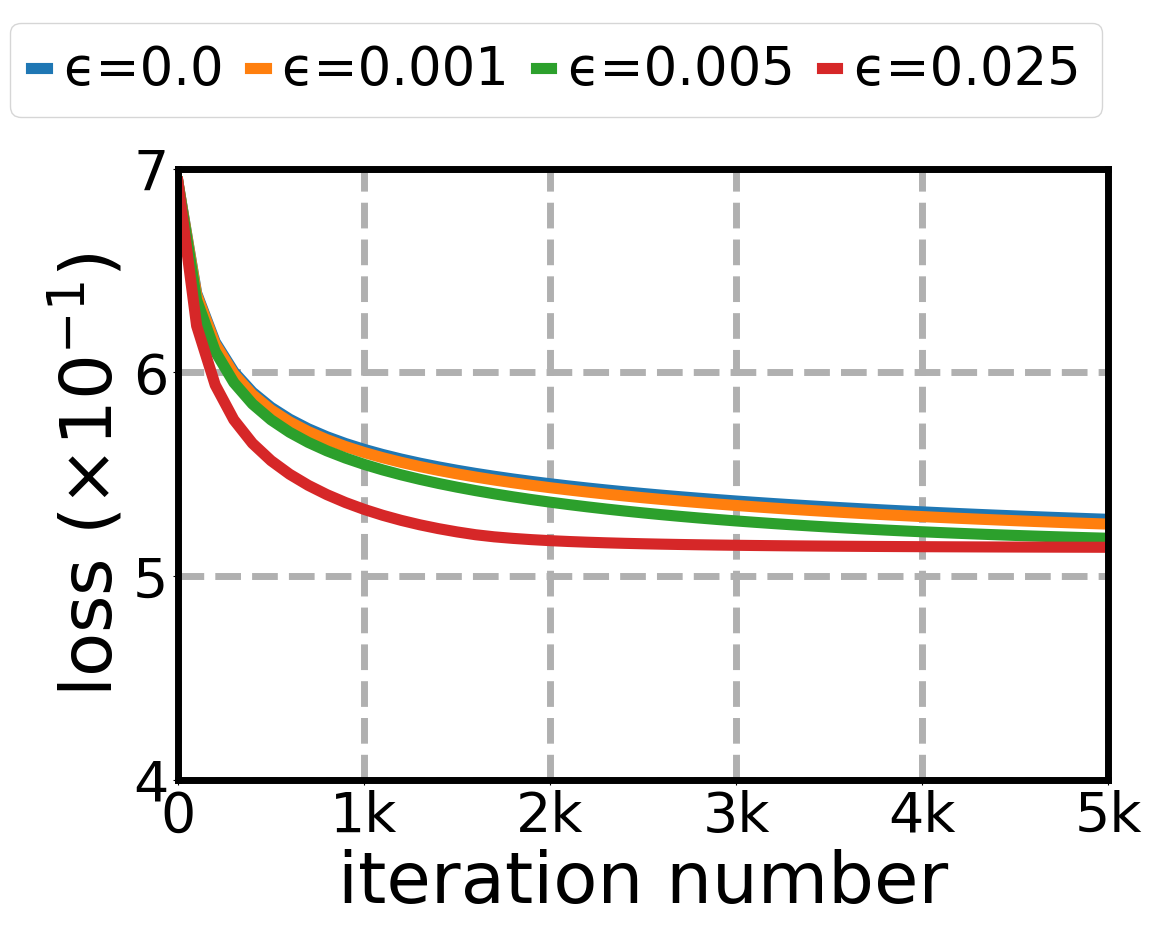} &
        \includegraphics[height=30mm]{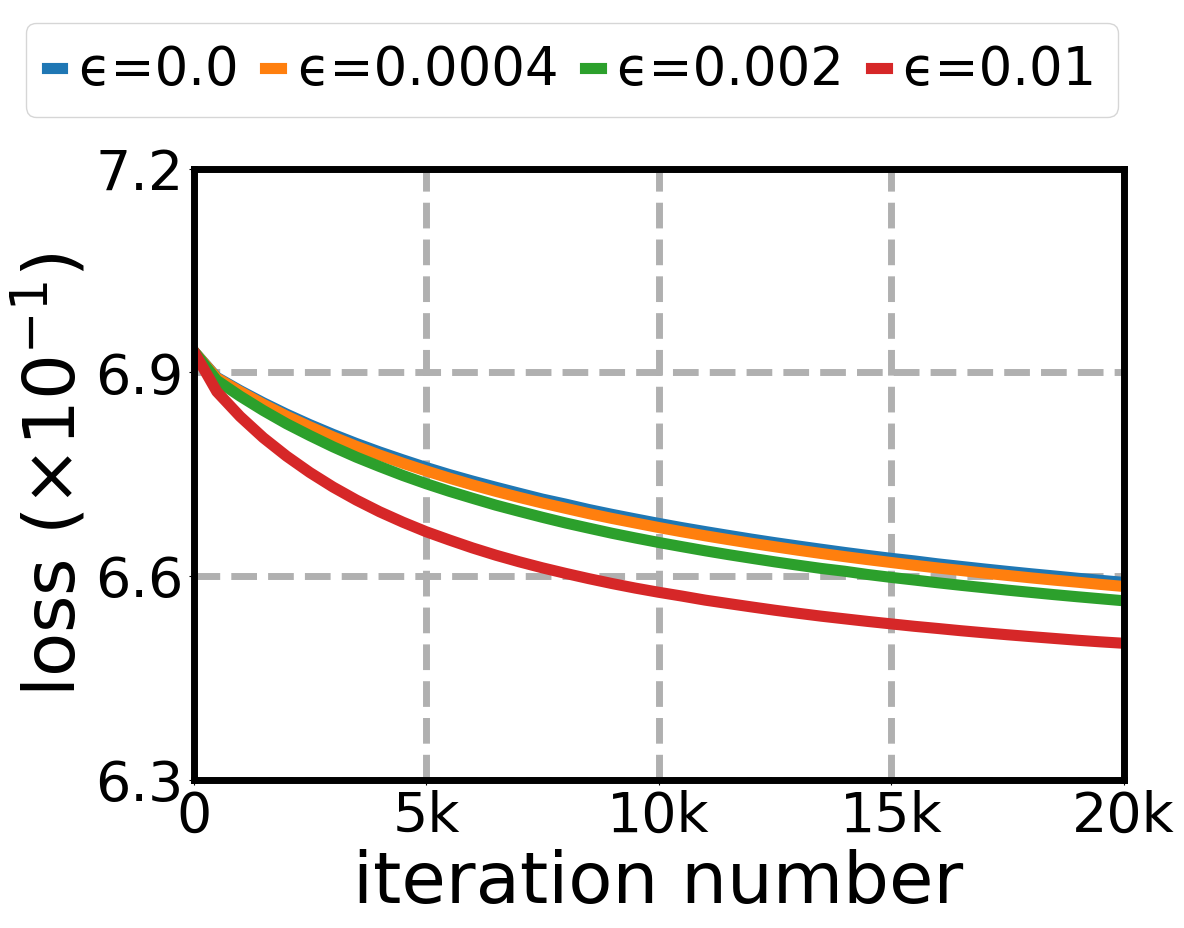}
        \\
        (g) amp-ijcnn1-rr &
        (h) amp-covtype-rr &
        (i) amp-HIGGS-rr &
        (j) amp-ijcnn1-bce &
        (k) amp-covtype-bce &
        (l) amp-HIGGS-bce \\
        \includegraphics[height=30mm]{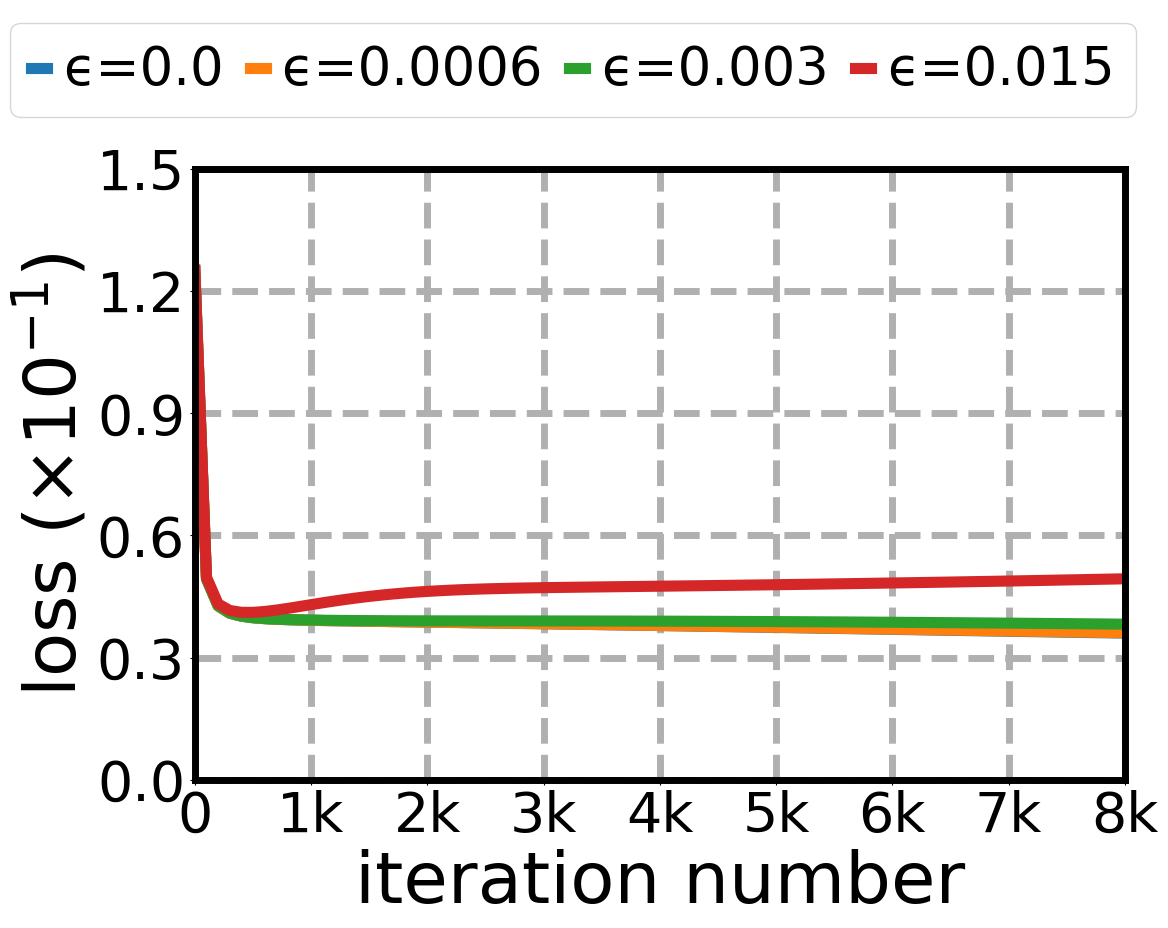} &
        \includegraphics[height=30mm]{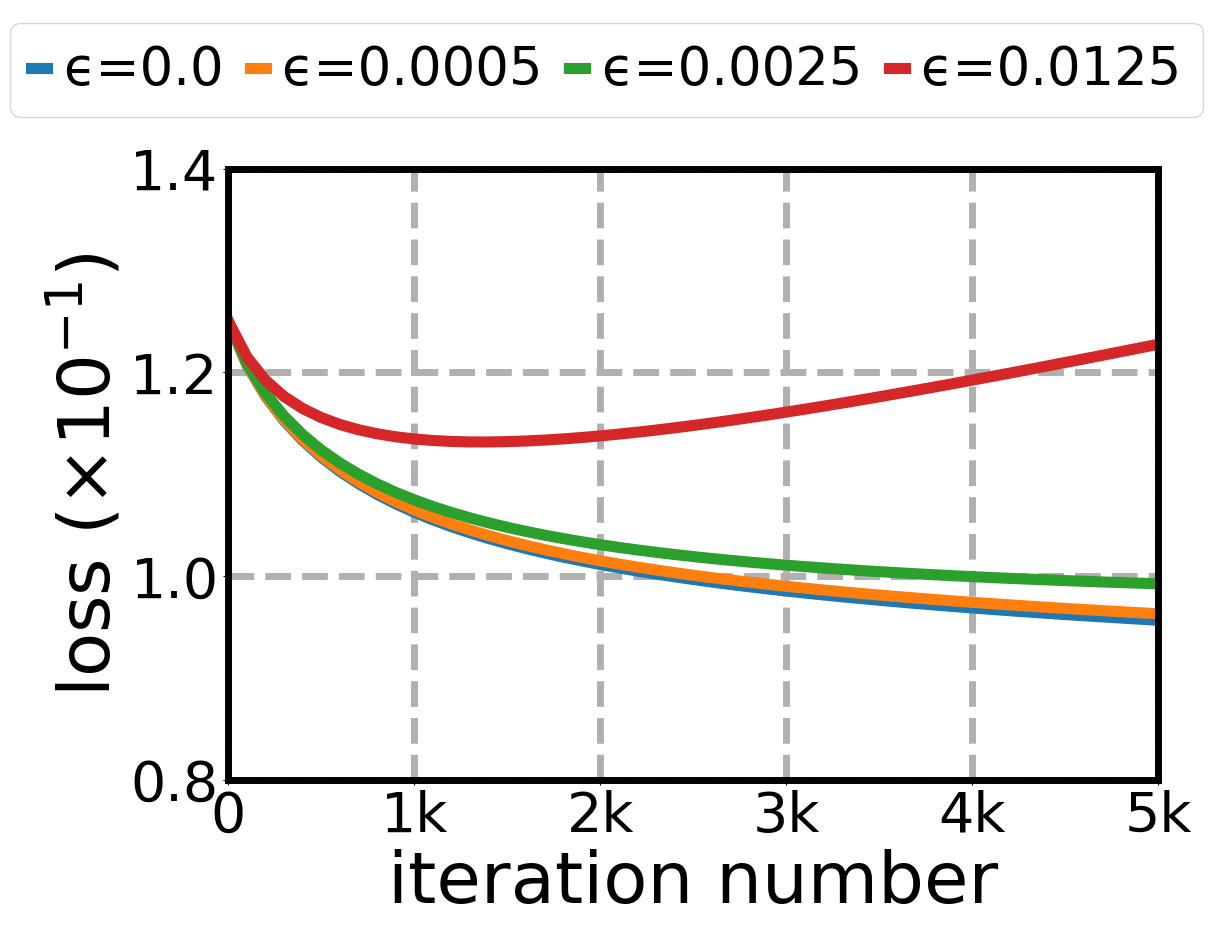} &
        \includegraphics[height=30mm]{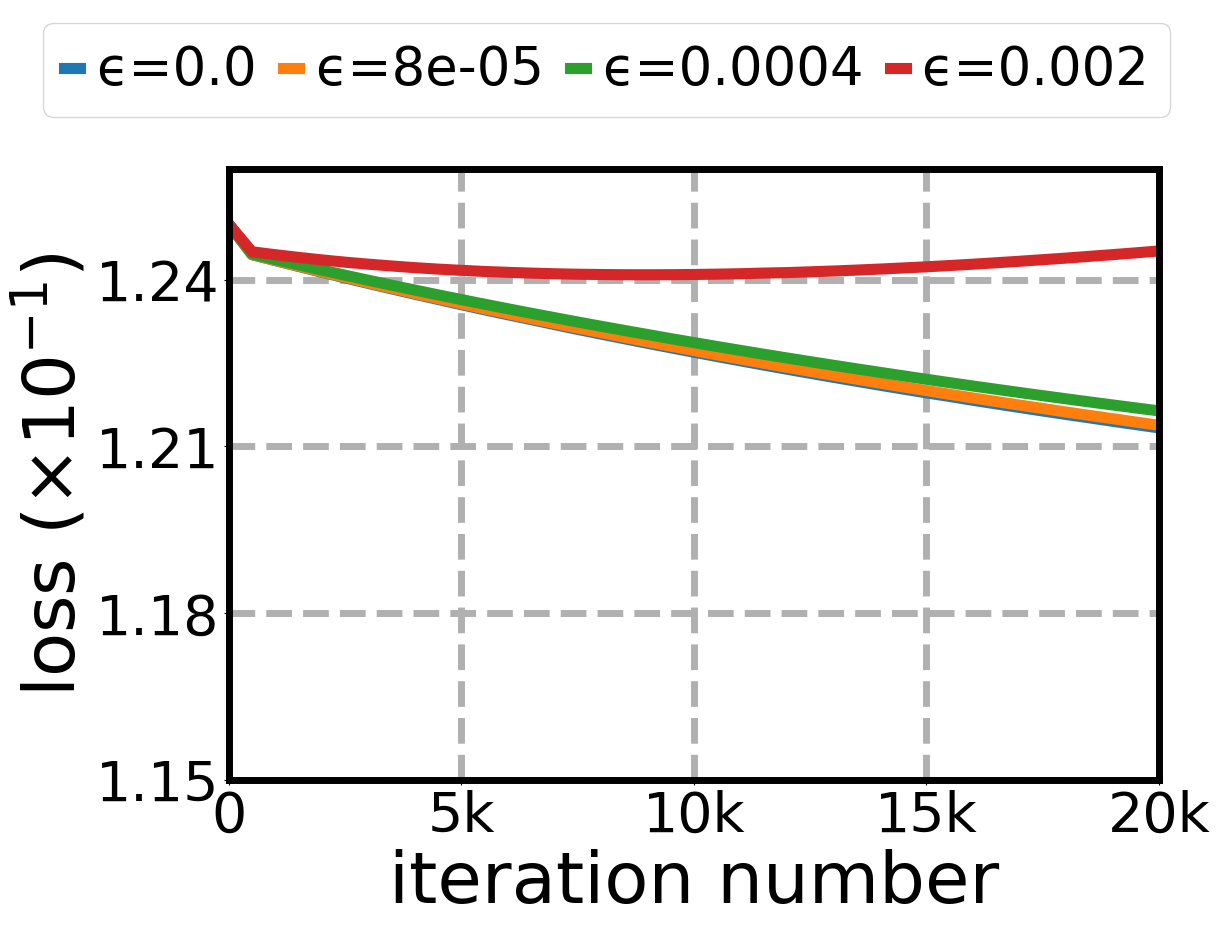} &
        \includegraphics[height=30mm]{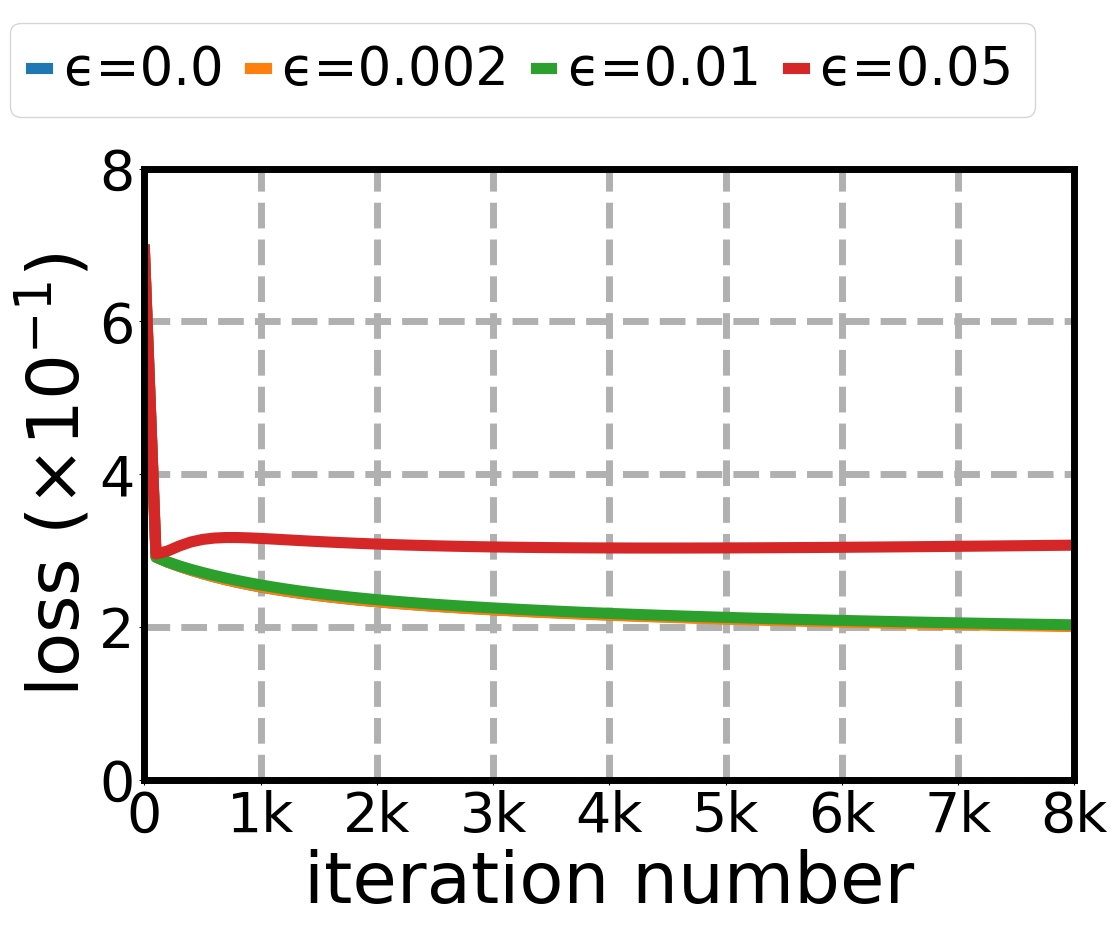} &
        \includegraphics[height=30mm]{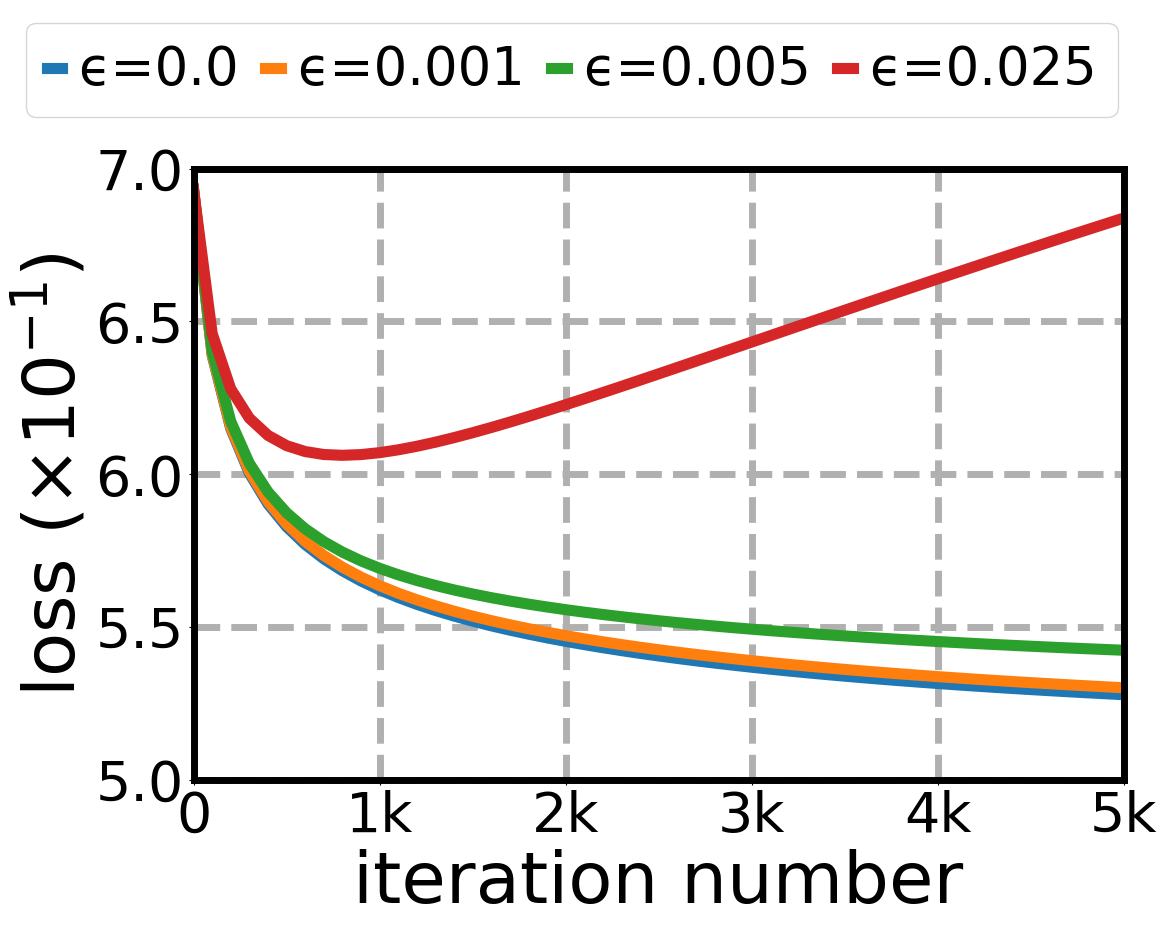} &
        \includegraphics[height=30mm]{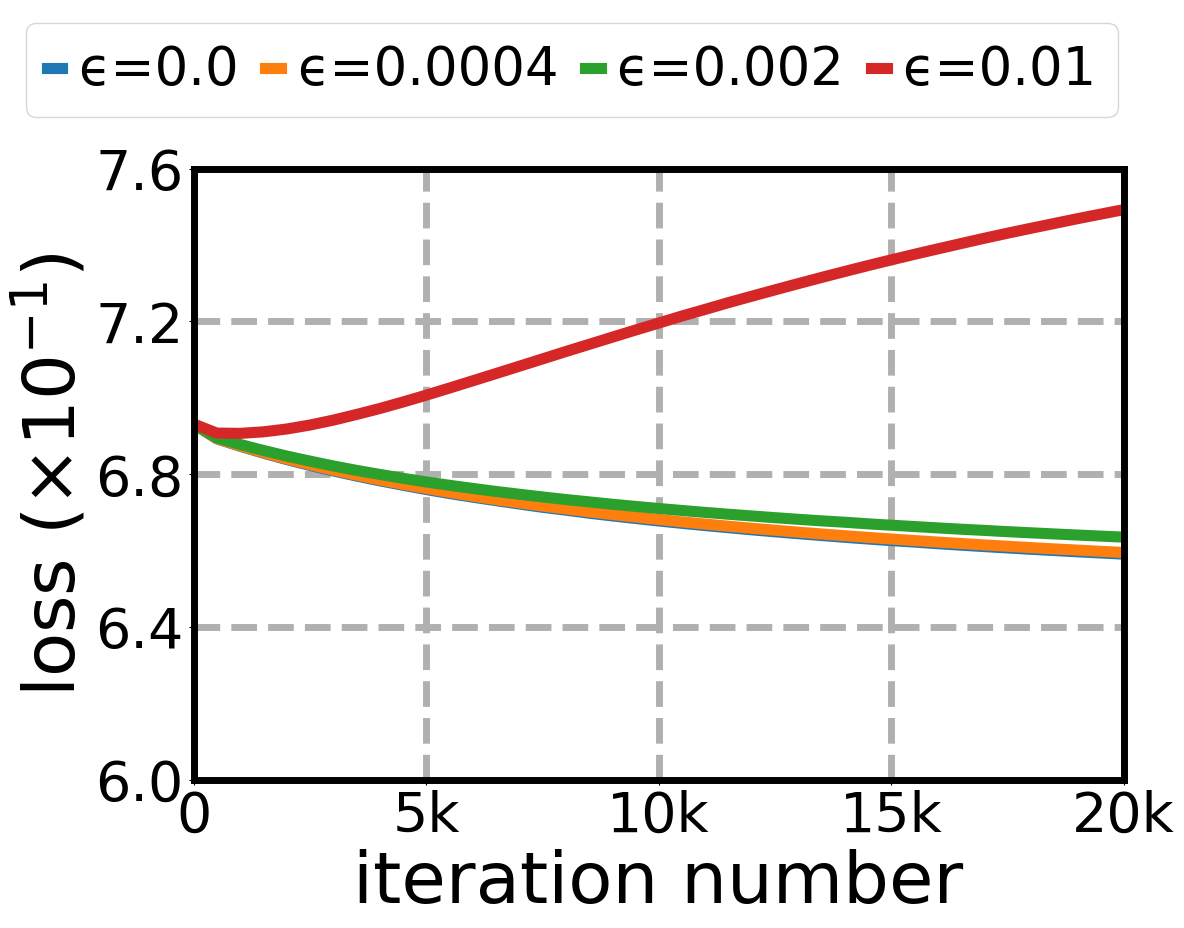}
        \\
        (m) fixed-ijcnn1-rr &
        (n) fixed-covtype-rr &
        (o) fixed-HIGGS-rr &
        (p) fixed-ijcnn1-bce &
        (q) fixed-covtype-bce &
        (r) fixed-HIGGS-bce
    \end{tabular}}
    \figcapup
    \caption{Behavior of optimization under an $\eps$-AGP oracle}
    \label{exp:agpopt}
    \figcapdown
\end{figure*}

We used three real-world datasets from the LIBSVM library\footnote{\url{Https://www.csie.ntu.edu.tw/~cjlin/libsvm}}: ijcnn1, covtype, and HIGGS; see Appendix~\ref{app:exp} for details on the normalization and cleansing steps taken. Each dataset consists of $n$ pairs $(x_i, y_i)$ ($i \in [n]$), where $x_i \in \real^d$ is a feature vector and $y_i \in \set{0, 1}$ is a binary label. The values of $(n, d)$ for ijcnn1, covtype, and HIGGS are (141691, 22 + 1), (581012, 54 + 1), and (11m, 24 + 1), respectively; see Appendix~\ref{app:exp} for the meaning of ``+1''. Each pair defines a local loss function $\ell_i(w)$ where $w \in \real^d$. The goal of optimization is to minimize the function $f(w)$ defined in \eqref{eqn:intro:f-loss-sum}. We employed two loss functions for each $i \in [n]$: (i) {\em robust regression} (RR), $\ell_i(w) = (\sigma(\dotpair{w}{x_i}) - y_i)^2$, where $\sigma : \real \rightarrow (0, 1)$ is defined as $\sigma(z) = 1/(1+e^{-z})$; and (ii) {\em binary cross-entropy} (BCE), $\ell_i(w) =$ $-y_i \log(\sigma(\dotpair{w}{x_i})) - (1-y_i) \log (1 - \sigma(\dotpair{w}{x_i}))$. Each $\ell_i$ is a convex and $L_i$-smooth function, where the value of $L_i$ was calculated based on $(x_i, y_i)$. The value $L$ (in our problem definition) was determined as $\max_{i=1}^n L_i$.

\extraspacing {\bf Optimization under AGP Oracles.} The first set of experiments studies the convergence behavior of the \ttt{AGP-opt} algorithm in Section~\ref{sec:apg-opt:alg}. We examined three $\eps$-AGP oracles:
\myitems{
    \item {{\em Opposing:}} the oracle perturbs $\nabla f(w)$ by  distance $\eps$ toward the direction {\em opposite} to that of $\nabla f(w)$;

    \item {{\em Amplifying:}} the oracle perturbs $\nabla f(w)$ by  distance $\eps$ in the same direction as $\nabla f(w)$;

    \item {{\em Fixed-Direction:}} the oracle perturbs $\nabla f(w)$ by distance $\eps$ toward a fixed direction, chosen to be the negative direction of the first dimension of $\real^d$.
}

Figure~\ref{exp:agpopt} plots the $f(w_k)$ value (labeled as ``loss'' in the diagrams) as a function of $k$ (the current iteration number) for four $\eps$ values of different magnitudes: $\eps = 0$ represents conventional gradient descent, and the other three values form a geometric progression. The caption of each diagram indicates the oracle type, dataset, and local loss functions; e.g., opp-ijcnn1-rr means that oracle is opposing, the dataset is ijcnn1, and all $\ell_1, \ell_2, ..., \ell_n$ adopt the RR function. We disabled the ``early termination'' condition at Line 4 of \ttt{AGP-opt}. In theory, the condition is needed because when $\norm{f(w)}$ drops to near $\eps$, the perturbation of an $\eps$-AGP oracle becomes significant and, in the worst case, can be made to harm optimization (as analyzed in Section~\ref{app:simple-alg}, the provable sub-optimality gap is higher without such a condition). In practice, however, such worst-case scenarios rarely happen, thus allowing further optimization even beyond the moment of early termination.

\vgap

The behavior of an opposing oracle in Figures~\ref{exp:agpopt}(a)-(f) contrasts that of an amplifying oracle in Figures~\ref{exp:agpopt}(g)-(l). In essence, an opposing oracle suppresses the ``step length'' of optimization, while an amplifying oracle strengthens it. Under an opposing oracle, higher $\eps$ values incurred larger sub-optimality gaps, but the opposite was observed under an amplifying oracle. This seemingly surprising phenomenon is in fact consistent with the understanding that a step length chosen by theoretical analysis --- $\fr{1}{2L} \ora(w_k)$ for \ttt{AGP-opt} --- is often conservatively small, such that an amplifying oracle ``happened to'' perform optimization at a better rate.

\vgap

Figures~\ref{exp:agpopt}(m)-(r) show that, under a fixed-direction oracle, the red curve --- representing a relatively large $\eps$ value --- exhibited a V-shape. To explain why, first note that at the early stages of optimization, the real gradient $\nabla f(w_k)$ has a large norm such that perturbation has little effect. As the algorithm approaches the optimal vector $w^*$, the magnitude of $\norm{\nabla f(w_k)}$ diminishes, i.e., only small updates are intended. In this case,  perturbation dictates the update direction, pulling the algorithm past $w^*$ and thus causing the loss to go up. Eventually, the optimization converges toward the $w$ where $\nabla f(w) = (\eps, 0, 0, ..., 0)^\textrm{T}$. In fact, for small $\eps$ values such V-shapes also exist but they are difficult to observe.

\extraspacing {\bf Query Budget Allocation in Distributed Learning.} The second set of experiments investigates  distributed learning under a specific query budget $Q$, i.e., the number of queries that the server can afford to issue. We consider that the server allocates the budget as follows. First, it chooses a parameter $K \le Q$, which is the {\em total} number of iterations of \ttt{AGP-opt} (without early termination) to perform. Then, in each iteration, it issues $m = Q/K$ queries to compute $\ora(w_k)$ as in \eqref{eqn:dl:prob2-ora}. There is a tradeoff: $K$ needs to be sufficiently large for \ttt{AGP-opt} to do enough iterations to approach $w^*$, but increasing $K$ reduces $m$, which elevates the uncertainty in center estimation (see Lemma~\ref{lmm:dl:center-est}). Our purpose is to provide a guideline for choosing $K$ properly.

\vgap

\begin{figure*}
    \centering
    \resizebox{\textwidth}{!}{%
    \begin{tabular}{ccc}
        \includegraphics[height=30mm]{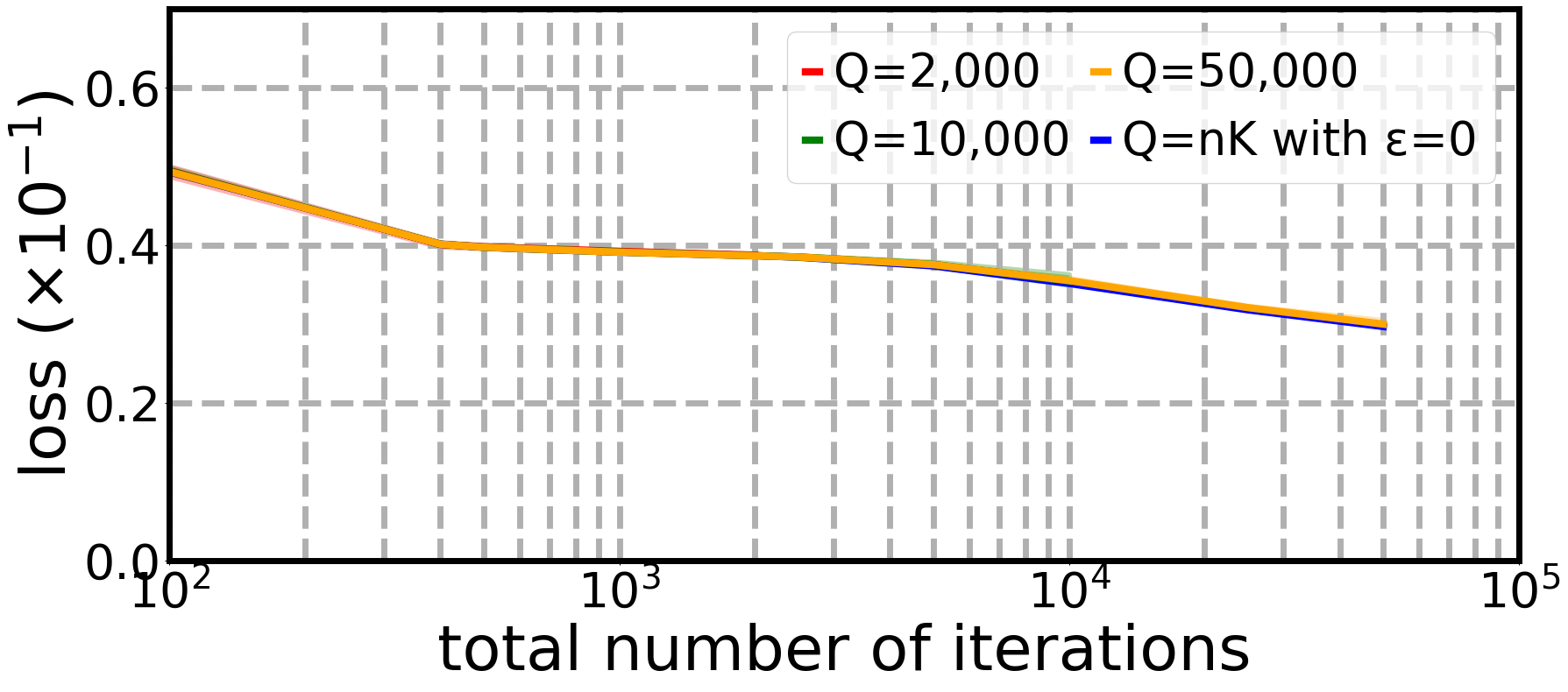} &
        \includegraphics[height=30mm]{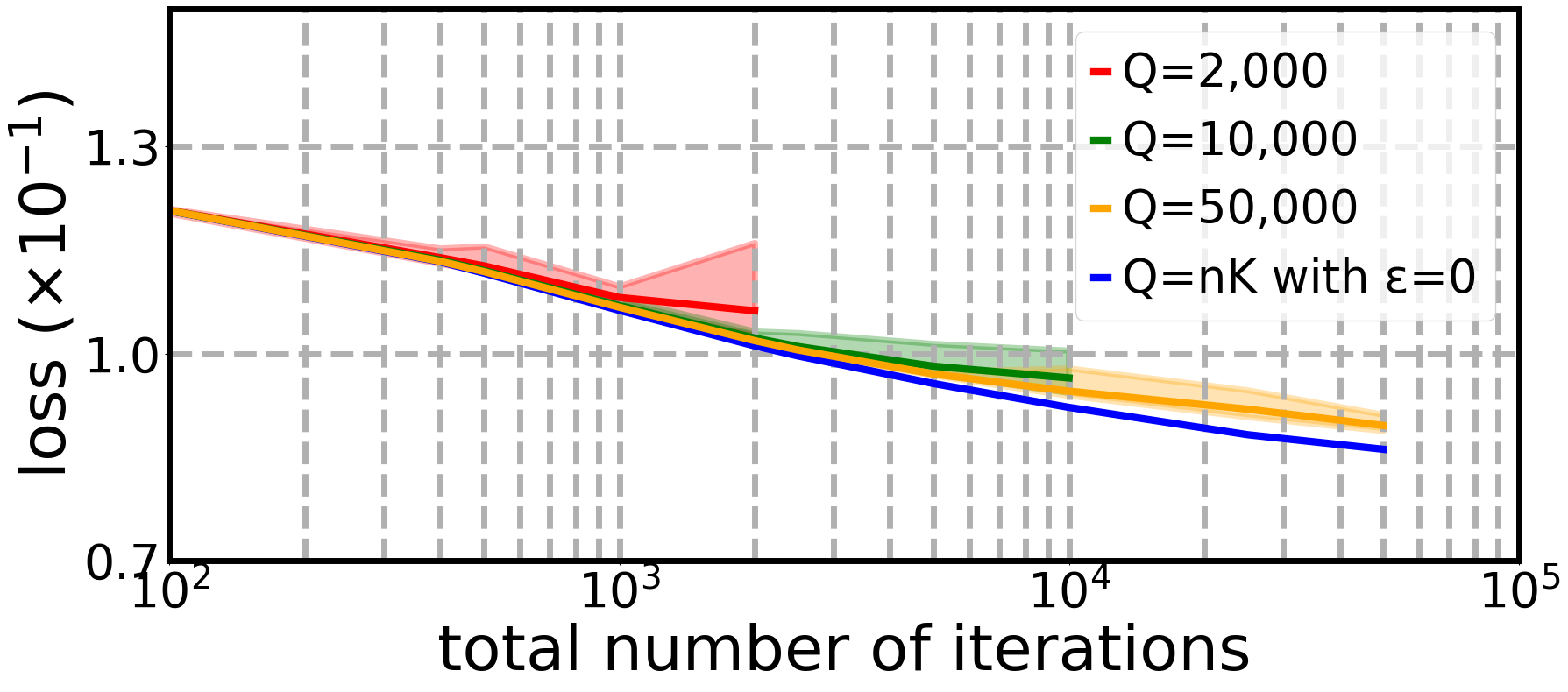} &
        \includegraphics[height=30mm]{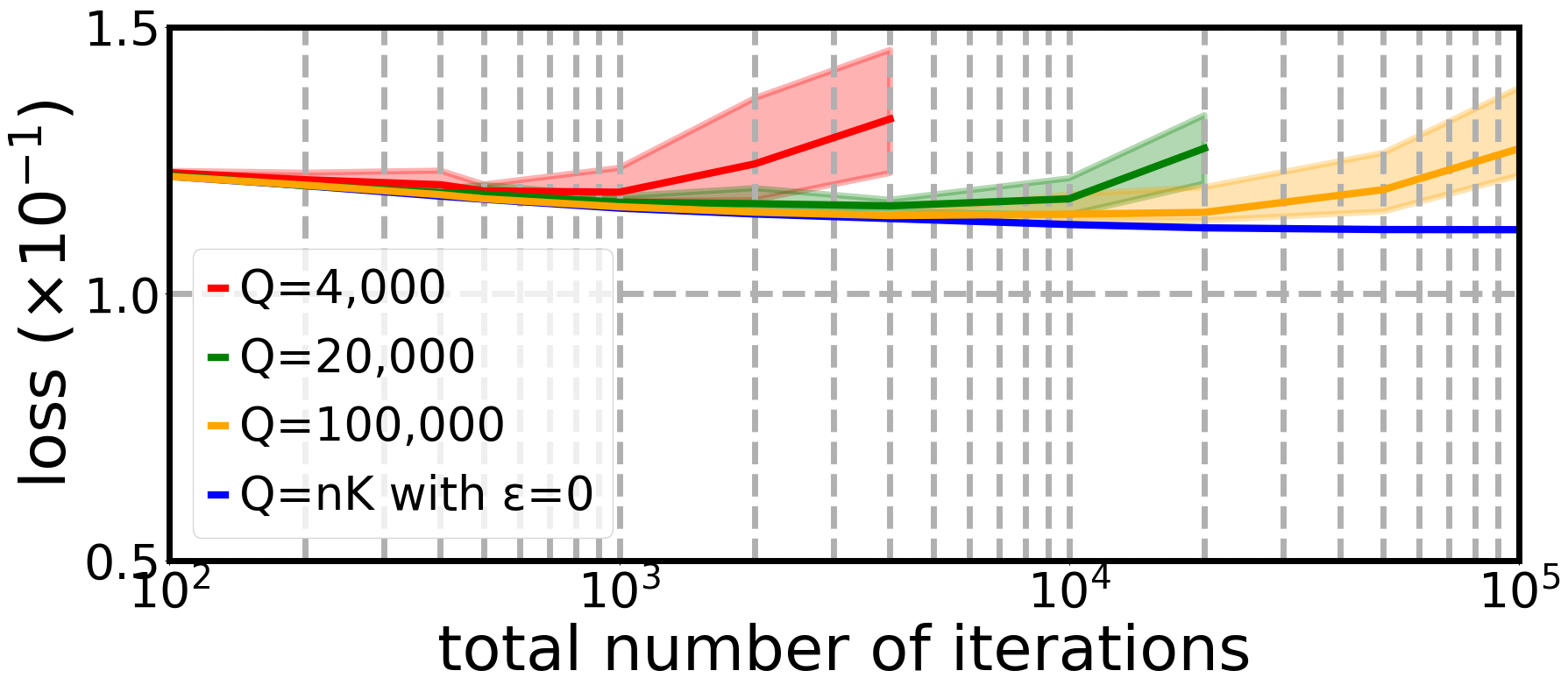} \\
        (a) ijcnn1-rr ($\eps = 0.003$) &
        (b) covtype-rr ($\eps = 0.0025$) &
        (c) HIGGS-rr  ($\eps = 0.0004$) \\
        \includegraphics[height=30mm]{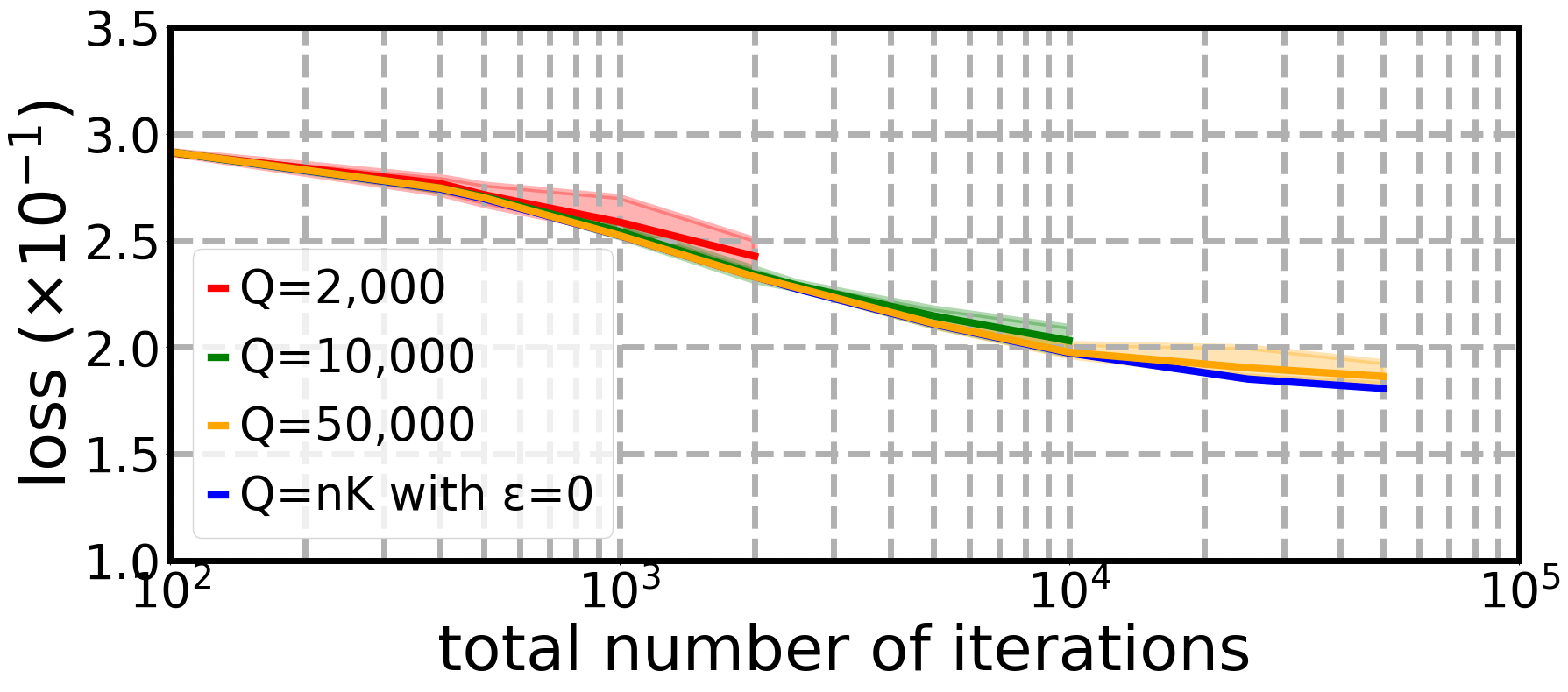} &
        \includegraphics[height=30mm]{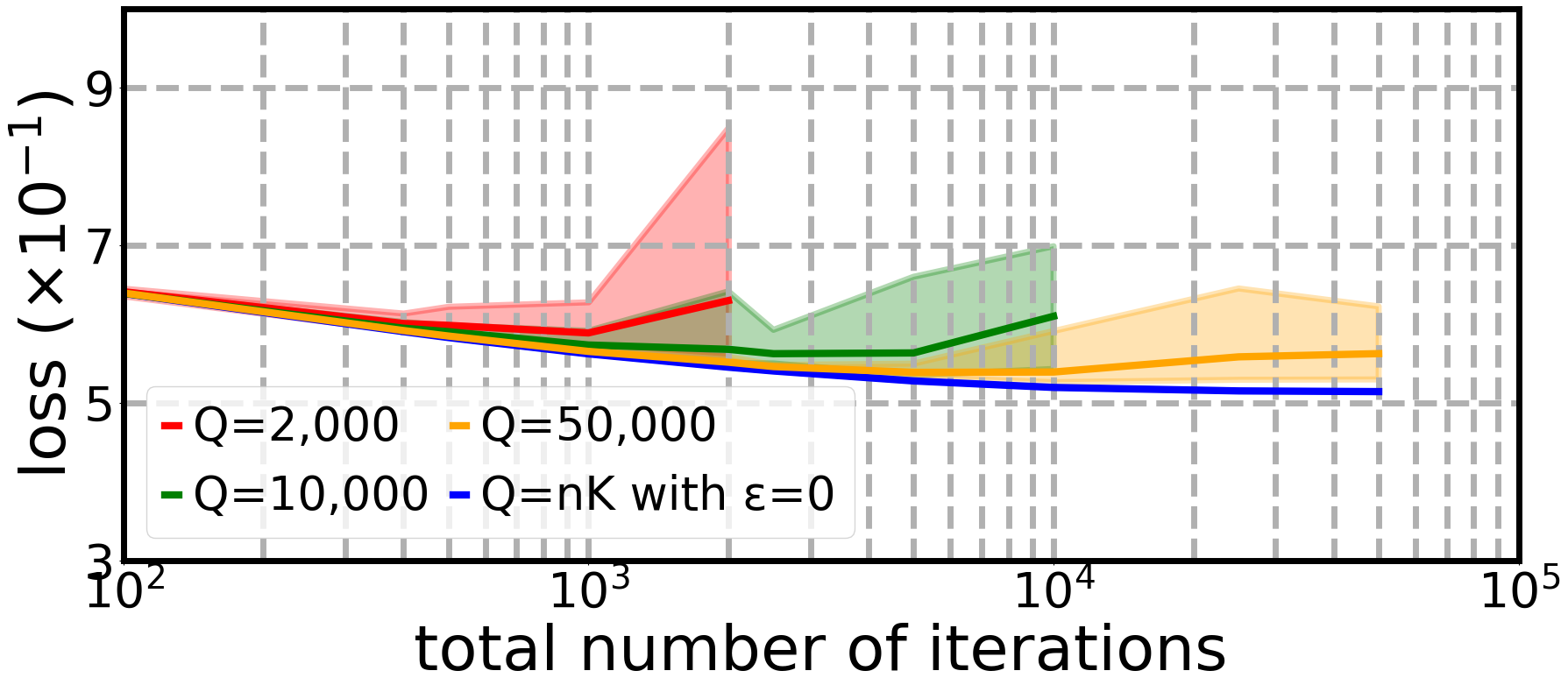} &
        \includegraphics[height=30mm]{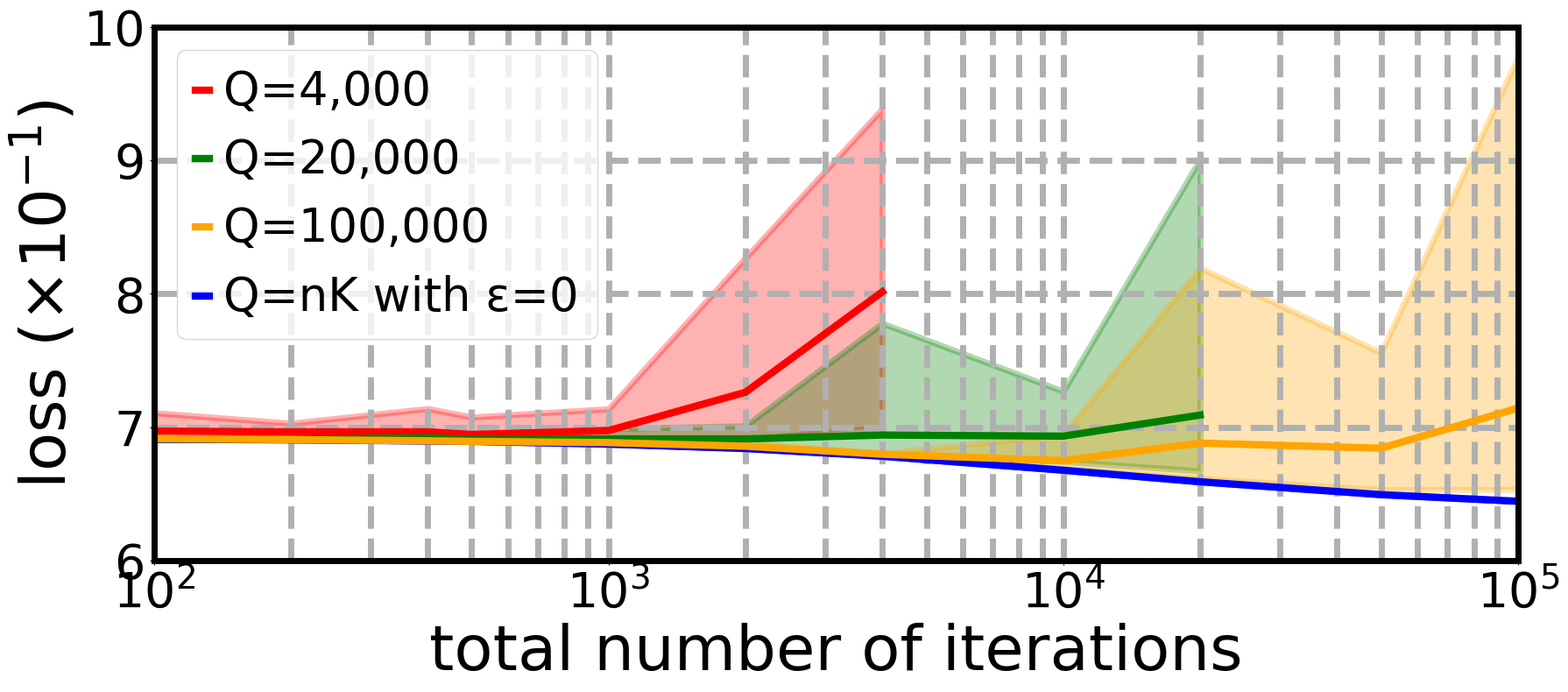}
        \\
        (d) ijcnn1-bce ($\eps = 0.01$) &
        (e) covtype-bce ($\eps = 0.005$) &
        (f) HIGGS-bce ($\eps = 0.002$)
    \end{tabular}}
    \figcapup
    \caption{Effects of query allocation in distributed learning}
    \label{exp:dl}
    \figcapdown
\end{figure*}

Figure~\ref{exp:dl} plots, as a function of $K$, the final outcome of optimization --- namely, the value $f(w_K)$ obtained by \ttt{AGP-opt} after $K$ iterations; we will refer to $f(w_K)$ as the {\em final loss}. The caption under a diagram indicates the dataset, local loss functions, and $\eps$ value deployed. In these experiments, when queried with $w$, a client $i \in [n]$ perturbs $\nabla \ell_i(w)$ using one of the three strategies described earlier --- opposing, amplifying, or fixed-direction perturbation --- with equal probability. Each diagram includes 4 curves: the orange, green, and red curves are obtained with geometrically-increasing values of $Q$, as well as a blue curve, marked as ``$Q = nK$ with $\eps = 0$'', corresponding to the ``unrealistic'' method where the server simply queries every client in each iteration and each client performs no perturbation at all. The blue curve serves as a reference for the best achievable final loss at each $K$. To account for the randomness in the strategy described in Section~\ref{sec:dl:q2}, we use shaded regions to report the range of final loss values across multiple runs.

\vgap

Evidently, $K$ must not be too small: optimization through gradient-based adjustments {\em does} require many steps. In Figures~\ref{exp:dl}(a) and \ref{exp:dl}(d), the best $K$ turned out to be $Q$, i.e., only one query was issued per iteration. However, this is not representative: in Figures~\ref{exp:dl}(b), \ref{exp:dl}(c), \ref{exp:dl}(e), and \ref{exp:dl}(f), the shaded regions start to grow considerably after $Q/K$ has dropped below a certain threshold. On the other hand, when $Q/K \ge 100$, shaded regions are hardly noticeable in all experiments. Furthermore, at the same $K$, as long as $Q/K \ge 100$, increasing $Q$ provided little help in lowering the final loss value. The above observations suggest that $m = 100$ queries per iteration are adequate for reliable center estimation, because of which $K = Q / 100$ is a robust choice in reality.


\section{Conclusions} \label{sec:conclude}

We have presented a systematic study of distributed optimization under adversarial gradient perturbations, a setting in which each client's gradient reply to a server query may arbitrarily deviate from the true gradient, subject only to a known distortion bound. Our work addresses two fundamental questions: (i) what level of optimization accuracy can be guaranteed in the presence of such adversarial replies, and (ii) how many gradient queries are sufficient to reach a given accuracy level. We establish sharp feasibility thresholds on achievable optimization accuracy, showing  a strict regime in which no algorithm can guarantee meaningful progress, as well as a complementary regime in which reliable optimization is always possible. For the attainable regime, we develop algorithms that provably reach the target accuracy using a bounded number of queries. Future directions include proving tight bounds on the query complexity and extending this framework to richer function classes.


\bibliographystyle{plain}
\bibliography{ref}

\appendix

\section*{Appendix}
\section{Completing the Proofs of Theorems~\ref{thm:agp-opt:impossible-no-bound} and \ref{thm:apg-opt:gap}} \label{app:proof:apg-opt:gap}

This section discusses the function $f$ constructed in the proof of Theorem~\ref{thm:agp-opt:impossible-no-bound} and the functions $f_1, f_2$ in the proof of Theorem~\ref{thm:apg-opt:gap}. We will focus on $f_1$ and $f_2$ because $f$ is a special case of $f_1$ with $\eps$ set to $3\tau$.

\extraspacing {\bf Convexity and Smoothness.} We will first show that $f_1$ and $f_2$, given in \eqref{eqn:agp-opt:gap:f1} and \eqref{eqn:agp-opt:gap:f2}, are convex and $L$-smooth. In fact, since $f_2$ is the reflection of $f_1$ across the y-axis and this transformation preserves convexity and smoothness, it suffices to prove the convexity and smoothness of only $f_1$.

\vgap

The derivative of $f_1$ is
\myeqn{
    f_1'(x) =
    \left\{
    \begin{tabular}{ll}
        0 & if $x < -R$ \\
        $L(x + R)$ & if $-R \le x \le -R + \eps/L$ \\
        $\eps$ & if $x > -R + \eps/L$
    \end{tabular}
    \right.
    \label{eqn:app:proof:apg-opt:gap:2}
}
Clearly, $f_1'(x)$ is non-decreasing, implying that $f_1$ is convex.

\vgap

For $\forall x,y \in \real$, next we will prove
    \myeqn{ \label{def:smooth}
        |f_1'(x) - f_1'(y)| \leq L \cdot |x - y| \label{eqn:app:proof:apg-opt:gap:1}
    }
which asserts that $f_1$ is $L$-smooth. Assume, w.l.o.g., that $x \leq y$. The subsequent discussion proceeds in four cases.

\vgap

    \textbf{Case 1:} $y < -R$ or $-R \le x \le y \le -R + \eps/L$ or $x > -R + \eps/L$. In this case, $f_1(x)$ and $f_1(y)$ are in the same ``if-branch'' of \eqref{eqn:agp-opt:gap:f1}. The function in each if-branch of \eqref{eqn:agp-opt:gap:f1} is $L$-smooth. Hence, \eqref{eqn:app:proof:apg-opt:gap:1} holds by definition of $L$-smoothness.

    \vgap

    \textbf{Case 2:} $x < -R$ and $-R \le y \le -R + \eps/L$.
    \myeqn{
        |f'(x) - f'(y)| &=& |f'(-R) - f'(y)| \nn \\
        \explain{corollary of Case 1}
        &\leq& L \cdot |(-R) - y| \nn \\
        &\leq& L \cdot |x - y|. \nn
    }

    \vslit

    \textbf{Case 3:} $-R \le x \le -R + \eps/L$ and $y >-R + \eps/L$.
    \myeqn{
        |f'(x) - f'(y)| &=& |f'(x) - f'(-R + \eps / L)| \nn \\
        \explain{corollary of Case 1}
        &\leq& L \cdot |x - (-R + \eps/ L)| \nn \\
        &\leq& L \cdot |x - y|. \nn
    }

    \vslit

    \textbf{Case 4:} $x < -R$ and $y > -R + \eps/L$.
    \myeqn{
        |f'(x) - f'(y)| &=& |f'(-R) - f'(-R + \eps / L)| \nn \\
        \explain{corollary of Case 1}
        &\leq& L \cdot |-R - (-R + \eps/ L)| \nn \\
        &\leq& L \cdot |x - y|. \nn
    }
    This completes the proof of \eqref{eqn:app:proof:apg-opt:gap:1}.

\extraspacing {\bf Monotonicity.} Function $f_1$ (resp., $f_2$) is ascending (resp., descending) because $f'_1(x) \ge 0$ (resp., $f'_2(x) \le 0$) for all $x \in \real$.

\extraspacing {\bf Gradient Norms.} $\norm{\nabla f_1(x)}$ equals $|f'_1(x)|$ where $f'_1(x)$ is given in \eqref{eqn:app:proof:apg-opt:gap:2}. It is easy to verify that $|f'_1(x)| \le \eps$ for all $x \in \real$. For any $x \in \real$, it holds that $\norm{\nabla f_2(x)} = \norm{\nabla f_1(-x)}$, which, as just proved, is at most $\eps$.

\section{Proof of Lemma~\ref{lmm:apg-opt:alg}} \label{app:lmm:apg-opt:alg}

\subsection{Proof of \eqref{eqn:agp-opt:alg:gap}}

Recall that $
    K'$ (see \eqref{eqn:agp-opt:K'} is the value of $k$ when \ttt{AGP-opt} terminates.
That is, the algorithm outputs $w_{K'}$; and our goal is to prove that $f(w_{K'}) - f(w^*)$ is bounded by \eqref{eqn:agp-opt:alg:gap}.

\vgap

Also, define
\myeqn{
    \hspace{-8mm}
    && K'' = \left\{
    \begin{tabular}{ll}
        0 & if $f(w_1) - f(w^*) < \eps \norm{w^*}$ \\
        the largest $k \in [1, K']$ s.t. $f(w_k) - f(w^*) \ge \eps \norm{w^*}$ & otherwise \\
    \end{tabular}
    \right.
    \label{eqn:apg-opt:K-prime-prime}
}

\vgap

We will prove the next two lemmas later.

\vgap

\begin{lemma} \label{lmm:apg-opt:alg:1}
    $f(w_{k + 1}) < f(w_k)$ for every $k \in [0, K'-1]$.
\end{lemma}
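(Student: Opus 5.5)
We plan to prove Lemma~\ref{lmm:apg-opt:alg:1} with the standard descent lemma for $L$-smooth functions, plus the fact that for $k \le K'-1$ the early-termination test at Line 4 did not fire, so $\norm{g_k} \ge 4\eps$. Fix $k \in [0, K'-1]$ and write $\nabla_k = \nabla f(w_k)$. Since the algorithm moved on to iteration $k+1$, Line 5 set $w_{k+1} = w_k - \fr{1}{2L} g_k$ with $\norm{g_k - \nabla_k} \le \eps$ and $\norm{g_k} \ge 4\eps$.

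By $L$-smoothness,
\[
f(w_{k+1}) \le f(w_k) - \fr{1}{2L}\dotpair{\nabla_k}{g_k} + \fr{L}{2}\cdot\fr{1}{4L^2}\norm{g_k}^2
= f(w_k) - \fr{1}{2L}\dotpair{\nabla_k}{g_k} + \fr{1}{8L}\norm{g_k}^2 .
\]
Next we write $\nabla_k = g_k + e_k$ with $\norm{e_k} \le \eps$. Then
\[
\dotpair{\nabla_k}{g_k} = \norm{g_k}^2 + \dotpair{e_k}{g_k} \ge \norm{g_k}^2 - \eps\norm{g_k}.
\]
Substituting gives
\[
f(w_{k+1}) \le f(w_k) - \fr{1}{2L}\norm{g_k}^2 + \fr{\eps}{2L}\norm{g_k} + \fr{1}{8L}\norm{g_k}^2
= f(w_k) - \fr{\norm{g_k}}{8L}\big(3\norm{g_k} - 4\eps\big).
\]

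The only delicate point is strictness, since the bracket must be strictly positive. Using $\norm{g_k} \ge 4\eps$ gives $3\norm{g_k} - 4\eps \ge 8\eps$. For $\eps > 0$ this is positive and $\norm{g_k} > 0$, so the decrease is strict.

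The degenerate case $\eps = 0$ needs separate handling, because then the test at Line 4 is $\norm{g_k} < 0$, which never fires. Here $g_k = \nabla_k$, and the bound becomes $f(w_{k+1}) \le f(w_k) - \fr{3}{8L}\norm{\nabla_k}^2$. This is strict whenever $\nabla_k \neq \bm{0}$. If $\nabla_k = \bm{0}$, then $w_{k+1} = w_k$ and equality holds, so we will either read the lemma with $\eps>0$ as in the paper's setting or note that the strict version needs $\nabla_k \ne \bm{0}$. I expect this boundary case, not the main calculation, to be the only obstacle.
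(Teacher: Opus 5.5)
Your proof is correct and is essentially the paper's argument: both apply the descent lemma for $L$-smooth $f$, split $\nabla f(w_k) = g_k + (\nabla f(w_k) - g_k)$ and bound the error term by $\eps\norm{g_k}$, reach the same bound $f(w_{k+1}) - f(w_k) \le -\fr{\norm{g_k}}{8L}(3\norm{g_k} - 4\eps)$, and conclude using $\norm{g_k} \ge 4\eps$. Your $\eps = 0$ caveat is also valid. The paper's step $\eps - 3\eps < 0$ silently assumes $\eps > 0$, and when $\eps = 0$ with some $w_k$ an exact stationary point the strict inequality genuinely fails; however, only the non-strict version $f(w_{k+1}) \le f(w_k)$ is used later, so nothing downstream breaks.
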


\vgap

\begin{lemma} \label{lmm:apt-opt:alg:3}
    If $K'' > 0$, then
    \myeqn{
        \norm{w_{K''} - w^*} \leq \norm{w_{K''-1} - w^*} \leq ... \leq \norm{w_0 - w^*} = \norm{w^*}. \nn
    }
\end{lemma}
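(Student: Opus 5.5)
The plan is to prove the chain of inequalities one step at a time by induction on $k=0,1,\dots,K''-1$. At each step we show $\norm{w_{k+1}-w^*}\le\norm{w_k-w^*}$. The inductive hypothesis we carry is $\norm{w_k-w^*}\le\norm{w_0-w^*}=\norm{w^*}$, which holds trivially at $k=0$ because $w_0=\bm{0}$.

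First, two facts about any index $k\le K''-1$.
\begin{itemize}
\item Since $K''\le K'$, the algorithm did not terminate at Line 4 in iteration $k$. Hence $\norm{g_k}\ge 4\eps$.
\item By Lemma~\ref{lmm:apg-opt:alg:1}, $f(w_k)>f(w_{K''})$. By the definition of $K''$ in \eqref{eqn:apg-opt:K-prime-prime}, $f(w_{K''})-f(w^*)\ge\eps\norm{w^*}$. Together these give $f(w_k)-f(w^*)\ge\eps\norm{w^*}$.
\end{itemize}

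Next, expand the update $w_{k+1}=w_k-\fr{1}{2L}g_k$:
$$\norm{w_{k+1}-w^*}^2=\norm{w_k-w^*}^2-\fr{1}{L}\dotpair{g_k}{w_k-w^*}+\fr{1}{4L^2}\norm{g_k}^2.$$
So it suffices to show $\dotpair{g_k}{w_k-w^*}\ge\norm{g_k}^2/(4L)$.

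To prove this, write $g_k=\nabla f(w_k)+e_k$ with $\norm{e_k}\le\eps$, and treat the two parts of the inner product separately.
\begin{itemize}
\item For the true-gradient part, use the standard inequality for convex $L$-smooth functions, $f(w^*)\ge f(w_k)+\dotpair{\nabla f(w_k)}{w^*-w_k}+\fr{1}{2L}\norm{\nabla f(w_k)}^2$. It gives $\dotpair{\nabla f(w_k)}{w_k-w^*}\ge f(w_k)-f(w^*)+\fr{1}{2L}\norm{\nabla f(w_k)}^2$.
\item For the error part, Cauchy--Schwarz and the inductive hypothesis give $\dotpair{e_k}{w_k-w^*}\ge-\eps\norm{w_k-w^*}\ge-\eps\norm{w^*}$.
\end{itemize}
The error term is then absorbed by $f(w_k)-f(w^*)\ge\eps\norm{w^*}$, which leaves $\dotpair{g_k}{w_k-w^*}\ge\fr{1}{2L}\norm{\nabla f(w_k)}^2$.

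It remains to compare $\norm{\nabla f(w_k)}$ with $\norm{g_k}$. From $\norm{g_k}\ge4\eps$ we get $\norm{\nabla f(w_k)}\ge\norm{g_k}-\eps\ge3\eps$. Therefore $\norm{g_k}\le\norm{\nabla f(w_k)}+\eps\le\fr{4}{3}\norm{\nabla f(w_k)}$. This yields $\norm{g_k}^2\le\fr{16}{9}\norm{\nabla f(w_k)}^2\le 2\norm{\nabla f(w_k)}^2$, hence $\fr{1}{2L}\norm{\nabla f(w_k)}^2\ge\fr{1}{4L}\norm{g_k}^2$, as required. The monotone step also re-establishes the inductive hypothesis for $k+1$, which closes the induction.

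The main point requiring care is the error term $\dotpair{e_k}{w_k-w^*}$. It can only be absorbed if $\norm{w_k-w^*}\le\norm{w^*}$ is already known, which is why the argument must be an induction rather than a separate bound for each step. The threshold $\eps\norm{w^*}$ in the definition of $K''$ is exactly what makes this absorption work.
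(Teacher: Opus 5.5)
Your proof is correct. The skeleton matches the paper's: induction on $k$ carrying $\norm{w_k-w^*}\le\norm{w^*}$, with the threshold $\eps\norm{w^*}$ from the definition of $K''$ absorbing the perturbation term $\eps\norm{w_k-w^*}$. What differs is how you obtain the one-step inequality.

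The paper first proves Lemma~\ref{lmm:apt-opt:alg:2} by expanding $\norm{w_k-w^*}^2$ around $w_{k+1}$. It combines first-order convexity at $w_k$ with the descent lemma to produce $f(w^*)-f(w_{k+1})$. It pays for the perturbation with $\eps\norm{w_{k+1}-w^*}\le\eps\gamma\norm{g_k}+\eps\norm{w_k-w^*}$, and uses $\norm{g_k}\ge 4\eps$ to cancel $\eps\gamma\norm{g_k}$ against $-\fr{\gamma}{4}\norm{g_k}^2$. Its induction then needs $f(w_{k+1})-f(w^*)\ge\eps\norm{w^*}$.

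You instead expand around $w_k$ and use the co-coercivity-type bound. Its extra term $\fr{1}{2L}\norm{\nabla f(w_k)}^2$ swallows $\fr{1}{4L^2}\norm{g_k}^2$ once you compare $\norm{g_k}$ with $\norm{\nabla f(w_k)}$. So you need the threshold at $f(w_k)$ rather than $f(w_{k+1})$, and you correctly get it from Lemma~\ref{lmm:apg-opt:alg:1}.

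The paper's route is more elementary, using only the descent lemma and first-order convexity. It also packages the step as a standalone inequality, which it reuses in the potential argument of Lemma~\ref{lmm:agp-opt:alg:gap:4} and, with $\eps=0$, in Lemma~\ref{lmm:app:lmm:apg-opt:alg:5:1}.

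Your route uses a slightly heavier tool, which needs convexity and $L$-smoothness on all of $\real^d$ (true here). In return it gives the sharper per-step bound $L(\norm{w_{k+1}-w^*}^2-\norm{w_k-w^*}^2)\le f(w^*)-f(w_k)+\eps\norm{w_k-w^*}$, which implies Lemma~\ref{lmm:apt-opt:alg:2} via monotonicity. Moreover, you only use $\norm{g_k}^2\le 2\norm{\nabla f(w_k)}^2$, which already holds once $\norm{g_k}\ge(2+\sqrt{2})\eps$. So your argument for this lemma would survive lowering the Line~4 stopping threshold below $4\eps$.
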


\vgap

If $K'' < K'$, then $f(w_{K''+1}) - f(w^*) < \eps \norm{w^*}$. In this case, Lemma~\ref{lmm:apg-opt:alg:1} tells us $f(w_{K'}) \le f(w_{K''+1}) < f(w^*) + \eps \norm{w^*}$, satisfying the requirement in \eqref{eqn:agp-opt:alg:gap}. The subsequent discussion will therefore focus on the case where $K'' = K'$.

\vgap

We will distinguish two cases based on how \ttt{AGP-opt} terminates.

\myitems{
    \item {\underline{\bf {\em Termination at Line 4.}}} By the convexity of $f$, we have:
\myeqn{
    f(w_{K'}) + \dotpair{\nabla f(w_{K'})}{w^* - w_{K'}}
    &\leq&
    f(w^*) \nn
}
where $\dotpair{.}{.}$ represents dot product. We can then derive
\myeqn{
    f(w_{K'}) - f(w^*)
    &\leq&
    -\dotpair{\nabla f(w_{K'})}{w^* - w_{K'}}, \nn \\
    &\leq&
    \norm{\nabla f(w_{K'})} \cdot \norm{w^* - w_{K'}} \nn \\
    \explain{by Lemma~\ref{lmm:apt-opt:alg:3}}
    &\le&
    \norm{\nabla f(w_{K'})} \cdot \norm{w^*}. \nn
}

To analyze $\norm{\nabla f(w_{K'})}$, recall that $\norm{\nabla f(w_{K'}) - \ora(w_{K'})}$ is at most $\eps$. Further recall that $g_{K'} = \ora(w_{K'})$ and that $\norm{g_{K'}} < 4\eps$ (because Algorithm 1 terminated at Line 4). Hence:
\myeqn{
    \norm{\nabla f(w_{K'})}
    \leq
    \norm{g_{K'}} + \eps <
    5\eps. \nn
}
We thus conclude $f(w_{K'}) - f(w^*)
    \leq
    5\eps \norm{w^*}$, which is bounded by \eqref{eqn:agp-opt:alg:gap}.

    \vgap

    \item {\underline{\bf {\em Termination at Line 7.}}} For each $k \in [0, K']$, define
    \myeqn{
        E_k = L \cdot \norm{w_k - w^*}^2 + k(f(w_k) - f(w^*) - \eps \norm{w^*}). \nn
    }
    We will prove later:

    \vgap

    \begin{lemma} \label{lmm:agp-opt:alg:gap:4}
        $E_{k+1} \le E_k$ for all $k \in [0, K'-1]$.
    \end{lemma}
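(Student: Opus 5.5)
We must show $E_{k+1}\le E_k$ for $k\in[0,K'-1]$, in the case $K''=K'$. Here $E_k = L\norm{w_k-w^*}^2 + k(f(w_k)-f(w^*)-\eps\norm{w^*})$. Write $\Delta_k = f(w_k)-f(w^*)-\eps\norm{w^*}$. Then
\[
E_{k+1}-E_k = L\bigl(\norm{w_{k+1}-w^*}^2-\norm{w_k-w^*}^2\bigr) + (k+1)\Delta_{k+1} - k\Delta_k .
\]
Lemma~\ref{lmm:apg-opt:alg:1} gives $f(w_{k+1})<f(w_k)$, hence $\Delta_{k+1}\le \Delta_k$ and $(k+1)\Delta_{k+1}-k\Delta_k \le \Delta_{k+1}$. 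It therefore suffices to prove the one-step inequality
\[
L\norm{w_{k+1}-w^*}^2 + f(w_{k+1}) - f(w^*) - \eps\norm{w^*} \le L\norm{w_k-w^*}^2 .
\]

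\textbf{Distance term.} Write $g=g_k$, $\nabla=\nabla f(w_k)$ and $e=g-\nabla$, so $\norm{e}\le\eps$. From $w_{k+1}=w_k-g/(2L)$ we get
\[
L\norm{w_{k+1}-w^*}^2 - L\norm{w_k-w^*}^2 = -\dotpair{g}{w_k-w^*} + \norm{g}^2/(4L).
\]
Split $\dotpair{g}{w_k-w^*} = \dotpair{\nabla}{w_k-w^*} + \dotpair{e}{w_k-w^*}$.
\begin{itemize}
\item Convexity gives $\dotpair{\nabla}{w_k-w^*}\ge f(w_k)-f(w^*)$.
\item Cauchy--Schwarz gives $\dotpair{e}{w_k-w^*}\ge -\eps\norm{w_k-w^*}$. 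Since $K''=K'$, Lemma~\ref{lmm:apt-opt:alg:3} yields $\norm{w_k-w^*}\le\norm{w^*}$ for every $k\le K'$, so this term is at least $-\eps\norm{w^*}$.
\end{itemize}
Together these give
\[
L\norm{w_{k+1}-w^*}^2 \le L\norm{w_k-w^*}^2 - (f(w_k)-f(w^*)) + \eps\norm{w^*} + \norm{g}^2/(4L).
\]

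\textbf{Function term.} By $L$-smoothness,
\[
f(w_{k+1}) \le f(w_k) - \frac{1}{2L}\dotpair{\nabla}{g} + \frac{1}{8L}\norm{g}^2 .
\]
Substituting $\nabla = g-e$ gives $\dotpair{\nabla}{g} = \norm{g}^2 - \dotpair{e}{g} \ge \norm{g}^2 - \eps\norm{g}$, hence
\[
f(w_{k+1}) \le f(w_k) - \frac{3}{8L}\norm{g}^2 + \frac{\eps}{2L}\norm{g}.
\]

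\textbf{Combining.} Adding the two bounds, the $f(w_k)$ terms cancel and the $\eps\norm{w^*}$ terms cancel against the one in the target inequality. What remains is
\[
\frac{\norm{g}}{L}\Bigl(\frac{\norm{g}}{4}-\frac{3\norm{g}}{8}+\frac{\eps}{2}\Bigr) = \frac{\norm{g}}{8L}\bigl(4\eps-\norm{g}\bigr).
\]
This is nonpositive because $k<K'$ means Line 4 did not fire, so $\norm{g_k}\ge 4\eps$. That establishes the one-step inequality and hence $E_{k+1}\le E_k$.

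The main obstacle is exactly this final balancing of constants: the step size $1/(2L)$ and the threshold $4\eps$ are chosen so the leftover quadratic in $\norm{g}$ is nonpositive. If my bookkeeping of the smoothness term were off, I would fall back on the cruder bound $\dotpair{\nabla}{g}\ge\norm{g}^2-\eps\norm{g}$ and recheck the coefficients. A secondary point is that $\norm{w_k-w^*}\le\norm{w^*}$ must be taken from Lemma~\ref{lmm:apt-opt:alg:3}; this is precisely why the case $K''=K'$ is needed.
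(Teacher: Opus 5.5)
Your proof is correct and follows the paper's route. Both arguments use the monotonicity of Lemma~\ref{lmm:apg-opt:alg:1} to reduce $E_{k+1}\le E_k$ to the one-step bound $L(\norm{w_{k+1}-w^*}^2-\norm{w_k-w^*}^2)\le f(w^*)-f(w_{k+1})+\eps\norm{w^*}$, which the paper gets by citing Lemma~\ref{lmm:apt-opt:alg:2} together with $\norm{w_k-w^*}\le\norm{w^*}$ from Lemma~\ref{lmm:apt-opt:alg:3}. The only difference is that you re-derive that bound inline, pairing the perturbation with $w_k-w^*$ instead of $w_{k+1}-w^*$, so the paper's triangle-inequality cost $\eps\norm{g_k}/(2L)$ shows up in your descent-lemma step; you reach exactly the same residual $-\frac{\norm{g_k}}{8L}(\norm{g_k}-4\eps)$.
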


    \vgap

    This yields
    \myeqn{
        E_{K'}
        \leq E_{K' - 1}
        \leq ...
        \leq E_0
        = L \cdot \norm{w_0 - w^*}^2 = L \cdot \norm{w^*}^2. \nn
    }

    Plugging the definition of $E_{K'}$ into the above and applying $\norm{w_k - w^*} \ge 0$ gives:
    \myeqn{
        K'(f(w_{K'}) - f(w^*) - \eps \norm{w^*})
        &\le&
        L \cdot \norm{w^*}^2 \nn \\
        \Rightarrow
        f(w_{K'}) - f(w^*)
        &\leq&
        \frac{L \norm{w^*}^2}{K'} + \eps \norm{w^*} \nn \\
        &=&
        \frac{L \norm{w^*}^2}{K} + \eps \norm{w^*} \nn
    }
    where the last step used $K = K'$ (as \ttt{AGP-opt} finishes at Line 7). This establishes the claim in Lemma~\ref{lmm:apg-opt:alg}.
}

It remains to prove Lemmas~\ref{lmm:apg-opt:alg:1}-\ref{lmm:agp-opt:alg:gap:4}, In the rest of the section, we introduce $\gamma = 1/(2L)$. Accordingly, Line 5 of \ttt{AGP-opt} can be written as $w_{k+1} \leftarrow w_k - \gamma g_k$.

\extraspacing {\bf Proof of Lemma~\ref{lmm:apg-opt:alg:1}.} As $f$ is $L$-smooth, \cite[Lemma 2.25]{gg23} gives
    \myeqn{
        && f(w_{k+1}) - f(w_k) \nn \\
        &\leq&
        \dotpair{\nabla f(w_k)}{w_{k+1} - w_k} + \frac{L}{2}\norm{w_{k+1} - w_k}^2 \nn \\
        &=&
        \dotpair{\nabla f(w_k)}{- \gamma g_k} +  \frac{L}{2}\norm{- \gamma g_k}^2 \nn \\
        &=&
        - \gamma \dotpair{\nabla f(w_k)}{g_k} +  \frac{\gamma^2 L}{2}\norm{g_k}^2 \nn \\
        &=&
        - \gamma \dotpair{g_k + (\nabla f(w_k) - g_k)}{g_k} +  (\gamma/4) \norm{g_k}^2 \nn \\
        &=&
        - \gamma \left( \norm{g_k}^2 + \dotpair{\nabla f(w_k) - g_k}{g_k} \right) +  (\gamma/4) \norm{g_k}^2 \nn \\
        &\le & - \gamma \left( \norm{g_k}^2 - \eps\norm{g_k} \right) +  (\gamma/4) \norm{g_k}^2 \quad \text{(here we applied $\norm{\nabla f(w_k) - g_k} \le \eps$)} \nn \\
        &=&
        (-3\gamma/4) \norm{g_k}^2 + \gamma \eps \norm{g_k} \nn \\
        &=&
        \gamma \norm{g_k} (\eps - 3 \norm{g_k} / 4). \label{eqn:app:lmm:apg-opt:alg:2}
    }
    By the definition of $K'$, we must have $\norm{g_k} \ge 4\eps$ and, thus, $\eps - 3 \norm{g_k} / 4 \le \eps - 3\eps < 0$. Hence, \eqref{eqn:app:lmm:apg-opt:alg:2} is negative, meaning that $f(w_{k+1}) < f(w_k)$.


\extraspacing {\bf Proof of Lemma~\ref{lmm:apt-opt:alg:3}.} Let us first prove a technical lemma:

\begin{lemma} \label{lmm:apt-opt:alg:2}
    For every $k \in [0, K'-1]$, it holds that
    \myeqn{
        \norm{w_{k+1} - w^*}^2 - \norm{w_k - w^*}^2
        &\leq&
        \fr{f(w^*) - f(w_{k+1}) + \eps\norm{w_k - w^*}}{L}. \nn
    }
\end{lemma}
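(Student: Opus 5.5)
The plan is to expand the squared distance after one step of Line 5 and bound each resulting term. One piece will come from the convexity of $f$, one from the oracle error, and one from the descent inequality already derived in the proof of Lemma~\ref{lmm:apg-opt:alg:1}. The termination rule $\norm{g_k} \ge 4\eps$, which holds for every $k \in [0, K'-1]$, is what makes the constants work.

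First, since $w_{k+1} = w_k - \gamma g_k$ with $\gamma = 1/(2L)$, I will expand
\[
\norm{w_{k+1} - w^*}^2 - \norm{w_k - w^*}^2 = -2\gamma \dotpair{g_k}{w_k - w^*} + \gamma^2 \norm{g_k}^2 .
\]
After multiplying by $L$, the claim becomes
\[
-\dotpair{g_k}{w_k - w^*} + \fr{1}{4L}\norm{g_k}^2 \le f(w^*) - f(w_{k+1}) + \eps \norm{w_k - w^*}.
\]

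Next, I will write $g_k = \nabla f(w_k) + e_k$ with $\norm{e_k} \le \eps$. By Cauchy--Schwarz, $-\dotpair{e_k}{w_k - w^*} \le \eps\norm{w_k - w^*}$, which absorbs the $\eps$ term. Convexity gives $f(w^*) \ge f(w_k) + \dotpair{\nabla f(w_k)}{w^* - w_k}$, so $-\dotpair{\nabla f(w_k)}{w_k - w^*} \le f(w^*) - f(w_k)$. It therefore remains to show
\[
f(w_{k+1}) - f(w_k) \le -\fr{1}{4L}\norm{g_k}^2 = -\fr{\gamma}{2}\norm{g_k}^2 .
\]

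For this last step I will reuse inequality \eqref{eqn:app:lmm:apg-opt:alg:2}, namely $f(w_{k+1}) - f(w_k) \le \gamma\norm{g_k}\,(\eps - 3\norm{g_k}/4)$. It suffices to have $\eps \norm{g_k} - \tfrac34\norm{g_k}^2 \le -\tfrac12 \norm{g_k}^2$, which is equivalent to $\eps \le \norm{g_k}/4$. This holds because $\norm{g_k} \ge 4\eps$ for all $k < K'$, by Line 4 of \ttt{AGP-opt}.

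The only delicate point is this final constant check, namely that the descent guaranteed by the smoothness bound beats the $\gamma^2\norm{g_k}^2$ term from the expansion. The threshold $4\eps$ in Line 4 leaves exactly enough slack for this; everything else is routine.
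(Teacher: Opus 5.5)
Your proof is correct and is essentially the paper's argument. Both combine convexity at $w_k$, the $L$-smoothness descent bound, the oracle error via Cauchy--Schwarz, and the threshold $\norm{g_k} \ge 4\eps$, and both bound $L$ times the left-hand side by $f(w^*) - f(w_{k+1}) + \eps\norm{w_k - w^*} - \fr{\gamma}{4}\norm{g_k}(\norm{g_k} - 4\eps)$. The only difference is bookkeeping: you expand the square around $w_k$ and reuse \eqref{eqn:app:lmm:apg-opt:alg:2}, whereas the paper expands around $w_{k+1}$ and uses the triangle inequality to split $\eps\norm{w_{k+1}-w^*}$ into exactly your two error terms, $\eps\norm{w_k - w^*}$ and $\gamma\eps\norm{g_k}$.
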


\begin{proof}
    We derive
    \myeqn{
        \hspace{-5mm}
        &&
        L \cdot \norm{w_{k+1} - w^*}^2 - L \cdot \norm{w_k - w^*}^2 \nn \\
        \hspace{-5mm}
        &=&
        L \cdot \norm{w_{k+1} - w^*}^2 - L \cdot \norm{w_k - w_{k+1} + w_{k+1} - w^*}^2 \nn \\
        \hspace{-5mm}
        &=&
        L \cdot \norm{w_{k+1} - w^*}^2 - L \cdot \norm{w_k - w_{k+1}}^2 - 2L \cdot \dotpair{w_k - w_{k+1}}{w_{k+1} - w^*} - L \cdot \norm{w_{k+1} - w^*}^2 \nn
        \\
        \hspace{-5mm}
        &=&
        - L \cdot \norm{\gamma g_k}^2 - 2L \cdot \dotpair{\gamma g_k}{w_{k+1} - w^*} \nn
        \\
        \hspace{-5mm}
        &=&
        - (\gamma/2) \cdot \norm{g_k}^2 - \dotpair{\nabla f(w_k)}{w_{k+1} - w^*} - \dotpair{g_k - \nabla f(w_k)}{w_{k+1} - w^*} \nn \\
        \hspace{-5mm}
        &\leq&
         - (\gamma/2) \cdot \norm{g_k}^2 - \dotpair{\nabla f(w_k)}{w_{k+1} - w^*} + \eps\norm{w_{k+1} - w^*} \quad\text{(here we applied $\norm{\nabla f(w_k) - g_k} \le \eps$)} \nn \\
         \hspace{-5mm}
         &\leq&
         - (\gamma/2) \cdot \norm{g_k}^2 - \dotpair{\nabla f(w_k)}{w_{k+1} - w^*} + \, \eps \cdot (\norm{w_{k + 1 } - w_k} + \norm{w_{k} - w^*}) \nn \\
         \hspace{-5mm}
        &=&
        - (\gamma/2) \cdot \norm{g_k}^2 - \dotpair{\nabla f(w_k)}{w_{k+1} - w^*} + \, \eps\gamma\norm{g_k} + \eps\norm{w_{k} - w^*}. \label{eqn:app:lmm:apg-opt:alg:3}
    }

    By the convexity of $f$, we know
    \myeqn{
        \dotpair{\nabla f(w_k)}{w^* - w_k} \leq f(w^*) - f(w_k).
        \label{eqn:app:lmm:apg-opt:alg:4}
    }
    As $f$ is $L$-smooth,
    \cite[Lemma 2.25]{gg23} gives
    \myeqn{
        -\dotpair{\nabla f(w_k)}{w_{k+1} - w_k}
        &\leq& \frac{L}{2}\norm{w_{k+1} - w_k}^2 +
        f(w_k) - f(w_{k+1}). \label{eqn:app:lmm:apg-opt:alg:5}
    }
    Adding both sides of \eqref{eqn:app:lmm:apg-opt:alg:4} and \eqref{eqn:app:lmm:apg-opt:alg:5} yields:
    \myeqn{
        -\dotpair{\nabla f(w_k)}{w_{k+1} - w^*}
        &\leq&
        \frac{L}{2}\norm{w_{k+1} - w_k}^2 +
        f(w^*) - f(w_{k+1}). \nn
    }
    By substituting the above into \eqref{eqn:app:lmm:apg-opt:alg:3}, we have:
    \myeqn{
        \hspace{-3mm}
        \eqref{eqn:app:lmm:apg-opt:alg:3}
        &\leq&
        - (\gamma/2) \cdot \norm{g_k}^2 + \frac{L}{2}\norm{w_{k+1} - w_k}^2 + f(w^*) - f(w_{k+1})
        + \, \eps\gamma\norm{g_k} + \eps\norm{w_{k} - w^*} \nn \\
        \hspace{-3mm}
        &=&
        f(w^*) - f(w_{k+1}) + \eps\norm{w_{k} - w^*} - (\gamma/2) \cdot \norm{g_k}^2
        + \, \frac{L\gamma^2}{2}\norm{g_k}^2 + \eps\gamma\norm{g_k}\nn \\
        \hspace{-3mm}
        &=&
        f(w^*) - f(w_{k+1}) + \eps\norm{w_{k} - w^*} - 
        \frac{\gamma}{4}\norm{g_k}(\norm{g_k} - 4\eps) \nn \\
        \hspace{-3mm}
        &\le&
        f(w^*) - f(w_{k+1}) + \eps\norm{w_{k} - w^*} \nn
    }
    where the last step used the fact that $\norm{g_k} \ge 4\eps$.
    The lemma now follows.
\end{proof}

We are now ready to prove Lemma~\ref{lmm:apt-opt:alg:3} with an inductive argument. First, applying Lemma~\ref{lmm:apt-opt:alg:2} with $k = 0$ gives
\myeqn{
    \norm{w_1 - w^*}^2 - \norm{w_0 - w^*}^2
    &\le&
    (1/L) \cdot
    (f(w^*) - f(w_1) + \eps \norm{w_0 - w^*}) \nn \\
    &=&
    (1/L) \cdot
    (f(w^*) - f(w_1) + \eps \norm{w^*}) \nn \\
    &\le&
    0 \nn
}
where the last step used the definition of $K''$ in \eqref{eqn:apg-opt:K-prime-prime} and the fact that $K'' > 0$. Hence, $\norm{w_1 - w^*} \le \norm{w_0 - w^*} = \norm{w^*}$.

\vgap

Inductively, for a $k \le K'' - 1$, assuming $\norm{w_k - w^*} \le \norm{w^*}$, next we will prove $\norm{w_{k+1} - w^*} \le \norm{w_k - w^*}$ (which implies $\norm{w_{k+1} - w^*} \le \norm{w^*}$). For this purpose, apply  Lemma~\ref{lmm:apt-opt:alg:2} to obtain
\myeqn{
    \norm{w_{k+1} - w^*}^2 - \norm{w_k - w^*}^2
    &\le&
    (1/L) \cdot
    (f(w^*) - f(w_{k+1}) + \eps \norm{w^k - w^*}) \nn \\
    &\le&
    (1/L) \cdot
    (f(w^*) - f(w_{k+1}) + \eps \norm{w^*}) \nn \\
    && \text{(by inductive assumption)} \nn \\
    &\le&
    0 \nn
}
where the last step used the definition of $K''$ and the fact that $k+1 \le K''$.

\extraspacing {\bf Proof of Lemma~\ref{lmm:agp-opt:alg:gap:4}.}
\myeqn{
    && E_{k+1} - E_k \nn \\
    &=&
    L \cdot \left( \norm{w_{k+1} - w^*}^2 - \norm{w_k - w^*}^2\right) + (k+1)f(w_{k+1}) - kf(w_k) - f(w^*) - \eps \norm{w^*} \nn \\
    &=&
    L \cdot \left( \norm{w_{k+1} - w^*}^2 - \norm{w_k - w^*}^2\right) + f(w_{k+1}) - f(w^*) + k(f(w_{k+1}) - f(w_k))  - \eps \norm{w^*} \nn \\
    &\le&
    f(w^*) - f(w_{k+1}) + \eps\norm{w^*} + f(w_{k+1}) - f(w^*) + k(f(w_{k+1}) - f(w_k))  - \eps \norm{w^*} \nn \\
    && \explain{by Lemma~\ref{lmm:apt-opt:alg:2}} \nn \\
    &=&
    k(f(w_{k+1}) - f(w_k)) \nn \\
    &\le&
    0. \hspace{5mm} \text{(by Lemma~\ref{lmm:apg-opt:alg:1})} \nn
}

\subsection{Proof of \eqref{eqn:agp-opt:alg:gap2}}


Once again, we will use $\gamma = 1/(2L)$ as a convenient notation.
Let us first prove a technical lemma:

\begin{lemma} \label{lmm:app:lmm:apg-opt:alg:5:1}
    For any $w \in \real^d$, it holds that
    \myeqn{
        \norm{w - \gamma \nabla f(w) - w^*}
        \le
        \norm{w - w^*}. \nn
    }
\end{lemma}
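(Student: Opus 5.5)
We will prove the claim by expanding the square and controlling the cross term with co-coercivity of the gradient. Since $f$ is convex and $L$-smooth, the gradient satisfies the standard co-coercivity inequality
\[
\dotpair{\nabla f(w) - \nabla f(w^*)}{w - w^*} \ge \fr{1}{L}\norm{\nabla f(w) - \nabla f(w^*)}^2 .
\]
This is a standard consequence of smoothness plus convexity, available in \cite{gg23}. Because $w^*$ minimizes $f$ (condition {\bf C2}) and $f$ is differentiable, we have $\nabla f(w^*) = \bm{0}$. Hence $\dotpair{\nabla f(w)}{w - w^*} \ge \fr{1}{L}\norm{\nabla f(w)}^2$.

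Next we expand:
\[
\norm{w - \gamma \nabla f(w) - w^*}^2 = \norm{w - w^*}^2 - 2\gamma \dotpair{\nabla f(w)}{w - w^*} + \gamma^2 \norm{\nabla f(w)}^2 .
\]
Substituting the co-coercivity bound gives
\[
\norm{w - \gamma \nabla f(w) - w^*}^2 \le \norm{w - w^*}^2 - \Big(\fr{2\gamma}{L} - \gamma^2\Big)\norm{\nabla f(w)}^2 .
\]
With $\gamma = 1/(2L)$, the coefficient $\fr{2\gamma}{L} - \gamma^2 = \fr{1}{L^2} - \fr{1}{4L^2} = \fr{3}{4L^2}$ is nonnegative. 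Therefore the squared distance does not increase, and taking square roots yields the lemma. The same argument works for any $\gamma \le 2/L$.

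The only point needing care is justifying co-coercivity. If it is not to be cited, I would derive it the usual way. Apply the smoothness upper bound (\cite[Lemma 2.25]{gg23}) to the convex function $\phi(y) = f(y) - \dotpair{\nabla f(x)}{y}$, which is minimized at $x$. This gives $f(y) \ge f(x) + \dotpair{\nabla f(x)}{y-x} + \fr{1}{2L}\norm{\nabla f(y) - \nabla f(x)}^2$. Symmetrizing in $x$ and $y$ and adding the two inequalities gives co-coercivity.

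This lemma is then used for \eqref{eqn:agp-opt:alg:gap2}. We write
\[
w_{k+1} - w^* = \big(w_k - \gamma \nabla f(w_k) - w^*\big) + \gamma\big(\nabla f(w_k) - g_k\big)
\]
and apply the triangle inequality, using $\norm{\nabla f(w_k) - g_k} \le \eps$. This gives $\norm{w_{k+1} - w^*} \le \norm{w_k - w^*} + \eps/(2L)$.
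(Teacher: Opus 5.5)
Your proof is correct, but it follows a different route from the paper's. The paper does not use co-coercivity or $\nabla f(w^*) = 0$. Instead it reruns the proof of Lemma~\ref{lmm:apt-opt:alg:2} with $\eps = 0$ and $g_k = \nabla f(w_k)$. Writing $w^+ = w - \gamma\nabla f(w)$, it combines the smoothness upper bound at $w$ with the convexity inequality $\dotpair{\nabla f(w)}{w^* - w} \le f(w^*) - f(w)$. This gives $L(\norm{w^+ - w^*}^2 - \norm{w - w^*}^2) \le f(w^*) - f(w^+) - \fr{\gamma}{4}\norm{\nabla f(w)}^2 \le 0$, using only that $w^*$ minimizes $f$ in value. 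The paper then remarks, somewhat informally, that nothing in that proof required $w$ to be an iterate of the algorithm.

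The two routes trade off as follows:
\begin{itemize}
\item The paper's route recycles an inequality it already needs, and it ties the decrease in squared distance to the function gap $f(w^+) - f(w^*)$. As written, however, it only works for $\gamma \le 1/L$: with a general step the bound is $\gamma^2(L\gamma - 1)\norm{\nabla f(w)}^2 + 2\gamma(f(w^*) - f(w^+))$.
\item Your expansion with co-coercivity is self-contained and avoids the ``close look'' step. It is the standard nonexpansiveness argument for gradient steps, and it covers the full range $\gamma \le 2/L$.
\end{itemize}

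Your sketch of co-coercivity is right. The one step you leave implicit is evaluating $\phi$ at $y - \fr{1}{L}\nabla\phi(y)$ and using that $x$ minimizes $\phi$. Your closing derivation of \eqref{eqn:agp-opt:alg:gap2} is identical to the paper's.
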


\begin{proof}
    Applying Lemma~\ref{lmm:apt-opt:alg:2} with $\eps = 0$ gives:
    \myeqn{
        \norm{w_{k+1} - w^*}^2 - \norm{w_k - w^*}^2
        &\leq&
        (f(w^*) - f(w_{k+1})) / L \nn \\
        &\leq&
        0. \nn
    }
    When $\eps = 0$, $w_{k+1} = w_k - \gamma \nabla f(w_k)$. Hence, the above yields
    \myeqn{
        \norm{w_k - \gamma \nabla f(w_k) - w^*} \le \norm{w_k - w^*}. \label{eqn:lmm:app:lmm:apg-opt:alg:5:1:proof:1}
    }

    A close look at the proof of Lemma~\ref{lmm:apt-opt:alg:2} reveals that, when $\eps = 0$, the proof holds for any $w_k \in \real^d$ and $w_{k+1} = w_k - \gamma \nabla f(w_k)$. As a result, \eqref{eqn:lmm:app:lmm:apg-opt:alg:5:1:proof:1} holds for any $w_k \in \real^d$. Setting $w_k$ to the vector $w$ in the statement of Lemma~\ref{lmm:app:lmm:apg-opt:alg:5:1} proves the claim.
\end{proof}

Recall that \ttt{AGP-opt} calculates $w_{k+1}$ as $w_k - \gamma g_k$, where $g_k = \ora(w_k)$. We use this fact to derive:
\myeqn{
    \norm{w_{k+1} - w^*}
    &=&
    \norm{w_k - \gamma g_t - w^*} \nn \\
    &=&
    \norm{w_k - \gamma (\nabla f(w_k) - \nabla f(w_k) + g_t) - w^*} \nn \\
    &=&
    \norm{w_k - \gamma \nabla f(w_k) - w^* - \gamma (g_t - \nabla f(w_k))} \nn \\
    &\le&
    \norm{w_k - \gamma \nabla f(w_k) - w^*} +
    \norm{\gamma (g_t - \nabla f(w_k))} \nn \\
    &\le&
    \norm{w_k - w^*} +
    \gamma \norm{g_t - \nabla f(w_k)} \quad \text{(by Lemma~\ref{lmm:app:lmm:apg-opt:alg:5:1})} \nn \\
    &\le&
    \norm{w_k - w^*} +
    \eps/(2L) \nn
}
where the last step used $\norm{\ora(w_k) - \nabla f(w_k)} \le \eps$.

\section{A Review on [Izzo {\em et al.}, 2022]}
\label{app:izy22}


\cite{izy22} studied the behavior of gradient descent (GD) when it runs under an $\eps$-AGP, as described in the pseudocode below:

\mytab{
    \> {\bf algorithm GD} $(K)$ \\
    \> 1. \> $w_0 \leftarrow \bm{0}$, $k \leftarrow 0$ \\ 
    \> 2.\> {\bf while} $k < K$ {\bf do}  \\
    \> 3. \>\> $g_k \leftarrow \ora(w_k)$ /* output of the $\eps$-AGP oracle */ \\
    \> 4. \>\> $w_{k+1} \leftarrow w_k - \fr{1}{2L} \, g_k$ \\
    \> 5. \>\> $k \leftarrow k + 1$ \\
    \> 6. \> {\bf return $w_k$}
}

For any $K \ge 1$, \cite[Lemma 5]{izy22} proved
\myeqn{
    \min_{k=1}^K \norm{\nabla f(w_k)}
    &=&
    O\Big(
    \sqrt{
        \fr{L \cdot U}{K} + B \eps
    }
    \Big)
    \label{eqn:izy22:1}
}
where
\myeqn{
    U &=& \max_{k=1}^K f(w_k) \nn \\
    B &=& \max_{k=1}^K \norm{\nabla f(w_k)} \nn
}

Henceforth, we will use $w^+$ to represent the vector among $w_1, ..., w_K$ that achieves the minimization on the left hand side of \eqref{eqn:izy22:1}; thus, $\nabla f(\norm{w^+})$ is bounded by the right hand side of \eqref{eqn:izy22:1}. The identification of $w^+$ requires calculating the norms of the {\em precise} gradients $\nabla f(w_1)$, $\nabla f(w_2)$, ..., $\nabla f(w_K)$. Such calculation is impossible under our perturbation model (where the only source of information about $f$ is the $\eps$-AGP oracle). Nevertheless, we proceed anyway by assuming that the vector $w^+$ can be found ``by magic''.

\vgap

\cite{izy22} did not analyze the sub-optimality gap of $w^+$. Nevertheless, we can do an analysis for them by resorting to our theory. For this purpose, let us augment their algorithm by requiring it to return the current $w_k$ as soon as $\norm{g_k}$ falls below $4\eps$ (as in Line 4 of \ttt{AGP-opt}). If the algorithm terminates this way, then our Lemma~\ref{lmm:apg-opt:alg} applies; otherwise, the sub-optimality gap of $w^+$ can be bounded as follows.

\vgap

First, the convexity of $f$ indicates
\myeqn{
    f(w^+) - f(w^*) &\le& \dotpair{\nabla f(w^+)}
                        {w^+ - w^*} \nn \\
    &\le& \norm{\nabla f(w^+)} \cdot \norm{w^+ - w^*}.
    \label{eqn:izy22:2}
}
Second, Lemma~\ref{lmm:apt-opt:alg:3} tells us $\norm{w^+ - w^*} \le \norm{w^*}$. Combining this with \eqref{eqn:izy22:1} and \eqref{eqn:izy22:2} yields
\myeqn{
    f(w^+) - f(w^*) &\le&
    \norm{w^*}
    \cdot
    O\Big(
    \sqrt{
        \fr{L \cdot U}{K} + B \eps
    }
    \Big)
    \nn
}
as is the bound used in \eqref{eqn:agp-opt:izy22}.

\section{Proofs for Section~\ref{sec:dl:q2}} \label{app:sec:dl:q2}

\noindent {\bf Proof of Lemma~\ref{lmm:dl:center-est}.} We will need:

\begin{lemma} [McDiarmid's Inequality \cite{mrt18}]
    \label{lmm:mcdiarmid}
    Let $f : \X_1 \times \X_2 \times ... \times \X_m \to \real$ be a function satisfying the following condition for each $i \in [m]$:
    \myeqn{
    \sup_{x_1, ..., x_m, y}
        |f(x_1, ..., x_i, ...,x_m) - f(x_1, ..., y, ..., x_m)|
    \le c_i \nn
    }
    where $c_1, ..., c_m$ are real values. If $X_1, ..., X_m$ are independent variables in $\X_1, ..., \X_m$, respectively, then for every $t > 0$ we have
    \myeqn{
        \Pr [
        \left| f(X_1,...,X_m)
            - \expt[f(X_1,...,X_m)] \right|
        \ge t ]
        &\le&
        2 \exp\!\left( - \frac{2 t^2}{\sum_{i=1}^m c_i^2} \right).
        \nn
    }
\end{lemma}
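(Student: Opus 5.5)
The plan is the standard martingale (Doob decomposition) proof combined with a Chernoff bound, applied to $Z = f(X_1, \dots, X_m)$. Integrability is not an issue. Fix a reference point $x^0$. Changing coordinates one at a time shows $|f(x) - f(x^0)| \le \sum_i c_i$ for all $x$, so $Z$ is bounded and every conditional expectation below is well defined. (Measurability of $f$ is implicitly assumed.)

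First, define the Doob martingale $Z_i = \expt[Z \mid X_1, \dots, X_i]$ for $i = 0, \dots, m$. Then $Z_0 = \expt[Z]$ and $Z_m = Z$. Write the increments as $V_i = Z_i - Z_{i-1}$, so that $Z - \expt[Z] = \sum_{i=1}^m V_i$ and $\expt[V_i \mid X_1, \dots, X_{i-1}] = 0$.

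The key structural step is to show that, conditioned on $X_1, \dots, X_{i-1}$, the increment $V_i$ lies in an interval $[A_i, B_i]$ with $B_i - A_i \le c_i$. Here $A_i$ and $B_i$ are functions of $X_1, \dots, X_{i-1}$ only. Set $g_i(x_1, \dots, x_{i-1}, y) = \expt[f(x_1, \dots, x_{i-1}, y, X_{i+1}, \dots, X_m)]$. Independence gives $Z_i = g_i(X_1, \dots, X_{i-1}, X_i)$, because the remaining variables keep their unconditional joint law. We then take $A_i = \inf_y g_i(X_{<i}, y) - Z_{i-1}$ and $B_i = \sup_y g_i(X_{<i}, y) - Z_{i-1}$. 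For any $y, y'$, the difference $g_i(X_{<i}, y) - g_i(X_{<i}, y')$ is an expectation, over the same law of $(X_{i+1}, \dots, X_m)$, of a quantity that the bounded-difference hypothesis bounds by $c_i$. Hence $B_i - A_i \le c_i$. I expect this to be the main point of care: the argument genuinely uses independence, since without it the conditional law of the later coordinates could depend on $y$ and the bound would fail.

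Next, apply Hoeffding's lemma conditionally. A mean-zero variable supported in an interval of length $c_i$ satisfies $\expt[e^{sV_i} \mid X_{<i}] \le e^{s^2 c_i^2/8}$ for every $s > 0$. Peeling off the increments one at a time by the tower property gives $\expt[e^{s(Z - \expt Z)}] \le \exp(s^2 \sum_i c_i^2 / 8)$. Markov's inequality then yields
\[
\Pr[Z - \expt Z \ge t] \le \exp\Big(-st + \tfrac{s^2}{8}\textstyle\sum_i c_i^2\Big).
\]
Choosing $s = 4t / \sum_i c_i^2$ turns the right-hand side into $\exp(-2t^2 / \sum_i c_i^2)$.

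Finally, $-f$ satisfies the same bounded-difference condition, so the same argument bounds the lower tail $\Pr[Z - \expt Z \le -t]$ by the same quantity. A union bound over the two tails gives the factor $2$ in the statement.
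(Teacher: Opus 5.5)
Your proof is correct. It is the standard Doob-martingale argument: bounded increments in an interval of length $c_i$, then conditional Hoeffding's lemma, the Chernoff bound and a union bound. In other words, it is Azuma's inequality specialized to McDiarmid, which is how the cited textbook (Mohri et al.) proves it; the paper itself only quotes the lemma and gives no proof. The one thing left unstated is the degenerate case $\sum_i c_i^2 = 0$, where your choice of $s$ is undefined but $f$ is constant, so the claim holds trivially.
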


We are now ready to prove Lemma~\ref{lmm:dl:center-est}. It will be convenient to move the coordinate system such that $\fr{1}{n} \sum_{i=1}^n p_i = \bm{0}$. Because every point of $P$ has norm at most $B$ originally, in the new coordinate system every point has  norm at most $2B$. The subsequent discussion assumes the new coordinate system.

\vgap

Given any $(x_1, x_2, ..., x_m) \in [n]^m$, define
\myeqn{
    f(x_1, ..., x_m)
    =
    \Big\lVert
    \fr{1}{n} \sum_{i=1}^n p_i
    -
    \fr{1}{m} \sum_{j=1}^m p_{x_j}
    \Big\rVert
    =
    \Big\lVert
    \fr{1}{m} \sum_{j=1}^m p_{x_j}
    \Big\rVert.
    \nn
}
For any $j' \in [m]$, we have
    \myeqn{
        && \sup_{y \in [n]}
        |f(x_1, ..., x_m) -
        f(x_1, ..., x_{j'-1}, y, x_{j'+1}, ... , x_m)| \nn \\
        &=&
        \sup_{y \in [n]}
        \Big|
        \Big\lVert\frac{1}{m}\sum_{j=1}^{m} p_{x_j}
        \Big\rVert
        -
        \Big\lVert\frac{1}{m} \Big(\sum_{j=1}^{m} p_{x_j}\Big)
        -
        p_{x_{j'}} + p_y
        \Big\rVert
        \Big|
        \nn \\
        &\leq&
        \frac{1}{m}
        \sup_{y \in [n]}
                \norm{p_{x_{j'}} - p_y}
        \nn \\
        &\leq&
        4B/m \nn
    }
where the last step used the fact that $\norm{p_i} \le 2B$ ($i \in [n]$) in the new coordinate system.

\vgap

For each $j \in [m]$, independently draw value $s_j$ from $[n]$ uniformly at random. By Lemma~\ref{lmm:mcdiarmid}, we have:
\myeqn{
        \Pr [
        \left| f(s_1,...,s_m)
            - \expt[f(s_1,...,s_m)] \right|
        \ge t/2 ]
        &\le&
        2 \exp\!\left( - \frac{2 (t/2)^2}{\sum_{i=1}^m (4B/m)^2} \right) \nn \\
        &=&
        2 \exp\!\left( - \frac{t^2 m}{32 B^2} \right).
        \label{eqn:dl:center-est:2}
    }

By Jensen's inequality
\myeqn{
    \expt[f(s_1, ..., s_m)]
    &\le&
    \fr{1}{m}
    \sqrt{
        \expt \Big[\Big\lVert
    \sum_{j=1}^m p_{s_j}
    \Big\rVert^2 \Big]
    }.
    \label{eqn:dl:center-est:1}
}
Clearly
\myeqn{
    \Big\lVert
    \sum_{j=1}^m p_{s_j}
    \Big\rVert^2
    &=&
    \sum_{j=1}^m \norm{p_{s_j}}^2
    +
    2 \sum_{1 \le j_1 < j_2 \le m}
    \dotpair{p_{s_{j_1}}}{p_{s_{j_2}}}. \nn
}
For any $1 \le j_1 < j_2 \le m$, the random vectors $p_{s_{j_1}}$ and $p_{s_{j_2}}$ are independent and their expectations are $\bm{0}$ --- recall that $\fr{1}{n} \sum_{i=1}^n p_i = \bm{0}$ in the new coordinate system. It follows that $\expt[\dotpair{p_{s_{j_1}}}{p_{s_{j_2}}}] = 0$. Thus:
\myeqn{
    \eqref{eqn:dl:center-est:1}
    =
    \fr{1}{m}
    \sqrt{
        \expt \Big[
            \sum_{j=1}^m
            \norm{p_{s_j}}^2
        \Big]
    }
    \le
    \fr{2B}{\sqrt{m}}
    \label{eqn:dl:center-est:3}
}
where the last inequality holds because $\norm{p_i} \le 2B$ for all $i \in [n]$.

\vgap

By setting $m = O((B/t)^2 \log (1/\delta))$, we can ensure at the same time that \eqref{eqn:dl:center-est:2} is at most $\delta$, and  \eqref{eqn:dl:center-est:3} is at most $t/2$. We can now conclude that $\Pr[f(s_1, ..., s_m) > t] \le \delta$, as claimed in Lemma~\ref{lmm:dl:center-est}.

\extraspacing {\bf Proof of Lemma~\ref{lmm:dl:ell_i-norm}.}
    Theorem~\ref{thm:apg-opt:agpopt} ensures $\norm{w_k} \le (17/8)R$.
    As $\ell_i$ is $L$-smooth, we have $\norm{\nabla \ell_i(w_k) - \nabla \ell_i(\bm{0})} \le L \cdot \norm{w_k} \le (17/8) LR$. The claim now follows from $\nabla \ell_i(\bm{0}) \le B_0$.

\section{Analysis of Simple Gradient Descent} \label{app:simple-alg}

This section re-examines the gradient descent (GD) algorithm whose pseudocode is presented in Section~\ref{app:izy22}. We will extend our analysis on \ttt{AGP-OPT} to show that, given the same $K$ in Theorem~\ref{thm:apg-opt:agpopt}, GD can guarantee a sub-optimality gap of $O(\eps R)$ --- note that this is not sensitive to $\norm{w^*}$.

\vgap

Define $K'$ to be the smallest value of $k \in [0, K-1]$ satisfying $g_k < 4\eps$; if no such $k$ exists, set $K' = K$. Note that this is in fact the same $K'$ as defined in \eqref{eqn:agp-opt:K'}, but rephrased in the context here. If $K' = K$ (i.e., $\norm{g_k} \geq 4 \eps$ for all $k \in [0, K-1]$), GD behaves the same way as \ttt{AGP-opt} and, hence, ensures a sub-optimality gap at most $5 \eps \norm{w^*}$.

\vgap

The rest of the section will focus on $K' < K$. As $w_{K'}$ is vector returned by \ttt{AGP-opt}, Theorem~\ref{thm:apg-opt:agpopt} tells us
\myeqn{
    f(w_{K'}) - f(w^*) &\leq& 5 \eps \norm{w^*}. \label{eqn:simple-alg:h4}
}
Define
\myeqn{
    S_\geq &=& \{ k \in [K', K-1] : \norm{g_k} \geq 4 \eps \} \nn \\
    S_< &=& \{ k \in [K', K-1] : \norm{g_k} < 4 \eps \}. \nn
}
Thus
\myeqn{
    f(w_K) - f(w_{K'})
    &=& \sum_{k = K'}^{K - 1} f(w_{k + 1}) - f(w_k) \nn \\
    &=& \sum_{k \in S_\geq} f(w_{k + 1}) - f(w_k) 
    + \sum_{k \in S_<} f(w_{k + 1}) - f(w_k). \label{eqn:simple-alg:h1}
}
The proof of Lemma~\ref{lmm:apg-opt:alg:1} essentially shows that, for any $k \in [0,K-1]$ satisfying $\norm{g_k} \geq 4 \eps$, we have $f(w_{k + 1}) \le f(w_k)$, implying
\myeqn{
    \sum_{k \in S_\geq} (f(w_{k + 1}) - f(w_k)) &\leq& 0. \label{eqn:simple-alg:h2}
}
For any $k \in S_<$, by the convexity of $f$, we have
\myeqn{
    f(w_{k + 1}) - f(w_k)
    &\leq& \dotpair{\nabla f(w_k)}{w_{k + 1} - w_k} \nn \\
    &=& -\frac{1}{2L} \dotpair{\nabla f(w_k)}{g_k} \nn \\
    &\leq& \frac{1}{2L} \norm{\nabla f(w_k)} \cdot\norm{g_k}. \nn
}
The quantity $\norm{\nabla f(w_k)}$ is at most $\norm{\nabla f(w_k) - g_k} + \norm{g_k} \leq \eps + 4\eps = 5\eps$. Thus, the above yields
\myeqn{
    f(w_{k + 1}) - f(w_k)
    \leq \frac{1}{2L} \cdot 5 \eps \cdot 4 \eps = \frac{10 \eps^2}{L}. \label{eqn:simple-alg:h3}
}
Plugging \eqref{eqn:simple-alg:h2} and \eqref{eqn:simple-alg:h3} into \eqref{eqn:simple-alg:h1} gives
\myeqn{
    f(w_K) - f(w_{K'}) &\leq& \sum_{k \in S_<} f(w_{k + 1}) - f(w_k) \nn \\
    &\leq& |S_<| \cdot \fr{10 \eps^2}{L} 
    \leq K \cdot \fr{10 \eps^2}{L} \nn \\
    &\leq& \frac{LR}{4\eps} \cdot \fr{10 \eps^2}{L} = \frac{5 \eps R}{2}. \nn
}
Combining the above with \eqref{eqn:simple-alg:h4} yields
\myeqn{
    f(w_K) - f(w^*) &\leq& 5 \eps \norm{w^*} + \frac{5 \eps R}{2} = O(\eps R). \nn
}

\section{Data Preprocessing for Experiments} \label{app:exp}

This section explains the steps taken to preprocess the three datasets used in our experiments after they were downloaded from the LIBSVM website.

\vgap

The ijcnn1 collection from LIBSVM includes 49,990 training pairs and 91,701 testing pairs. We combined these into a single dataset with a total cardinality of 141,691. Each pair $(x, y)$ has a 22-dimensional vector $x$ and a label $y \in \set{-1, 1}$. We mapped the labels to $\set{0, 1}$ by modifying every $y = -1$ to $y = 0$. Furthermore, we appended a constant coordinate of 1 to each $x$, resulting in a final dimensionality of 23. This is a standard procedure to incorporate a bias term into the dot product $\dotpair{w}{x}$ within the loss function.

\vgap

The covtype dataset (specifically, the covtype.binary.scaled version) includes 581,012 pairs. Each pair $(x, y)$ has a 54-dimensional vector $x$ and a label $y \in \set{1, 2}$. We used all the pairs, but mapped the labels to $\set{0, 1}$ by subtracting 1 from every $y$. We also appended a constant coordinate of 1 to each $x$, resulting in 55-dimensional vectors.

\vgap

The HIGGS dataset includes 11,000,000 pairs. Each pair $(x, y)$ has a 28-dimensional vector $x$ and a label $y \in \set{0, 1}$. We utilized the entire dataset but excluded dimensions ``9'', ``13'', ``17'', ``21'' because around half of the pairs have missing values on those dimensions. Finally, we also appended a constant coordinate of 1 to each $x$, bringing the total dimensionality to 25.

\end{document}